
\documentclass{article}

\usepackage{microtype}
\usepackage{graphicx}
\usepackage{subfigure}
\usepackage{booktabs} 
\usepackage{enumitem}
\usepackage[pagebackref=true]{hyperref}

\renewcommand*{\backref}[1]{}
\renewcommand*{\backrefalt}[4]{{%
    [\ifcase #1 Not cited.%
          \or p. #2%
          \else pp. #2%
    \fi%
    ]}}


\usepackage{shortcuts}

\usepackage[accepted]{icml2024}

\usepackage{amsmath}
\usepackage{amssymb}
\usepackage{mathtools}
\usepackage{amsthm}

\usepackage[capitalize,noabbrev]{cleveref}

\theoremstyle{plain}
\newtheorem{theorem}{Theorem}[section]
\newtheorem{proposition}[theorem]{Proposition}
\newtheorem{lemma}[theorem]{Lemma}
\newtheorem{corollary}[theorem]{Corollary}

\theoremstyle{definition}
\newtheorem{definition}{Definition}[section]

\newtheorem{assumption}{Assumption}[section]
\theoremstyle{remark}
\newtheorem{remark}{Remark}[section]

\crefname{assumption}{Assumption}{Assumptions}

\icmltitlerunning{Switching the Loss Reduces the Cost in Batch Reinforcement Learning}

\begin{document}

\twocolumn[
\icmltitle{Switching the Loss Reduces the Cost in Batch Reinforcement Learning}



\icmlsetsymbol{equal}{$\dagger$}

\begin{icmlauthorlist}
\icmlauthor{Alex Ayoub}{alberta}
\icmlauthor{Kaiwen Wang}{cornell}
\icmlauthor{Vincent Liu}{alberta}
\icmlauthor{Samuel Robertson}{alberta}
\icmlauthor{James McInerney}{netflix}
\icmlauthor{Dawen Liang}{netflix}
\icmlauthor{Nathan Kallus}{netflix,cornell}
\icmlauthor{Csaba Szepesvári}{alberta}
\end{icmlauthorlist}

\icmlaffiliation{alberta}{University of Alberta}
\icmlaffiliation{cornell}{Cornell University}
\icmlaffiliation{netflix}{Netflix, Inc.}

\icmlcorrespondingauthor{Alex Ayoub}{aayoub14k@outlook.com}

\icmlkeywords{Machine Learning, ICML}

\vskip 0.3in
]



\printAffiliationsAndNotice{}  

\begin{abstract}
    We propose training fitted Q-iteration with log-loss ({FQI}-\textsc{log}\xspace) for batch reinforcement learning (RL). We show that the number of samples needed to learn a near-optimal policy with {FQI}-\textsc{log}\xspace scales with the accumulated cost of the optimal policy, which is zero in problems where acting optimally achieves the goal and incurs no cost. In doing so, we provide a general framework for proving \textit{small-cost} bounds, i.e. bounds that scale with the optimal achievable cost, in batch RL. Moreover, we empirically verify that \fqilog uses fewer samples than FQI trained with squared loss on problems where the optimal policy reliably achieves the goal. 
\end{abstract}

\section{Introduction}
In batch reinforcement learning (RL), also known as offline RL, 
the goal is to learn a good policy from a fixed dataset.
A standard approach in this setting is fitted Q-iteration (FQI) \cite{JMLR:v6:ernst05a}, which
iteratively obtains a sequence of value functions by fitting the next value function to a target that is based
on the data and the previously obtained value function.
In this work we propose a simple and principled improvement to FQI, using \textit{log-loss} (\fqilog), 
which is applicable when the returns along trajectories lie in a bounded interval.
We prove that
the number of samples the new method requires to learn a near-optimal policy scales with the cost of the optimal policy, leading to a so-called \emph{small-cost} bound, the RL analogue of a small-loss bound in supervised learning. 
Such bounds predict improved sample efficiency in goal oriented RL tasks where the goal is reliably achievable and the cost is set up to penalize failure in achieving the goal; a prediction we validate empirically.
We highlight that \fqilog is the first \emph{computationally efficient} batch RL algorithm to achieve a small-cost bound, provided that a regression oracle is available; a condition that can be met, e.g.,  when 
logit models are used in \fqilog.

Most earlier works in batch RL  focused on algorithms that achieve the optimal worse-case dependence on the number of samples required to learn a near-optimal policy \cite{munos2003error,antos-nips-2007,pmlr-v97-chen19e,pmlr-v139-xie21d}. 
The only work prior to ours that is known to \emph{adaptively} improve sample efficiency when facing a task with near-zero optimal cost is due to
\citet{wang2023benefits}, who obtain small-cost bounds for finite-horizon batch RL problems
but using the so-called ``distributional RL'' approach. In this approach, one solves 
the regression problems arising when estimating the distribution of a policy's accumlated cost. 
The inspiration for our work comes from this work, combined with the insights of 
\citet{abeille2021instance,foster2021efficient} who showed that, in the simpler bandit problems, log-loss alone is sufficient for obtaining small-cost bounds.


Why log-loss gives a small-cost advantage is subtle.
When used with an unrestricted model class (think: finite state-action space, ``tabular learning''), both log- and squared losses achieve small cost bounds
because they share the same minimizers, which predict the empirical mean of responses for all inputs.
However, when the model class is restricted (the only practical case for large problems), log-loss and squared loss will trade off errors at the various inputs differently. 
Specifically, with log-loss, the penalty for predicting a value far from an observed mean
 increases rapidly as the observed mean gets close to the boundary of its range,
 an effect that is absent with squared loss.
Consequently, log-loss will favor predictors that fit well to those observed values that are near the boundary of the range,
making the learning process disregard large variance observed values, stabilizing learning, a property not shared when a squared loss is used.
As a result, as we show, under suitable additional technical assumptions, log-loss based FQI achieves small-cost bounds, a property that is not shared when squared loss is used with FQI.
%

The main contributions of this work can be summarized as: {\em (i)}
    We propose training FQI with log-loss and prove it enjoys a small-cost bound. This is the first efficient batch RL algorithm that achieves a small-cost bound.
When showing this result, we make two technical contributions that may be of independent interest:
{\em (ii)} We show that the Bellman optimality operator is a contraction with respect to the Hellinger distance, a result;
{\em (iii)}
We present a general result that decomposes the suboptimality gap of the value of a greedy policy induced by some value function, $f$, into the product of a small-cost term and the pointwise triangular deviation of $f$ from $q^\star$, the value function of the optimal policy. 

\section{Preliminaries}\label{sec:prelim}
In this section we review some definitions and concepts of Markov Decision Processes (MDPs) and define the notation used. Readers unfamiliar with the basic theory of MDPs are recommended to consult the book of \citet{bertsekas2019reinforcement}, or that of \citet{Szepesvari2010c}. All results mentioned in this section can be found in these works.

An infinite-horizon discounted Markov Decision Process (MDP) is given by a tuple \todos{we already introduced the acronym just above, is this desired?}
$M = (\mcS,\mcA,P,c,\gamma)$, where
$\mcS$ is the  state space, $\mcA$ is the  action space, 
$P$ is a transition function, $c:\cS\times \cA \rightarrow \mbR$ 
is a cost function and $\gamma\in [0,1)$ is a discount factor.
We only consider MDPs with finite action spaces.
Furthermore, for simplicity, we assume that the state space is finite.
Among other things,%
\footnote{We assume that the state space $\mcS$ is finite solely for exposition. This allows us to simplify the presentation of our analysis and focus on the most salient details of the proof, avoiding the cumbersome measure-theoretic notation required to reason about infinite sets.}
 this allows writing the transition function as $P: \mcS\times \mcA \rightarrow \mcM_1(\mcS)$,
where $\mcM_1(\cS)$ denotes the set of probability distributions over $\cS$. 
Since the set $\cS$ is finite, any element of $\cM_1(\cS)$ can and will be identified with its 
probability mass function. The notation $\cM_1(\mcX)$ will be used in the same way to denote the set of probability distributions over an arbitrary finite set $\mcX$ and we will perform the same identification.

A (general) policy $\pi = (\pi_h)_{h=1}^\infty$ is a sequence of functions $\pi_h : (\mcS \times \mcA)^{h-1} \times \mcS \rightarrow \mcM_1(\mcA)$. 
Fixing the start state $s$, a policy $\pi$ induces a distribution $\mathbb{P}_{\pi,s}$ 
over trajectories $S_1,A_1,C_1,S_2,A_2,C_2,\dots,$
where $S_1 = s$, $A_1 \sim \pi_1(S_1)$, $C_1 = c(S_1,A_1)$, $S_2 \sim P(S_1,A_1)$,
$A_2 \sim \pi_2(S_1,A_1,S_2)$: the policy is used to govern the selection of actions, while the transition dynamics of the MDP governs the evolution of the states in response to the chosen actions.
We will also need 
 \textit{stationary Markov} policies, where the choice of the action in any timestep $h$ only depends on the last state visited. Thus, such policies can be identified with a map $\pi : \mcS \rightarrow \mcM_1(\mcA)$, an identification which we will employ in what follows.

The expected total discounted cost over trajectories starting in $s$ quantifies the policy's performance when initialized in state $s$. We collect these expectations in the \textit{state-value function} of $\pi$, $v^\pi : \mcS\to\mbR$, which is defined by
 $v^\pi(s) = \mathbb{E}_{\pi,s}[\sum_{h=1}^\infty \gamma^{h-1}C_h]$, where
 $\mathbb{E}_{\pi,s}$  is the expectation operator corresponding to $\mathbb{P}_{\pi,s}$.
\emph{For convenience,  
throughout this paper we assume that \emph{costs are normalized} so that 
the sum of discounted costs along any trajectory satisfies }
\begin{align}\label{eq:costnorm}
0\le \sum_{h=1}^{\infty} \gamma^{h-1}C_h \leq 1\,.
\end{align}
\todoc{We may need to highlight this better.}By appropriately rescaling the costs, this constraint can always be satisfied (when the state space is not finite, one needs that the above infinite sums are uniformly bounded to be able to do this). \todoc{comment after the result on how rescaling and shifting impacts the bounds.}

The \textit{action-value function} of $\pi$, $q^\pi: \mcS \times \mcA \to \mbR$, is defined as
\begin{align*}
q^\pi(s,a)= c(s,a) + \gamma \sum_{s'\in\mcS}P(s'|s,a)v^\pi(s')\,, 
\end{align*}
where $(s,a)\in \cS \times \cA$ and, by abusing notation, we use $P(s'|s,a)$ to denote the probability of landing in state $s'$ when action $a$ is taken in state $s$.
For a stationary Markov policy $\pi$, the state- and action-value functions are related by the identity 
  $v^\pi(s) = \sum_{a \in \mcA} \pi(a|s) q^\pi(s,a)$. Here, and in what follows, abusing notation once again, $\pi(a|s)$ denotes the probability that is assigned by $\pi(s)$ to action $a\in \cA$.


\newcommand{\mcZZ}{\mcS \times \mcA}
The \emph{optimal policy} is defined as any policy $\pi^\star$ that satisfies $v^{\pi^\star}(s) = \min_\pi v^\pi(s)$ simultaneously for all $s \in \mcS$. Such a policy exists in our case.
We define the \emph{optimal state-value function} as $v^\star = v^{\pi^\star}$ and the \emph{optimal action-value function} as $q^\star = q^{\pi^\star}$.
Any policy that is \emph{greedy with respect to $q^\star$}, i.e., at state $s$ selects only actions $a$ that minimize $q^\star(s,\cdot)$, is guaranteed to be optimal. 
Furthermore,
the optimal action-value function $q^\star$ satisfies the \emph{Bellman optimality equation} $q^\star = \mcT q^\star$, where \hbox{$\mcT : \mbR^{\mcZZ} \rightarrow \mbR^{\mcZZ}$} is the 
\emph{Bellman optimality operator}, defined via
\begin{align}\label{eqn:bellman-operator}
    (\mcT f)(s,a) &= c(s,a) + \gamma \sum_{s' \in S} P(s'|s,a) \min_{a'\in \mcA} f(s',a'),
\end{align}
for $f:\mcZZ \to \mbR$ and $(s,a)\in \mcZZ$.


We will find it helpful to use a shorthand for the function $s \mapsto \min_{a\in \mcA} f(s,a)$ appearing above. For $f: \mcZZ \to \mbR$, define $f^\wedge: \mcS \to \mbR$ by \todoc{I expanded this as this notation is somewhat nonstandard. Hence, it is worth giving it space}
\begin{align*}
f^\wedge(s) = \min_{a\in \mcA} f(s,a) \,, \qquad s\in \mcS\,.
\end{align*}
With this notation, we have that for $(s,a) \in \mcZZ$
\begin{align*}
    (\mcT f)(s,a) &= c(s,a) + \gamma \sum_{s' \in S} P(s'|s,a) f^\wedge(s')\,.
\end{align*}
Finally, we use $\pi_f$ to denote a greedy policy induced by $f$: $\pi_f(s) = \argmin_{a \in \mcA} f(s,a)$. When there are multiple such policies,
we choose one in an arbitrary (systematic) manner to make $\pi_f$ well-defined. \todos{Subscripting $\pi$ does conflict with the nonstationary policies being indexed by timesteps. Cs: Different types; not a problem.}
\todos{The old definition was $\pi_f(s) = \argmin_{a \in \mcA} f(s,a)$, I changed it to make it more precise but maybe it is less clear now. Cs: I don't see the difference.}

\section{Problem Definition: Batch RL} 
In this work, we consider batch reinforcement learning problems, where a learner is given 
a sequence of data points $D_n = \{(S_i,A_i,C_i,S_i')\}_{i=1}^n$ such that $S_i,S_i'\in \cS$, $A_i\in \cA$. and $C_i\in \R$. 
Importantly, the learner has no access to $P$ or $c$.
The learner's goal is to find a policy $\pi$ such that executing the policy $\pi$ from an initial state $S_1$ drawn randomly from some distribution $\eta_1\in \cM_1(\cS)$ results in an expected total discounted cost exceeding the smallest possible such value with as little as possible.
Formally, the learner is evaluated by comparing the value $\bar v^{\pi} = \ip{\eta_1}{v^\pi} = \sum_{s \in \mcS} \eta_1(s)v^\pi(s)$, where $\ip{\cdot}{\cdot}$ denotes the standard inner product, of the policy $\pi$ that it returns with the smallest possible such value
$\bar{v}^\star$, which, thanks to our earlier definitions is easily seen to satisfy 
$\bar v^\star = \ip{\eta_1}{v^\star}$. 
Our algorithm does not need to know $\eta_1$.

The data is assumed to satisfy the following assumption:
\begin{assumption}[Data/MDP properties]\label{asp:data}
    We have that $D_n=\{(S_i,A_i,C_i,S_{i}')\}_{i=1}^n$ is a sequence of independent, identically distributed  random variables such that $(S_i,A_i) \sim \mu$, $C_i = c(S_i,A_i)$ and $S_{i}' \sim P(S_i,A_i)$.
Furthermore, \cref{eq:costnorm} holds and $c$ is a nonnegative function.
\end{assumption}
In this assumption the constraints on $C_i$ and $S_i'$ will help the learner to discover some information about the costs and the transition structure of the MDP. The independence assumption is made to simplify the analysis. \todoc{Do we know what happens if we remove this..?}
The distribution $\mu$ will be further restricted to be ``sufficiently exploratory''. To state this assumption, the following definition will be useful:
\begin{definition}[Admissible distribution]\label{defn:admissible-distributions}
    We say a distribution $\nu \in \mcM_1(\mcZZ)$ is admissible in MDP $M$
    if there exists $h \geq 1$ and a nonstationary policy $\pi$ 
    such that $\nu(s,a) = \mbP_{\pi,\eta_1}(S_h=s,A_h=a)$.
\end{definition}
Note that what constitutes an admissible distribution depends on $\eta_1$.
With this, the assumption that constrains $\mu$ is as follows:
\begin{assumption}[Finite concentrability coefficient]\label{asp:concentrability}
    There exists $C < \infty$ such that 
    for all admissible distributions $\nu$ of $M$,
	it holds that
    \begin{equation}
        \max_{(s,a)\in \cS \times \cA} \frac{\nu(s,a)}{\mu(s,a)} \leq C\,.
    \end{equation}
\end{assumption}
Note that if $\mu$ is positive over $\cS \times \cA$, the assumption is satisfied.
By only considering admissible distributions, we allow $\mu$ to be ``concentrated'' on states that are ``relevant'' in the sense that they are visited by some policy in some time step with a large probability when starting from $\eta_1$.

In addition to having access to the data, we will also assume that the learner is given access to a set of functions, $\cF$ that map state-action pairs to reals: $\cF \subset \R^{\cS \times \cA}$. 
Ideally, the set $\cF$ allows the learner to reason about the optimal action-value function. To make this possible, the following assumptions are made on $\cF$:
\begin{assumption}[Realizability]\label{asp:realizability}
    $q^\star \in \mcF$.
\end{assumption}
\begin{assumption}[Completeness]\label{asp:completeness}
    For all $f \in \mcF$ we have that $\mcT f \in \mcF$.
\end{assumption}

\cref{asp:data,asp:concentrability,asp:realizability,asp:completeness} are commonly made, in various forms, when analyzing fitted Q-iteration \cite{farahmand2011regularization,Pires2012StatisticalLE,pmlr-v97-chen19e}. \cref{asp:concentrability} ensures that all admissible distributions are covered by the exploratory distribution $\mu$, i.e., that ``$\mu$ is sufficiently exploratory''. 
\cref{asp:realizability} (``realizability'')  guarantees that the optimal action-value function, our ultimate target, lies in our function class. \cref{asp:completeness}  states that the function class $\mcF$ is closed under the Bellman optimality operator $\mcT$. 
When $\cF$ is closed, completeness is easily seen to imply realizibility (see also 
footnote 10  of \citet{pmlr-v97-chen19e}). Note that \cref{asp:completeness} is necessary, this is due to a result by \citet{foster2021offline} which states that assuming both a finite concentrability coefficient and a realizable function class are not sufficient for sample efficient batch value function approximation. For a more detailed discussion of the last three assumptions, we refer the reader to Sections~4 and 5 of \citet{pmlr-v97-chen19e}.

\paragraph{Research question}
As is well known, under the above assumptions, 
and assuming that a regression oracle is available to find the empirical minimizer of regression problems
defined over $\cF$ with the squared loss,
the so-called fitted Q-iteration (FQI) algorithm
 \cite{JMLR:v6:ernst05a, riedmiller2005neural, antos-nips-2007} 
 produces a policy such that with high probability 
 $\bar v^\pi \le \bar v^\star + \tilde O(\sqrt{ C N/n } )$, 
 where $N$ is a measure that characterizes the ``richness'' of $\cF$ and
 $\tilde O$ hides logarithmic factors \cite{antos-nips-2007}.
 The main question investigated in this paper is whether this result can be improved to 
  $\bar v^\pi \le \bar v^\star + \tilde O(\sqrt{ C N \bar v^\star/n } ) + O(1/n)$. 
  The significance of such \emph{small cost} results is that the same data can produce a 
  significantly better policy when $\bar v^\star$ is near zero (note that $\bar v^\star\ge 0$).
  Alternatively, the number of samples required to achieve a given level of suboptimality can be
  significantly smaller if an algorithm satisfies a small-cost bound.
   

%
%
%

\paragraph{Additional notation}\todoc{do we need all/any of this? Consider introducing this either earlier, or just introduce notation whenever it is needed.}
For $n \in \mathbb{N}$, let $[n]$ denote the set $\{1,2,\dots,n\}$. 
Let $\mathbb{P}_{\pi,\eta_1}$ denote the distribution 
induced over random trajectories by following policy $\pi$ after an initial state is sampled from $\eta_1$. For $h\in\mathbb{N}$, we let $\eta_h^\pi(s)$ be the probability that state $s$ is observed at timestep $h$ under $\mbP_{\pi,\eta_1}$, such that
$\eta_h^\pi(s)  = \mbP_{\pi,\eta_1}(S_h=s)$.
We also define $\eta_h^\star(s) = \eta_h^{\pi^\star}(s)$.
For $g: \mcX \rightarrow \mbR, \, \nu \in \mcM_1(\mcX)$, and $p\geq 1$,
we define the semi-norm $\Vert \cdot \rVert_{p,\nu}$ via
$\norm{g}_{p,\nu}^p = \int |g|^p d\nu$. 
We adopt standard big-oh notation and write $f = \tilde{\mathcal{O}}(g)$ to denotes that $f$ dominates $g$ up to polylog factors, i.e., $f = \mathcal{O}(g\max\{1,\text{polylog}(g)\})$. 
Finally, we use $f \wedge g$ to denote $\min\{f,g\}$.

\section{\fqilog: Fitted Q-Iteration with log-loss}\label{sec:algo}

The proposed algorithm, \fqilog,
which is described in \cref{alg:fqi},
 is based on the earlier mentioned fitted Q-iteration algorithm (FQI) \cite{JMLR:v6:ernst05a, riedmiller2005neural, antos-nips-2007}.
Given a batch dataset, FQI iteratively produces a sequence of $k$ approximations $f_1, \ldots, f_k$ to the action-value function $q^\star$.
 At iteration $j\in [k]$, the algorithm computes $f_j$ by minimizing the empirical loss using targets computed with the help of $f_{j-1}$, the estimate produced in the previous iteration.
The targets are constructed such that the regression function for a fixed estimate $f \in \mcF$ is $\mcT f$.
The main difference between the proposed method and 
the most common variant of FQI is our use of log-loss, 
\begin{equation}\label{eqn:log-loss}
  \elllog(y,\tilde y) =  \tilde y\log\frac{1}{y} +
  (1 - \tilde y)\log\frac{1}{1-y} \,,
\end{equation}
to measure the deviation between a prediction ($y\in [0,1]$) and a target $(\tilde y\in [0,1]$),
where to allow $y,\tilde y\in \{0,1\}$, we use $0\cdot \log 0 = 0$.
The restriction that $\tilde y\in [0,1]$  means that $\elllog(\cdot,\tilde y)$ is convex.
In our algorithm the first argument of $\elllog$ is a predicted value $f(S_i,A_i)$ with $f\in \cF$. 
Since this needs to also belong to $[0,1]$,
running \fqilog requires the range of all functions in $\cF$ to lie in $[0,1]$. 
\todoc{it is unclear to me whether we need a restriction on $\tilde y$; per the next paragraph we need that whatever random response goes in place of $\tilde y$ has an expected value in $[0,1]$. But do we need it is in $[0,1]$? Maybe convexity is used somewhere in the analysis?}

Note that previous work on FQI employed the squared loss $\ellsq : \mbR \times \mbR \to \mbR$, defined as $\ellsq(y,\tilde y) = (y-\tilde y)^2$, instead of the log-loss $\ellog$.
Both squared loss and log-loss have the property that for a random variable $X$ taking values in $[0,1]$ with probability one, $\EE[X] = \argmin_{m\in \R} \EE[\ell(m,X)]$, where $\ell$ is either $\ellog$ or $\ellsq$. In particular, if $\cF = [0,1]^{\cS \times \cA}$, both log-loss and squared loss will give rise to the same sequence $f_1,\dots,f_k$.

The differences between log-loss and squared loss are made apparent when $\cF\ne [0,1]^{\cS \times \cA}$.
In this case, when $\tilde y$ is near $0$ or $1$, $\elllog(\cdot,\tilde y)$ increases rapidly as $y$ deviates from $\tilde y$.
As such, when errors need to be traded off at different inputs, log-loss will end up 
favoring predictors that predict values closer to observed targets when the targets are near $0$ (or $1$),
and will put less weight on observed targets in the middle of the $[0,1]$ range.
When a true value lies near $0$ (or $1$), the observed value (bound to the range $[0,1]$)
must be closer to the true value, which means that the observed value is also close to $0$ (or $1$).
While it may happen that an observed value is close to $0$ (or $1$) while its mean is far from it,
this is rare: For this to happen, the observed value has to have a large variance.
As such, favoring  to predict observed values near the $0$ or $1$ as opposed to paying equal attention to all datapoints (which is what the squared loss based predictors do) is beneficial, and, in particular it pays off when some of the true targets are near $0$ (or $1$): The situation that arises when the optimal cost is near zero.

The motivation to switch to log-loss is due to
\citet{foster2021efficient} who studied the problem of learning a near-optimal policy in contextual bandits,
both in the batch and the online settings.
They noticed that switching to log-loss from squared loss allows bounding the suboptimality of the policy found, say in the batch setting after seeing $n$ contexts, via a term that scales with  $\sqrt{\bar v^\star/n} + 1/n$. 
This is an improvement from the usual $\sqrt{1/n}$ bound derived when analyzing squared loss, which is worst-case in nature.
For log-loss, a significant speedup to $1/n$-type convergence is achieved when $\bar v^\star$, the expected cost of using the optimal policy, is small (cf. Section 3.1 of their paper). They complemented the theory with convincing empirical demonstrations.
Our results take a similar form. While we reuse some of their results and techniques, our analysis deviates significantly from theirs. In particular, our analysis must be adapted to handle the multistage structure present in RL and to avoid an unnecessary dependence on the actions.


The astute reader may wonder whether switching to log-loss is really necessary for achieving small-cost bounds. As it turns out, the switch is necessary, as attested to by an example constructed by  \citet{foster2021efficient}. In this example, in contrast to log-loss, squared loss is shown to be unable to take advantage of small optimal costs (cf. Theorem 2 of \citet{foster2021efficient}). 

\if0 
a novel variant of the fitted Q-iteration algorithm (FQI) \cite{JMLR:v6:ernst05a, riedmiller2005neural, antos-nips-2007}
is given in \cref{alg:fqi}.

replaces the costumery squared loss that is standard in FQI with the so-called log-loss.
The motivation for this comes

The standard fitted Q-iteration algorithm is given in \cref{alg:fqi}. \fqilog repeatedly chooses the current value function $f_\kappa$ as the minimizer of the logarithmic loss evaluated on the batch dataset
\[
f_\kappa = \argmin_{f \in \mcF} \sum_{i=1}^n \elllog\left(f(s_i,a_i), c_i + \gamma \min_{a \in \mcA} f_{\kappa-1}(s_{i}',a)\right)
\]
where
\[
\elllog(y,\tilde y) =  \tilde y\log\frac{1}{y} + (1- \tilde y)\log\frac{1}{1-y}\,.
\]
After solving this regression problem $k$ times, \fqilog returns the greedy policy induced by the function $f_k$, denoted by $\pi_{f_k}$. The literature on FQI (and its many variations) \cite{riedmiller2005neural,antos-nips-2007,massoud2009regularized,lizotte2012linear, mnih2015humanlevel, pmlr-v97-chen19e,agarwal2020optimistic} has largely analyzed \fqisq, i.e. \cref{alg:fqi} with:
$$\ell(y,\tilde y) = \ellsq(y,\tilde y) = (y-\tilde y)^2\,.$$
Note that, like squared loss, logarithmic loss is convex in its arguments. However, unlike squared loss, logarithmic loss cannot handle unbounded targets.


%
Recent findings from the bandit literature, such as \citet{abeille2021instance,foster2021efficient}, have recently established that, when the outcomes of a bandit are bounded, such Gaussian assumptions lead to suboptimal algorithms. The remedy proposed by these works is to minimize the logarithmic loss on the batch dataset, which is equivalent to finding the MLE under an implicitly assumed Bernoulli model.\todos{maybe should be more specific, like ``Bernoulli model of the noise''} Since the values in RL are assumed to be bounded, without loss of generality assume that $\varepsilon(s,a) \in [-1,1]$ for all state-action pairs, since bounded values can be normalized. It follows that
\[
\text{Var}(\varepsilon(s,a)) = \mathbb{E}[|\varepsilon(s,a)| \cdot |\varepsilon(s,a)|] \leq \underbrace{\mathbb{E}[\varepsilon(s,a)]}_{p(s,a)}
\]
almost surely.\todos{variance of epsilon} For a Bernoulli random variable $X$ with mean $p$ we have that $\text{Var}(X) = p(1-p) \leq p$.

The bound derived on $\mathbb{E}[\varepsilon^2(s,a)]$ matches the bound derived on the variance of a Bernoulli. This reveals that finding the MLE under a Bernoulli model (i.e. minimizing \todoa{binary} logarithmic loss), where the noise is controlled by the mean and is allowed to vary across state-action pairs, is a more natural approach for modeling bounded random variables. This intuition motivates optimizing the logarithmic loss instead of the squared loss in the presence of bounded heteroscedastic noise. For a more rigorous treatment on the failure of squared loss to adapt to bounded heteroscedastic noise, we refer the reader to Section 5 of \citet{pmlr-v75-kirschner18a} or Theorem 2 of \citet{foster2021efficient}.

\fi 

\begin{algorithm}[tb]
   \caption{\fqilog} 
   \label{alg:fqi}
\begin{algorithmic}
   \STATE {\bfseries Input:} A dataset $D_n=\{(S_i,A_i,C_i,S_{i}')\}_{i\in[n]}$, a function class $\mcF \subseteq [0,1]^{\mcS\times\mcA}$ and $k \in \mathbb{N}$.
   \STATE Pick $f_0$ arbitrarily from $\cF$
   \FOR{$j=1$ {\bfseries to} $k$}
   \STATE $f_j \leftarrow \argmin\limits_{f \in \mcF}\sum_{i=1}^n \elllog\left(f(S_i,A_i), C_i + \gamma f_{j-1}^\wedge(S_i')\right)$
   \ENDFOR
   \STATE {\bfseries Return:} $\pi_{f_k}$.
\end{algorithmic}
\end{algorithm}


\section{Theoretical Results}\label{sec:theory-results}
In this section, we present our main theoretical contribution, the first \textit{small-cost} bound for an efficient algorithm in batch RL.
\begin{theorem}\label{thm:main-paper}
  \todos{will we use $D$ or $D_n$? Both are used in different places and I have no strong preference. CS; $D_n$.}
  Given a dataset $D_n = \{(S_i,A_i,C_i,S_{i}')\}_{i=1}^n$ with $n\in\NN$ and a finite function class $\mcF \subseteq [0,1]^{\mcS \times \mcA} $ that satisfy \cref{asp:data,asp:concentrability,asp:realizability,asp:completeness},
   it holds with probability $1-\delta$ that 
   the suboptimality
   gap $g = \bar{v}^{\pi_{k}} - \bar{v}^\star$ of 
   the output policy of \fqilog after $k$ iterations, $\pi_{k} = \pi_{f_k}$, satisfies 
    \begin{align*}
g        &\leq \tilde{\mathcal{O}}\left( \frac{1}{(1-\gamma)^2}\left(\sqrt{\frac{\bar{v}^\star C N}{n}} + \frac{C N}{(1-\gamma)^2 n} +\gamma^k\right)\right)\,,
    \end{align*}
    where $N = \log(|\mcF|/\delta)$ and $C$ is defined in \cref{asp:concentrability}.
    \todoc{missing function set complexity terms}
  \end{theorem}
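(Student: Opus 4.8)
The plan is to mirror the three technical contributions announced above: a per-iteration statistical guarantee, an error-propagation step that passes these guarantees through the $k$ iterations, and a final conversion into a suboptimality bound that exposes the small-cost factor. The essential departure from the classical squared-loss analysis is that the log-loss estimator controls a \emph{variance-adaptive} divergence rather than an $L_2$ error: viewing each prediction $f(S_i,A_i)\in[0,1]$ as the mean of a Bernoulli, the update $f_j=\argmin_{f\in\mcF}\sum_i \elllog(f(S_i,A_i),\,C_i+\gamma f_{j-1}^\wedge(S_i'))$ is a maximum-likelihood fit whose Bayes-optimal predictor is $\mcT f_{j-1}$, which lies in $\mcF$ by \cref{asp:completeness}. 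Standard MLE concentration for log-loss (following \citet{foster2021efficient}) then bounds the Hellinger distance between $f_j$ and $\mcT f_{j-1}$ under $\mu$, giving with probability $1-\delta$, simultaneously over all $j\in[k]$, a bound of the shape $\norm{\mathrm{H}(f_j,\mcT f_{j-1})}_{2,\mu}^2 \lesssim N/n$ with $N=\log(|\mcF|/\delta)$ (a union bound over $\mcF$ and the $k$ iterations costs only logarithmically).

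First I would propagate the errors. Using contribution \emph{(ii)}---that $\mcT$ is a $\gamma$-contraction in the (pointwise) Hellinger distance---together with the triangle inequality and $q^\star=\mcT q^\star$, I would write $\mathrm{H}(f_k,q^\star)\le \mathrm{H}(f_k,\mcT f_{k-1})+\gamma\,\mathrm{H}(f_{k-1},q^\star)$ and unroll the recursion to obtain $\mathrm{H}(f_k,q^\star)\lesssim \sum_{j=1}^{k}\gamma^{k-j}\varepsilon_j+\gamma^k$, where $\varepsilon_j$ is the $j$-th per-iteration error and the trailing $\gamma^k$ comes from $\mathrm{H}(f_0,q^\star)\le 1$ (all functions map into $[0,1]$); this is the source of the $\gamma^k$ term in the statement. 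The geometric sum contributes a factor $(1-\gamma)^{-1}$, and squaring to pass from Hellinger to the triangular discrimination $D_\triangle$ (which satisfies $D_\triangle\asymp \mathrm{H}^2$ for Bernoulli means) yields a pointwise control of the form $\norm{D_\triangle(f_k,q^\star)}_{1,\nu}\lesssim \frac{1}{(1-\gamma)^2}\frac{CN}{n}$ for every admissible $\nu$; the concentrability coefficient $C$ of \cref{asp:concentrability} enters exactly at the change of measure from $\mu$ to $\nu$.

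Finally I would convert this into the suboptimality gap by contribution \emph{(iii)}, which decomposes $g=\bar v^{\pi_{f_k}}-\bar v^\star$ into a pointwise deviation of $f_k$ from $q^\star$ averaged against the (discounted) occupancies of $\pi^\star$ and $\pi_{f_k}$, summing over the effective horizon to produce one factor $(1-\gamma)^{-1}$. The small-cost factor is extracted by Cauchy--Schwarz: since $|p-q|\le\sqrt{D_\triangle(p,q)}\,\sqrt{p+q}$, the relevant $L_1$ deviation obeys $\EE_\nu|f_k-q^\star|\le \sqrt{\EE_\nu D_\triangle(f_k,q^\star)}\,\sqrt{\EE_\nu[f_k+q^\star]}$, and the factor $\EE_\nu[q^\star]$ taken along $\pi^\star$'s occupancy reduces to an average of $v^\star$ over $\pi^\star$'s visited states, which is of order $\bar v^\star$. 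Bounding $\EE_\nu[f_k]\le \EE_\nu[q^\star]+\EE_\nu|f_k-q^\star|$ turns the inequality into the self-bounding form $x\lesssim\sqrt{\alpha(\bar v^\star+x)}$ with $\alpha\lesssim \frac{1}{(1-\gamma)^2}\frac{CN}{n}$; solving the resulting quadratic gives $x\lesssim\sqrt{\alpha\,\bar v^\star}+\alpha$, which produces the $\sqrt{\bar v^\star CN/n}$ leading term together with the additive $CN/((1-\gamma)^2 n)$ term, and the outer $(1-\gamma)^{-2}$ collects the horizon factors.

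The hard part will be ensuring that the variance-adaptivity survives both the error propagation and every change of measure. In particular, the Hellinger contraction of \emph{(ii)} must be established and applied pointwise so that it composes with the Cauchy--Schwarz step without prematurely collapsing $D_\triangle$ into an $L_2$ error (which would erase the $\sqrt{\bar v^\star}$ gain); and the occupancy reweighting of \emph{(iii)} must be arranged so that it is precisely $\pi^\star$'s visitation---where $\EE[q^\star]$ is governed by $\bar v^\star$---that multiplies the discrimination, rather than an uncontrolled measure. Carefully tracking the $(1-\gamma)^{-1}$ factors from the contraction and from the performance-difference decomposition, while keeping the small-cost factor attached to the correct occupancy, is where the argument departs most from the standard squared-loss FQI analysis.
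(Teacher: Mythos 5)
Your proposal follows essentially the same route as the paper's proof: log-loss MLE concentration controlling the Hellinger divergence between $f_j$ and $\mcT f_{j-1}$ under $\mu$, a $\sqrt{\gamma}$-contraction of $\mcT$ in Hellinger distance combined with concentrability to propagate errors and produce the $\gamma^{k}$ tail, and a performance-difference decomposition with the multiplicative Cauchy--Schwarz $|p-q|\le\sqrt{D_\triangle(p,q)}\sqrt{p+q}$ plus a self-bounding argument to extract the $\sqrt{\bar v^\star}$ factor. The one loose spot---you attribute the small-cost factor to $\pi^\star$'s occupancy, whereas the paper's measures are the \emph{learned} policy's state occupancy composed with $\pi$ or $\pi^\star$'s action selection---is exactly what the self-bounding step $x\lesssim\sqrt{\alpha(\bar v^\star+x)}$ you already describe is there to repair, so the argument goes through as you sketch it.
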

The full statement of \cref{thm:main-paper}, including lower order terms, can be found in \cref{sec:appendix-proof} along with its proof. Compared to prior error bounds for FQI \cite{antos-nips-2007,antos2008learning,JMLR:v9:munos08a,farahmand2011regularization,lazaric2012finite,pmlr-v97-chen19e}, to the best of our knowledge, \cref{thm:main-paper} is the first that contains the instance-dependent optimal cost $\bar v^\star$.
This makes \cref{thm:main-paper} a small-cost bound, also referred to as a \textit{first-order} \cite{freund1997decision,pmlr-v40-Neu15} or \textit{small-loss} \cite{doi:10.1287/moor.2021.1204,wang2023benefits} bound in the learning theory literature. 
All previous results for FQI obtain an error bound independent of $\bar v^\star$, and cannot be made to scale with $\bar v^\star$ due to their use of squared loss (see Theorem 2 of \citet{foster2021efficient}, mentioned earlier). Finally, we highlight that \cref{thm:main-paper} is the first small-cost bound for a batch RL algorithm that is \emph{computationally efficient} when efficient regression oracles are available, as is the case when $\cF$ is the set of logit models with weights bounded in $2$-norm.
While technically this is outside of the scope of \cref{thm:main-paper} (since in its current form this result  covers only finite model classes), with some extra work and with appropriate modifications one can show that  \cref{thm:main-paper} continues to hold for infinite model classes, such as the mentioned logit class.

\subsection{Proof Sketch}
The purpose of this section is to give a sketch of the proof of \cref{thm:main-paper}.
For the full proof, see  \cref{sec:appendix-proof}. 
We start by defining the pointwise triangular deviation of $f$ from $q^\star$,
$$
\Delta_f^2(s,a) = \frac{(f(s,a) - q^\star(s,a))^2}{f(s,a) + q^\star(s,a)}\,,
$$
which is closely related to triangular discrimination \cite{850703}. We can relate $\Delta^2_f$ to the Hellinger distance via the following lemma:
\begin{lemma}\label{prop:main-paper}
    For all $p,q \in [0,1]$, we have 
    \begin{equation}
        \frac{1}{4} \frac{(p-q)^2}{p+q} \leq \frac{1}{2} \left(\sqrt{p}-\sqrt{q}\right)^2 
        \leq \hell^2(p\, \| \, q)\,,
    \end{equation}
    where for $p=q=0$ we define the left-hand side to be zero and $\hell^2(p\Mid q)= \frac12(\sqrt{p} - \sqrt{q})^2 + \frac12 (\sqrt{1-p} - \sqrt{1-q})^2$ is the squared Hellinger distance. \todoc{big bug was here.. watch out guys!}
\end{lemma}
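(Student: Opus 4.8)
The plan is to treat the two inequalities separately, since the right-hand one is essentially immediate and the left-hand one reduces to the arithmetic--geometric mean inequality after a single algebraic factorization.

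For the right-hand inequality $\frac12(\sqrt p-\sqrt q)^2 \le \hell^2(p\,\|\,q)$, I would simply appeal to the definition $\hell^2(p\,\|\,q)=\frac12(\sqrt p-\sqrt q)^2+\frac12(\sqrt{1-p}-\sqrt{1-q})^2$ and observe that the second summand is a square, hence nonnegative; dropping it gives the claim for every $p,q\in[0,1]$. This is the trivial half and needs no further work.

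For the left-hand inequality, the crux is the factorization $p-q=(\sqrt p-\sqrt q)(\sqrt p+\sqrt q)$, valid for $p,q\ge 0$, which yields $(p-q)^2=(\sqrt p-\sqrt q)^2(\sqrt p+\sqrt q)^2$. When $p+q>0$ I would divide by $p+q$, so that it remains to show $(\sqrt p+\sqrt q)^2\le 2(p+q)$. Expanding the left side gives $p+q+2\sqrt{pq}$, so this is exactly the statement $2\sqrt{pq}\le p+q$, i.e.\ AM--GM applied to $p$ and $q$. Chaining these together, $\frac14\frac{(p-q)^2}{p+q}=\frac14(\sqrt p-\sqrt q)^2\frac{(\sqrt p+\sqrt q)^2}{p+q}\le\frac14(\sqrt p-\sqrt q)^2\cdot 2=\frac12(\sqrt p-\sqrt q)^2$, as required. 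The degenerate case $p=q=0$ is covered by the stated convention that the left-hand side equals zero, which matches the right-hand side (also zero there).

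I do not anticipate any genuine obstacle: the argument is entirely elementary, with the only points requiring mild care being the boundary case $p+q=0$ (handled by convention) and the validity of the factorization and division, both of which are fine because $p,q\ge 0$. The AM--GM step is the single substantive ingredient, and everything else is bookkeeping.
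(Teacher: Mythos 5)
Your proposal is correct and follows essentially the same route as the paper's proof: both rest on the inequality $(\sqrt p+\sqrt q)^2\le 2(p+q)$ (AM--GM) combined with the factorization $(p-q)^2=(\sqrt p-\sqrt q)^2(\sqrt p+\sqrt q)^2$, and both obtain the right-hand inequality by dropping the nonnegative $\frac12(\sqrt{1-p}-\sqrt{1-q})^2$ term. No gaps.
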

The proof of \cref{prop:main-paper} is deferred to \cref{sec:basicineq}.
The idea to relate the pointwise triangular deviation to the squared Hellinger distance was first employed by \citet{foster2021efficient} in analyzing regret bounds for contextual bandits.
Our proof can be summarized by the following three main steps, which correspond to the three terms given in \cref{prop:main-paper}.

\paragraph{Step 1: Error decomposition} The first step in the proof is to decompose the error (or suboptimality gap), $\bar{v}^{\pi_k}-\bar{v}^\star$, into the product of a small-cost term and the pointwise triangular deviation of $f$ from $q^\star$. The analysis in this step is inspired by the proof of Lemma 1 of \citet{foster2021efficient}. We deviate from their analysis to avoid introducing an extra $|\mcA|$ factor in the bound. We use the performance difference lemma (\cref{lem:perf-diff}), a \textit{multiplicative} Cauchy-Schwarz (\cref{lem:triangular-discrim-bound}), i.e. for distribution $\nu$
$$
\norm{x-y}_{1,\nu} \leq \norm{x+y}_{1,\nu}^{1/2}\cdot \norm{\frac{(x-y)^2}{(x+y)}}_{2,\nu}\,,
$$
and an implicit inequality (i.e. \cref{lem:bound-sum-f-fstar}, step $\star$ in the proof of \cref{lem:first-order-decomposition2}) to get a small-cost decomposition of the error:
\begin{proposition}\label{lem:first-order-decomposition2-main}
    Let $f: \mcS \times \mcA \rightarrow [0,\infty)$  and let $\pi = \pi_f$ be a policy that is greedy with respect to $f$. 
    Define $D_f = \sup_{h\ge 1} \max(\norm{\Delta_f}_{2,\nu_{1,h}},\norm{\Delta_f}_{2,\nu_{2,h}})$. 
        Then, it holds that
        \begin{align*}
            \bar v^{\pi}-\bar v^\star\leq 
    \tilde{C}\left(\frac{ D_f}{1-\gamma}\sqrt{ \bar{v}^\star} + \frac{D_f^2}{(1-\gamma)^2}\right)\,. 
        \end{align*}
        where $\tilde{C} > 0$ is an absolute constant. 
\end{proposition}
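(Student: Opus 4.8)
The plan is to reduce the scalar suboptimality gap $g = \bar{v}^{\pi} - \bar v^\star$ to a discounted sum of pointwise deviations of $f$ from $q^\star$, and then to trade these $\ell_1$ deviations against the triangular deviation $\Delta_f$ through a multiplicative Cauchy--Schwarz, at the cost of an extra factor that carries the small cost $\bar v^\star$. First I would apply the performance difference lemma (\cref{lem:perf-diff}) to write $g$ as a discounted sum over timesteps $h$ of the per-state suboptimality $q^\star(S_h,\pi_f(S_h)) - v^\star(S_h)$, where the states $S_h$ are rolled out under $\pi = \pi_f$. Since $\pi_f$ is greedy with respect to $f$, we have $f(s,\pi_f(s)) \le f(s,\pi^\star(s))$ pointwise, so adding and subtracting $f(S_h,\cdot)$ and using this inequality bounds each per-state term by the two pointwise deviations $|f-q^\star|(S_h,\pi_f(S_h))$ and $|f-q^\star|(S_h,\pi^\star(S_h))$. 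Crucially, keeping the comparator action fixed to $\pi^\star(S_h)$ rather than maximizing over $\mcA$ is what avoids the spurious $|\mcA|$ factor. Averaging over the rollout produces the two families of distributions $\nu_{1,h}$ and $\nu_{2,h}$ appearing in $D_f$, and reduces $g$ to $\sum_h \gamma^{h-1}\big(\norm{f - q^\star}_{1,\nu_{1,h}} + \norm{f - q^\star}_{1,\nu_{2,h}}\big)$.

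Next, to each $\ell_1$ deviation I would apply the multiplicative Cauchy--Schwarz of \cref{lem:triangular-discrim-bound} with $x = f$ and $y = q^\star$, which factors $\norm{f - q^\star}_{1,\nu}$ into $\norm{f + q^\star}_{1,\nu}^{1/2}$ times a triangular-deviation factor. Summing over $h$ with the discount weights, invoking the definition $D_f = \sup_{h}\max(\norm{\Delta_f}_{2,\nu_{1,h}},\norm{\Delta_f}_{2,\nu_{2,h}})$, and applying a further Cauchy--Schwarz across $h$, collects the triangular-deviation factors into $D_f/\sqrt{1-\gamma}$ and leaves a residual magnitude term of the form $\big(\sum_h \gamma^{h-1}\norm{f + q^\star}_{1,\nu_{i,h}}\big)^{1/2}$.

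The heart of the argument, and the step I expect to be the main obstacle, is controlling this magnitude term (\cref{lem:bound-sum-f-fstar}, step $\star$). The difficulty is twofold: $\norm{f + q^\star}_{1,\nu}$ involves $f$ itself, the very object being bounded, and the $q^\star$ contribution is a cost-to-go evaluated along the \emph{suboptimal} rollout of $\pi_f$, which is not controlled by $\bar v^\star$ by any purely structural (occupancy-balance) identity. The plan is to write $f + q^\star \le 2q^\star + |f - q^\star|$, so the magnitude splits into a part governed by the small cost $\bar v^\star$ (relating the discounted occupancy weight of $q^\star$ back to the optimal value) and a part proportional to $g$ itself. This makes the whole inequality self-referential: $g$ is bounded by $D_f/\sqrt{1-\gamma}$ times the square root of a quantity that again contains $g$.

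The final step is to resolve this implicit inequality. Treating it as a quadratic in $\sqrt{g}$ and solving yields $g \le \tilde{C}\big(\tfrac{D_f}{1-\gamma}\sqrt{\bar v^\star} + \tfrac{D_f^2}{(1-\gamma)^2}\big)$, with the term linear in $\sqrt{\bar v^\star}$ coming from the small-cost part of the magnitude and the additive $D_f^2/(1-\gamma)^2$ term coming from the self-referential $g$-dependent part. The multistage structure is exactly what forces the two extra powers of $(1-\gamma)^{-1}$ relative to the contextual-bandit bound of \citet{foster2021efficient}: one from the discounted sum over timesteps and one from the Cauchy--Schwarz across $h$. I would isolate all $f$-dependence into $\Delta_f$ and $\norm{f+q^\star}_{1,\nu}$ as early as possible, so that the self-bounding can be carried out with absolute constants only.
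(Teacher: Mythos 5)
Your overall architecture matches the paper's: performance difference lemma, greediness of $\pi_f$ to compare against the action $\pi^\star(s)$ (which is indeed how the $|\mcA|$ factor is avoided), the multiplicative Cauchy--Schwarz of \cref{lem:triangular-discrim-bound}, the split $f+q^\star \le 2q^\star + |f-q^\star|$ to handle the magnitude term (\cref{lem:bound-sum-f-fstar}), and a final self-bounding step. However, one concrete step in your plan does not deliver the stated bound: the ``further Cauchy--Schwarz across $h$.'' That step reduces the problem to controlling $\sum_{h\ge1}\gamma^{h-1}\norm{q^\star}_{1,\nu_{i,h}}$, whose dominant piece is $\sum_h \gamma^{h-1}\langle \eta_h^{\pi}, v^{\pi}\rangle$ (after $v^\star \le v^{\pi}$). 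This quantity is \emph{not} $O(\bar v^{\pi}/(1-\gamma))$: writing $\langle\eta_h^{\pi},v^{\pi}\rangle=\sum_{t\ge h}\gamma^{t-h}\langle\eta_t^{\pi},c(\cdot,\pi)\rangle$ and exchanging sums gives $\sum_t t\,\gamma^{t-1}\langle\eta_t^{\pi},c(\cdot,\pi)\rangle$, and a trajectory whose entire cost arrives at a single step $T\approx \log(1/\bar v^{\pi})/(1-\gamma)$ shows this is of order $\bar v^{\pi}\log(1/\bar v^{\pi})/(1-\gamma)$. Your route therefore yields at best an extra $\sqrt{\log(1/\bar v^\star)}$ factor on the leading term (and only after additionally invoking $v^{\pi}\le 1$), whereas the proposition claims an absolute constant $\tilde C$.

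The paper avoids this loss by \emph{not} applying Cauchy--Schwarz across timesteps. Instead it first bounds each per-step term by $11 D_f\,\norm{q^\star}_{1,\eta_h^{\pi}\times\pi^\star}^{1/2}+28D_f^2$ (\cref{lem:first-order-decomposition}, which also absorbs the $\eta_h^{\pi}\times\pi$ versus $\eta_h^{\pi}\times\pi^\star$ mismatch via an implicit per-$h$ inequality), and then sums the square roots directly using $\langle\eta_h^{\pi},v^{\pi}\rangle\le \bar v^{\pi}/\gamma^{h-1}$, so that $\sum_h\gamma^{h-1}\sqrt{\langle\eta_h^{\pi},v^{\pi}\rangle}\le \sqrt{\bar v^{\pi}}\sum_h\gamma^{(h-1)/2}\le 2\sqrt{\bar v^{\pi}}/(1-\gamma)$ (\cref{lem:non-negativity-vpi}); the square root ``absorbs half the discount'' and leaves a convergent geometric series, with no logarithm. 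The final self-referential inequality is then resolved in $\bar v^{\pi}$ (via $ab\le(a^2+b^2)/2$, giving $\bar v^{\pi}\le 2\bar v^\star+O(D_f^2/(1-\gamma)^2)$) rather than in $g$; note also that your $\ell_1$-error sum upper-bounds $g$ but is not itself bounded by $g$, so the quadratic should be solved for $\bar v^{\pi}$ or for the error sum, not for $\sqrt{g}$. With the per-step square-root aggregation substituted for your cross-$h$ Cauchy--Schwarz, the rest of your plan goes through essentially as in the paper.
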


Here $\nu_{1,h},\nu_{2,h} \in \mcM_1(\mcS\times\mcA)$ are appropriately defined distributions, the details of which can be found in \cref{lem:first-order-decomposition2}. In summary, step 1 uses the pointwise triangular deviation of $f$ from $q^\star$ to bound the error by the optimal value function $\bar{v}^\star$.

\paragraph{Step 2: Contraction} The second step in our proof starts by bounding the pointwise triangular deviation by the Hellinger distance, i.e.
\begin{align*}
    \frac{1}{\sqrt{2}}\norm{\Delta_f}_{2,\nu} \leq \norm{\sqrt{f}-\sqrt{q^\star}}_{2,\nu}
\end{align*}
where $\nu \in \mcM_1(\mcS\times\mcA)$. In \cref{lem:contraction-fixed} we next establish that $\mcT$ is a $\gamma$-pseudo-contraction at $q^\star$ with respect to Hellinger distances: For any $f\ge 0$, $\nu\in \cM_1(\cS \times \cA)$,
$$
\norm{\sqrt{\mcT f}-\sqrt{\mcT q^\star}}_{2,\nu} \leq \gamma^{1/2}\norm{\sqrt{f}-\sqrt{q^\star}}_{2,\nu'}\,,
$$
where $\nu'$ is a distribution over state-action pairs. 

Contraction arguments have a long history \cite{bertsekas1995,littman1996algorithms,antos-nips-2007,pmlr-v97-chen19e} in the analysis of dynamic programming algorithms that solve MDPs. The novelty is that we needed to bound the pointwise triangular deviation, which led to new analysis with Hellinger distances. 

In \cref{lem:contraction},  the combination of a standard change of measure argument (that uses the definition of the concentration coefficient $C$) and a standard contraction argument, we get
\begin{equation}\label{eqn:temp}
\norm{\sqrt{f}-\sqrt{q^\star}}_{2,\nu} \leq \frac{\sqrt{C}}{1-\sqrt{\gamma}}\norm{\sqrt{f} - \sqrt{\mcT f}}_{2,\mu}\,,
\end{equation}
where $\mu$ is the exploratory distribution in \cref{asp:data}. In batch RL, we often want to learn $q^\star$, or a policy whose value is close to $q^\star$. However, when using function approximation FQI, the regressor corresponding to the training data is $\mcT f$. Therefore, as demonstrated in \cref{eqn:temp}, if we can show the contraction property then we can bound the error between $f$ and $q^\star$ by the error between $f$ and $\mcT f$. As we will argue shortly, the error between $f$ and $\mcT f$ goes to zero as the size of the batch dataset grows.

\paragraph{Step 3: Error control/propagation} The third step in our proof starts by bounding 
$$\norm{\sqrt{f}-\sqrt{\mcT f}}_{2,\mu} \leq \sqrt{2}\norm{\hell^2(f \Mid \mcT f)}_{1,\mu}^{1/2}\,.$$ 
Then by application of \cref{thm:concentration}, we have that if $f$ is the minimizer of $\elllog$ with respect to the batch dataset $D_n$, then
$$
\sqrt{2}\norm{\hell^2(f \Mid \mcT f)}_{1,\mu}^{1/2} \lesssim \frac{2}{\sqrt{n}}\,,
$$
where we use $\lesssim$ informally to highlight the most salient elements of the inequality.
We then combine all the results to control the pointwise triangular deviation in \cref{lem:first-order-decomposition2-main}, i.e $D_f$, 
$$
\bar{v}^{\pi_k} - \bar{v}^\star \lesssim \frac{\sqrt{C\bar v^\star}}{(1-\gamma)^2 \sqrt{n}} + \frac{C}{(1-\gamma)^4  n}\,.
$$


This provides a sketch for proving \cref{thm:main-paper}. In the full proof we also need to control each iterate of \fqilog, i.e. the $f_j$'s. We fill in the missing details in our appendix.


\section{Numerical Experiments}\label{sec:numerical-results}
The goal of our experiments is to provide insights into the benefits of using \fqilog for learning a near-optimal policy in batch reinforcement learning. 
We run our first set of experiments in reinforcement learning with logit models, as 
\emph{this setting allows us to best compare \fqilog to \fqisq without other confounding factors}. For these experiments, we used two standard control tasks; ``mountain car'' and ``inverted pendulum''. The tasks are set up as fixed horizon episodic problems where at the end of an episode, if the goal is met no cost is incurred, otherwise a cost of one is incurred. The two environments differ in that in one of them the goal region is small, in the other the goal region is large. In both tasks, some policies can reach the goal, but many fail.
As it is known that the goals in these problems can be met, we expect \fqilog to do better than \fqisq on these environments.

Our second set of experiments aim at verifying whether the recommendation to switch to log loss transfers
to deep RL (DRL), i.e., to more complex function classes, regression methods and environments. 
For these experiments, we started from the work of \citet{agarwal2020optimistic}, 
who tested various DRL methods, including C51
of \cite{bellemare2017distributional}, a distributional RL algorithm, which was found to be one of the most capable of the methods tested.
As noted in the introduction, \citet{wang2023benefits} showed a small-cost bound for a distributional RL method; hence, our research question is whether a simpler log-loss based method can compete with these (more complex) distributional RL algorithms. To create a real challenge, we picked the two environments (Asterix and Sequest) from \citet{agarwal2020optimistic} where C51 
 significantly outperformed \dqnsq, the DRL version of FQI and copied their setting.
%
%


\subsection{Aiming for a Goal: Mountain Car}

\begin{figure*}
    \centering
    \includegraphics[width=0.33\linewidth]{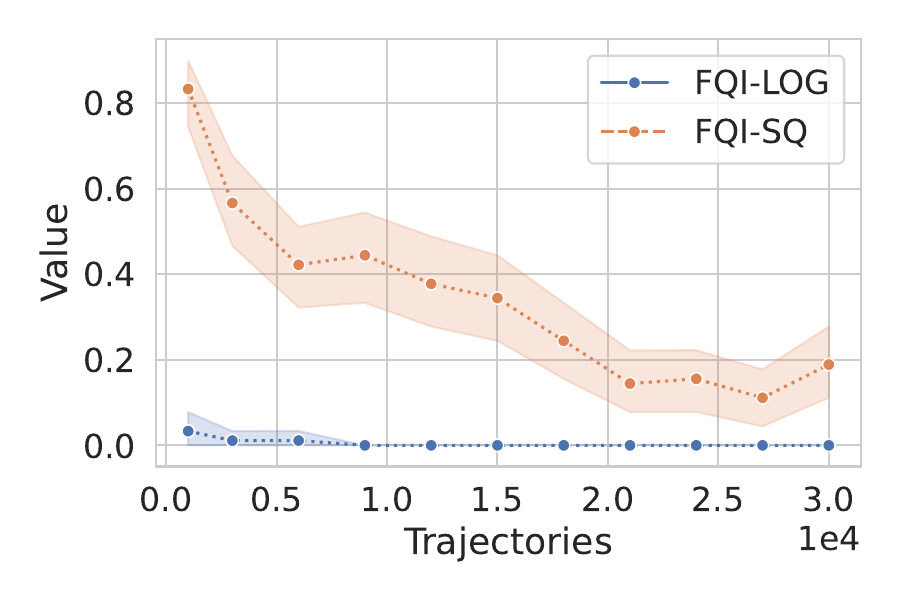}
    \includegraphics[width=0.33\linewidth]{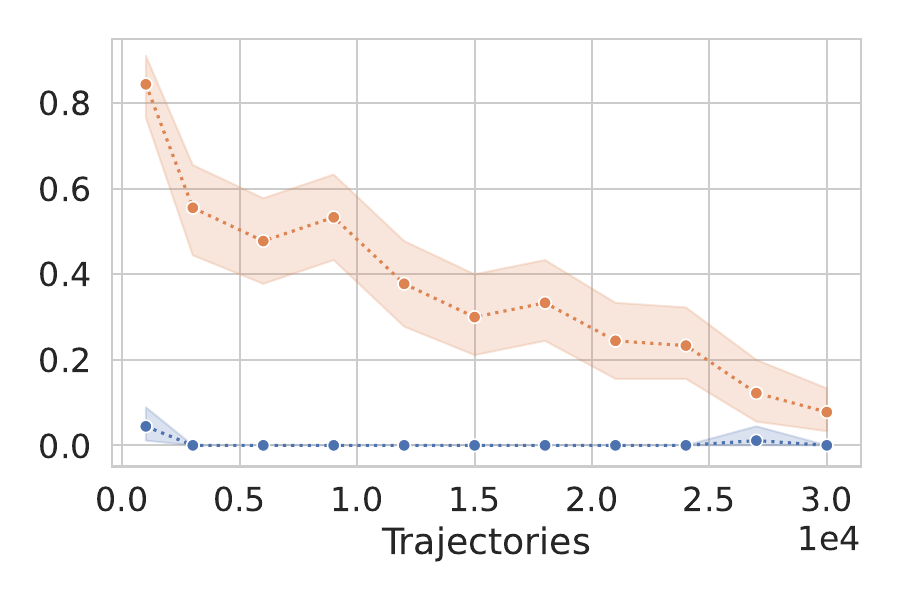}
    \includegraphics[width=0.33\linewidth]{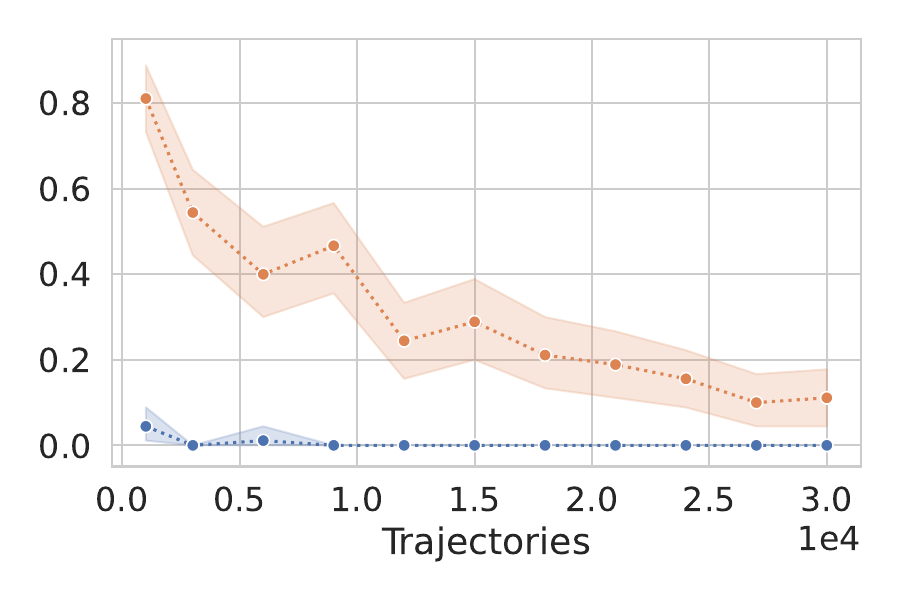}
    \caption{The value of the policy learned by FQI as a function of the size of batch dataset. The results are averaged over $90$ independently collected datasets. The figures on the left, middle and right are generated using batch data that contain only $1$, $5$, and $30$ successful trajectories respectively. The standard error of the mean is reported via the shaded region.}
    \label{fig:mc-plots}
\end{figure*}

We first evaluate \fqilog and \fqisq on an episodic sparse cost variant of mountain car with episodes lasting for $800$ steps.
(While we showed our results for the discounted setting, they are expected to hold in episodic problems as well, with small modifications.)
Following \citet{Moore90efficientmemory-based}, this environment consists of a $2$-dimensional continuous state space of $[-1.2,0.6]\times [-0.07,0.07]$
and $3$ discrete actions;
the states represent a position and velocity of an underpowered car that can be accelerated left, right, or not accelerated, until the top of a hill is reached when the dynamics is turned on, and the car remains in place regardless of the actions.
The cost is 0 at all timesteps except the last, when a cost of 1 is received if the learner has not reached the hilltop. We consider the undiscounted version of the problem (i.e., $\gamma=1$). An optimal policy for this setting reaches zero cost if it reliably reaches the top of the hill in $800$ steps or less, regardless of the exact time. For $\eta_1$, the initial state distribution, we use a Dirac that outs the car at the bottom of the hill with zero velocity with probability one. 
%

The feature vectors assigned to states are $16$ dimensional and come from
a Fourier basis of order $4$, following \citet{konidaris2011value}  and Chapter 9 of \citet{sutton2018reinforcement}. 
With this, for time step $h\in [800]$, the estimator uses $\theta_h = (\theta_a^h)_{a\in \cA} \in \R^{48}$
to produce the estimate $f_h(s,a) = \sigma( \ip{\phi(s),\theta_a^h})$,
where $\sigma(x) = (1+\exp(-x))^{-1}$ is the sigmoid function. 
This variant of \fqilog with sigmoid functions is closely related to the logistic temporal-difference learning algorithm proposed in Appendix A of the PhD thesis of \citet{silver}. We employ the BFGS method, a quasi-Newton method with no learning rate, to find the minimizer of the losses. 
While for strongly-convex functions, BFGS is known to converge to the global minimum superlinearly \cite{dennis1974characterization}, there is no guarantee that BFGS will find a global minimum when the loss is the squared loss.
Finally, each batch dataset is constructed from a set of trajectories collected by running the uniform random policy from the initial state $30,000$ times. We use rejection sampling in order to guarantee each dataset has $i$ trajectories that reach the top of the hill with 
$i\in \{1,5,30\}$. 
We train both \fqilog and \fqisq on the same batch datasets with the first $n = [1,3,6,9,12,15,18,21,24,27,30] \times 10^3$ trajectories in order to study the relationship between the size of the batch dataset and the quality of the policies learned by \fqilog and \fqisq. 
We move the successful trajectories to the front of the batch dataset so that they are included for all values of $n$.

Since BFGS does not require tuning and the same batch datasets are given to \fqilog and \fqisq, the only variable effecting the performance of the two methods is the loss being minimized. As shown in \cref{fig:mc-plots}, \fqilog accumulates much smaller cost then \fqisq using fewer samples, irrespective of the number of trajectories that reach the top of the hill. \fqilog is also able to learn a near-optimal policy with only a \textbf{single} successful trajectory. In batch RL collecting good trajectories is often expensive, so making efficient use of the few that appear in the batch dataset is an attractive algorithmic feature. As the number of trajectories that reach the top of the hill increases, so too does the performance of \fqisq. Since the optimal value on this problem is zero, \fqilog is able to learn a near-optimal policy using fewer samples than \fqisq.

\subsection{Avoiding Failure: Inverted Pendulum}

We further evaluate \fqilog and \fqisq on the inverted pendulum environment \cite{lagoudakis,riedmiller2005neural}, where the goal is to balance an inverted pendulum, by applying forces to it. 
The state space is two dimensional (angle, angular velocity)
and there are three actions (push left, right, no change).
The environment dynamics are as described by \citet{lagoudakis}, except that 
{\em (i)} when the pendulum falls below horizontal, the state is frozen there and 
{\em (ii)} we clip the angular momentum to be in $[-5,5]$ instead of letting it take arbitrary real values;
the clipping is done to facilitate the use of Fourier basis features, which we found works better than the radial basis functions used by \citet{lagoudakis}. As we were using order $4$ Fourier features, the parameter space in this case is $4\times 4\times 3 =48$ dimensional. The same logit class was used as in the case of mountain car; for fitting BFGS was used.
The cost structure is as follows: The cost of letting the pendulum fall bellow the horizontal is $0$. There is no cost otherwise. Again, we know there exist policies that achieves near-zero cost.
The discount factor is $\gamma=0.95$.
All datasets are collected by a policy that selects actions uniformly at random until failure (which happens typically after $6$ steps) starting from a close to upright, random position. 
We performed $90$ independent trials, each trial consisting of fitting \fqilog and \fqisq on the same data for $k=300$ rounds. We then evaluate the learned policies $1000$ times on whether the inverted pendulum is still balanced after $3000$ steps, starting from a near upright, random position. Results are shown in \cref{fig:invpenplot}. As with the mountain car experiments, we see that \fqilog uses fewer samples than \fqisq to learn a good policy. 
\begin{figure}[htb]
    \centering
    \includegraphics[width=0.9\linewidth]{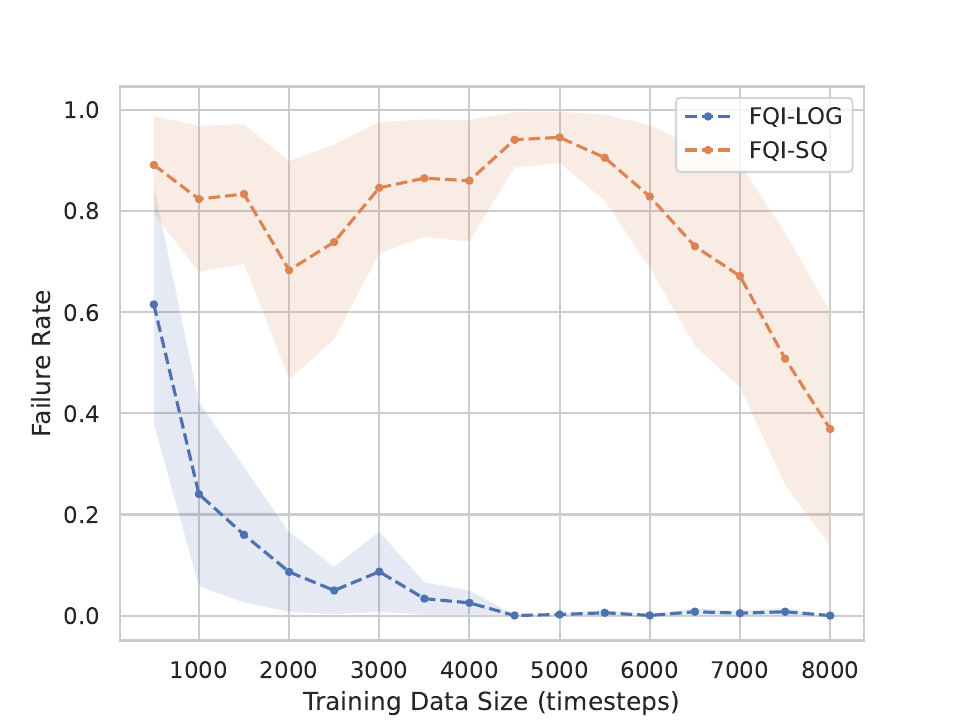}
    \caption{The portion of the time that a policy learned by FQI was able to balance the pendulum for $3000$ steps. 
Results are averaged over $90$ independently collected datasets and each learned policy is tested on $1000$ initializations. The standard error of the mean is shaded.}
    \label{fig:invpenplot}
\end{figure}

\subsection{Asterix and Seaquest}
As described earlier, 
we evaluate the deep RL variants of \fqilog and \fqisq on the Atari 2600 games Asterix and Seaquest \cite{bellemare2013arcade}, and use the distributional RL algorithm C51 as an additional baseline.
We adopt the data and experimental setup of \citet{agarwal2020optimistic}. The data consist of five batch datasets for each game, which were collected from independent training runs of a DQN learner \cite{mnih2015humanlevel}. Specifically, each batch dataset contains every fourth frame from 200 million frames of training; a frame skip of four and sticky actions \cite{machado2018revisiting} were used, whereby all actions were repeated four times consecutively and a learner randomly repeated its previous action with probability 0.25.

When the function class $\mcF$ of \fqisq is given by a deep neural network, the algorithm is called DQN. To adapt \fqilog to the deep RL setting, we must switch the training loss from $\ellsq$ to $\elllog$, and add a sigmoid activation layer to squash the output range to $[0,1]$. We henceforth refer to these algorithms as \dqnsq and \dqnlog, respectively.


The first algorithm to implement a variant of a DQN trained with a form of log-loss is the distributional RL algorithm C51, i.e. categorical DQN \cite{bellemare2017distributional}. C51 minimizes the \textit{categorical} log-loss across $N$ categories:
$$
\ell_{\log,N}(y;\tilde y) = \sum_{i=1}^N \elllog(y_i ; \tilde y_i)\,,
$$
for $y,\tilde y \in [0,1]^N$.
C51 modifies \dqnsq in the following five ways:
\begin{enumerate}[label=\textbf{S}.\arabic*]
 \item\label{list:bins} C51 categorizes the return, i.e. sum of discounted rewards, into 51 ``bins'', and predicts the probability that the outcome of a state-action pair will fall into each bin, whereas \dqnsq regresses directly on the returns.

 \item\label{list:softmax} C51 applies a softmax activation to its output, to normalize the values into a probability distribution over bins, as necessitated by \cref{list:bins}.

 \item\label{list:c51-loss} C51 exchanges $\ellsq$ for $\ell_{\log,N}$ as the training loss.

 \item\label{list:c51-clip} C51 ``clips'' the targets to the finite interval $[v_{\min}, v_{\max}]$, to enable mapping them into a finite set of bins.

    \item\label{list:c51-distrib-bellman} C51 replaces the Bellman optimality operator $\mcT$ with a modified ``distributional Bellman operator''.
    \end{enumerate}

    For our experiments we clip the targets of \dqnlog by setting $v_{\min} =0$ and $v_{\max} = 10$. Since the sigmoid activation of \dqnlog is a specialization of the softmax activation to the binary case, \dqnlog implements the changes \ref{list:c51-loss}, \ref{list:softmax}, and \ref{list:c51-clip} to the standard form of \dqnsq.
    Clipping the targets introduces a bias which we correct for by similarly clipping the targets of \dqnsq. Thus our benchmark results for \dqnsq include the change \ref{list:c51-clip}. Clipping the targets of \dqnsq is novel to this work and improves performance, yielding a stronger baseline. We include a comparison with the traditional unclipped version of \dqnsq in \cref{sec:appendix-experiment}.

    In our implementations of \dqnlog and \dqnsq, we use the same hyperparameters reported by 
\citet{agarwal2020optimistic}. \cref{fig:atari} shows the undiscounted return as a function of the number of     training epochs. On Seaquest, \dqnlog outperforms \dqnsq and matches the performance of C51. In 
Asterix, \dqnlog performs similarly to \dqnsq and both get lower return than C51.
Overall, our results are inconclusive in this setting in regards to whether switching to log-losses suffices to reproduce the success of C51. However, the experiments confirm that switching to log-loss can be beneficial, as compared to using the squared loss and sometimes this switch alone is sufficient to compete with the more complex C51 algorithm.

\begin{figure*}[t]
    \centering
    \includegraphics[width=0.4\linewidth]{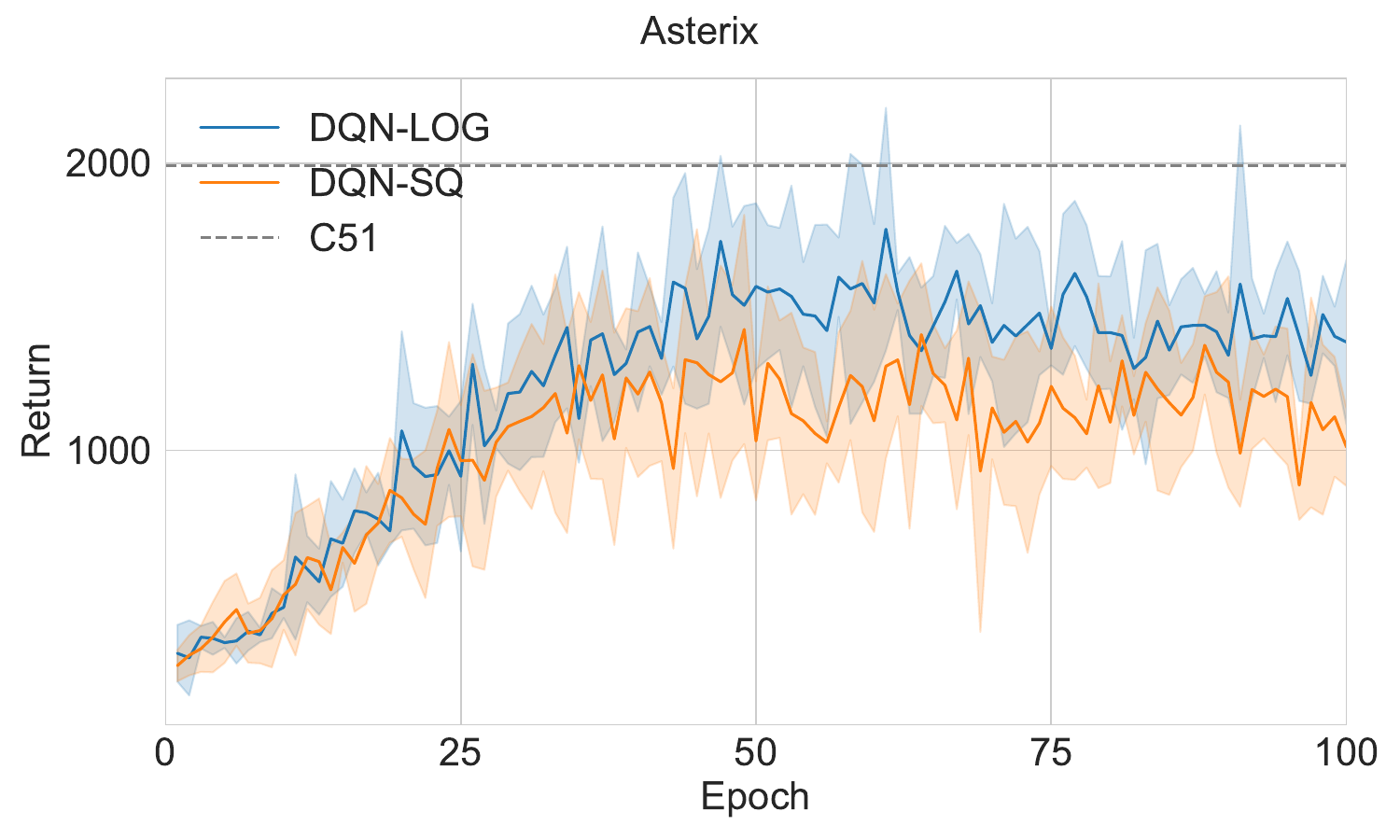}
    \includegraphics[width=0.4\linewidth]{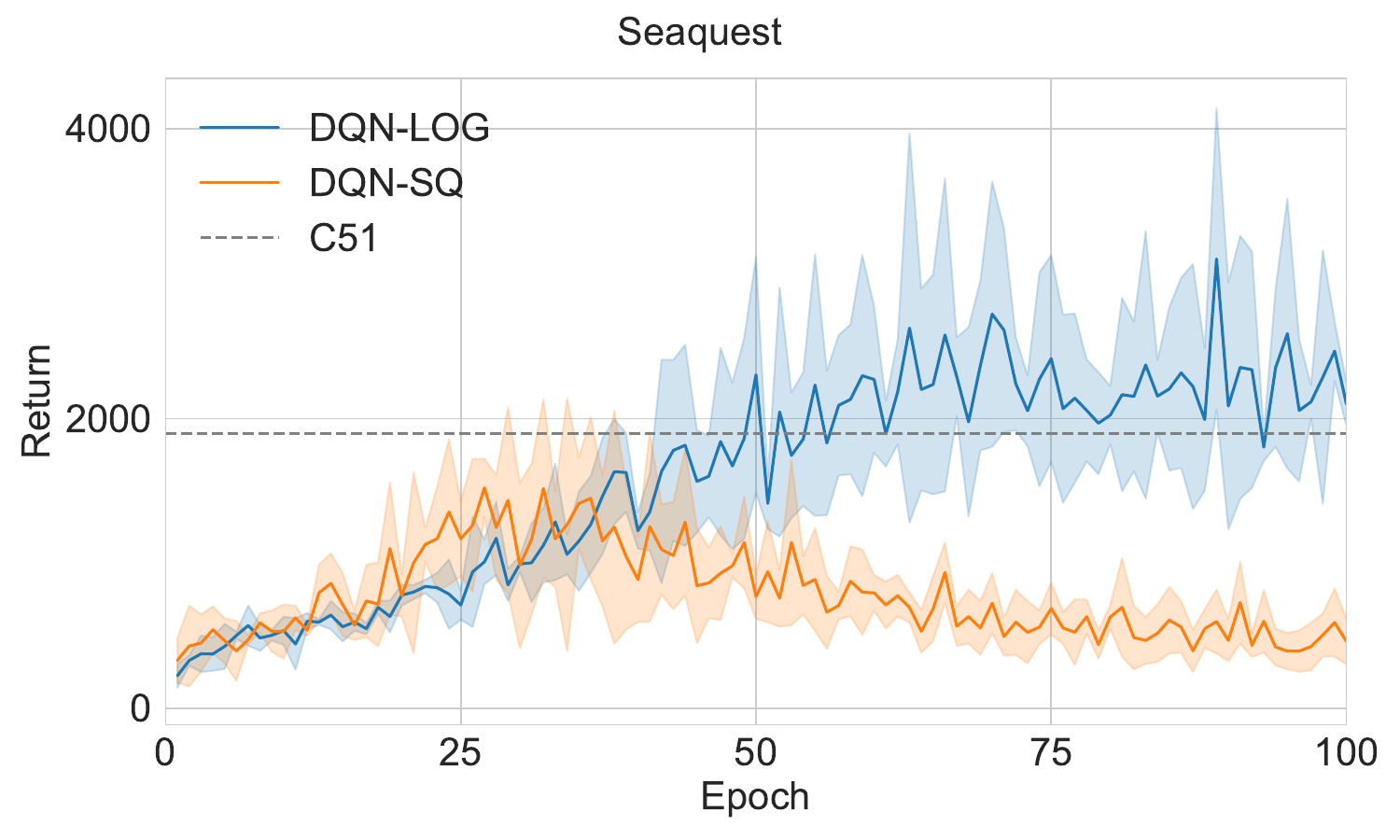}
    \caption{Learning curves on Asterix and Seaquest. The results are averaged over $5$ datasets. The shaded regions represent one standard error of the mean. One epoch contains 100k updates.}
    \label{fig:atari}
\end{figure*}


\section{Related Works}
\paragraph{First-order bounds in RL} \citet{wang2023benefits} obtain small-cost bounds for finite-horizon batch RL problems under the distributional Bellman completeness assumption, which is more restrictive than our analogue, \cref{asp:completeness}. \citet{wang2024more} refines the bounds of \citet{wang2023benefits}, showing second-order bounds (which depends on the variance) for the same algorithm. They attribute their small-cost bound to the use of the distributional Bellman operator \citep{bdr2023}. However, their proof techniques \textit{only} make use of pessimism \citep{buckman2021the,jin2021pessimism} and log-loss in achieving their small-cost bound. The use of pessimism is necessary for their proof in order to control the errors accumulated by use of the distribution Bellman operator during value iteration. We improve upon their work by proposing an efficient algorithm for batch RL that enjoys a similar small-cost bound without using the distributional Bellman operator or pessimism under a \textit{weaker} completeness assumption.

\citet{jin2020reward} and \citet{pmlr-v162-wagenmaker22b} obtain regret bounds that scale with the value of the optimal policy. However in their setting, the goal is to maximize reward. Therefore, their bounds only improve upon  previous bounds (e.g., \citealt{azar2017minimax,pmlr-v97-yang19b,pmlr-v125-jin20a})
when the optimal policy accumulates very little reward. These bounds are somewhat vacuous as they only imply that regret is low when the value of the initial policy is already close to the value of the optimal policy, both of which are close to zero. Small-cost bounds give the same rates as small-return bounds, however, they are more attractive as the cost of the initial policy can be high while the cost of the optimal policy can be low. Small-cost bounds for online learning were given by \citet{lee2020bias} 
for learning in tabular MDPs, while \citet{kakade2020information} derived such results
 for linear quadratic regulators (LQR).
To our knowledge we give the first small-cost bound for batch RL under completeness  \citep{antos-nips-2007,JMLR:v9:munos08a}. 

\paragraph{Small-cost bounds in bandits} Several works get small-cost bounds in contextual bandits \citep{allen2018make, foster2021efficient,olkhovskaya2023first}. 
In their Theorem~3,
\citet{foster2021efficient} show that for 
batch contextual bandits, playing the greedy policy with respect to a reward function estimated via log-loss enjoys a small-cost bound, whereas in their Theorem~2 they show that if a reward function is estimated
via squared loss, the greedy policy fails to achieve a small-cost guarantee. 
Our main result, \cref{thm:main-paper} can be viewed as the sequel to their Theorem 3. 
\citet{abeille2021instance} show that for stochastic logistic bandits, where the costs/rewards are Bernoulli, the regret can be made to scale with the variance of the optimal arm. This is simultaneously a small-cost and small-return bound. They achieve this bound by employing optimism together with maximum likelihood estimation (MLE). \citet{janz2023exploration} extend this result and show that this result continues to hold if the reward distribution comes from a ``self-concordant'', single-parameter family. 

\paragraph{Theory on batch RL}
The theory literature on batch RL has largely  focused on proving sample efficiency rates. \citet{pmlr-v97-chen19e} proved that \fqisq gets a rate optimal bound of $1/\sqrt{n}$ when realizability, concentrability and completeness hold. \citet{foster2021offline} then show that if one assumes concentrability then completeness is a necessary assumption for sample efficient batch RL. \citet{pmlr-v139-xie21d} prove that if one uses the stronger notation of concentrability from \citet{munos2003error} then sample efficient batch RL is possible even with only realizability. To our best knowledge, the previous theoretical works on batch RL have only considered algorithms where value estimation used  squared loss \citep[e.g.,][]{antos-nips-2007,farahmand2011regularization,Pires2012StatisticalLE,pmlr-v97-chen19e,pmlr-v139-xie21d}. 

\paragraph{Concurrent empirical work} 
The concurrent and independent empirical work of \citet{farebrother2024stop} also advocates for log-loss, but they end up with the approach that is used in distributional RL, which reduces regression to multiclass classification. 
This is not only more complicated than using log-loss (more parameters), but also introduces irreducible bias, whereas our approach avoids this. 



\section{Conclusions}

By proving that \fqilog is more sample efficient in MDPs with a small optimal cost $\bar{v}^\star$
than \fqisq, 
we showed that in batch RL the loss function genuinely matters. 
We believe our result holds generally and can be extended to any batch RL setting where the squared loss has bounded error, such as when pessimistic methods are used.
Another intriguing extension would be deriving small-cost bounds in batch RL with only realizability (and a stronger concentrability assumption), perhaps following the analysis of \citet{pmlr-v139-xie21d}.
\citet{pmlr-v162-wagenmaker22b} get small-return bounds for online RL in linear MDPs. 
Can we get small-cost bounds in online RL with linear function approximation? When and how? 
Finally, our mountain car experiments indicate 
that log-loss might perform well in goal-oriented MDPs \cite{bertsekas1995}, 
where the learner is tasked with reaching some goal and this is possible.
However, in this formulation there is no ``pressure'' for reaching the goal as quickly as possible.
It remains to be seen how this could be incorporated in algorithms like ours. 
(Staying away from failure zones for as long as possible can be easily formulated with the help of discounting.) Our experiments concerning whether switching to the log-loss is sufficient to replicate successes of the more complex C51 distributional RL algorithm were inconclusive. Here, it will be interesting to investigate whether switching to log-loss, but also keeping the losses used by C51 as auxiliary loss can lead to a method that outperforms both C51 and our method.

\section*{Acknowledgments}
The authors would like to thank Roshan Shariff for pointing out a bug in an earlier version of our proof. Csaba Szepesvári also gratefully acknowledges funding from the Canada CIFAR AI Chairs Program, Amii and NSERC.
\section*{Impact Statement}
This paper presents work whose goal is to advance the field of reinforcement learning theory. There are many potential societal consequences of our work.

\bibliography{ref}
\bibliographystyle{icml2024}

\newpage
\appendix
\onecolumn
\section{Preliminary results}

In this section we introduce and prove some elementary inequalities that connect useful metrics on function spaces, and then
state a concentration result of \citet{foster2021efficient} for the log-loss estimator.
The concentration result gives a high probability upper bound on the error of the log-loss estimator, as measured by the
integrated binary Hellinger loss (defined below). This result is central to our analysis.
The elementary inequalities connect the integrated
binary Hellinger loss to both the Hellinger distance and the triangular discrimination, and will reduce the analysis of our algorithm to studying the approximation error of its value function estimates $\{f_j\}_{j=1}^k$.

The analysis of \fqilog revolves around controlling the Hellinger distance, which is a distance between nonnegative
integrable functions. In particular, for $\lambda$-integrable functions $f, g \ge 0$, the Hellinger distance between $f$ and $g$ is defined as
\begin{align*}
\Hell(f,g) = \frac1{\sqrt 2} \norm{\sqrt{f}-\sqrt{g}}_{2,\lambda}\,.
\end{align*}

\subsection{Some basic inequalities}
\label{sec:basicineq}
Given real numbers $p, q \in [0, 1]$, we define the \emph{binary Hellinger loss} of $p$ and $q$ as
\begin{align}\label{defn:hell}
        \hell^2(p, q) = \frac{1}{2}\left(\sqrt{p}-\sqrt{q}\right)^2 + \frac{1}{2}\left(\sqrt{1-p} - \sqrt{1-q}\right)^2\,,
\end{align}
and immediately observe that $0 \le \hell^2(p,q) \le 1$.
Note that the \emph{Hellinger distance} between two distributions $P$ and $Q$ over a common domain is defined as $\frac{1}{\sqrt{2}} \norm{ \sqrt{p}-\sqrt{q} }_{2,\lambda}$, where $p = dP/d\lambda$ and $q = dQ/d\lambda$ are the densities of $P$ and $Q$ with respect to a dominating distribution $\lambda$. \todoc{citation}
Thus the binary Hellinger loss between $p$ and $q$, $\hell^2(p, q)$,
is the squared Hellinger distance between Bernoulli distributions with means $p$ and $q$.

\begin{lemma}\label{prop:hellinger-triangular-scalar}
For all $p,q \in [0,1]$, we have 
\begin{equation}
    \frac{1}{4} \frac{(p-q)^2}{p+q} \leq \frac{1}{2} \left(\sqrt{p}-\sqrt{q}\right)^2 
    \leq \hell^2(p, q)\,,
\end{equation}
where for $p=q=0$ we define the left-hand side to be zero.
\end{lemma}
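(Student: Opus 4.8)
The plan is to handle the two inequalities separately, starting with the right-hand one, which is essentially free. By the definition of $\hell^2$ in \cref{defn:hell}, we have $\hell^2(p,q) = \frac12(\sqrt p - \sqrt q)^2 + \frac12(\sqrt{1-p} - \sqrt{1-q})^2$, and since the second summand is a square it is nonnegative. Dropping it immediately yields $\tfrac12(\sqrt p - \sqrt q)^2 \le \hell^2(p,q)$, with no case analysis needed.

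The substance is in the left inequality, $\tfrac14 \frac{(p-q)^2}{p+q} \le \tfrac12(\sqrt p - \sqrt q)^2$. First I would dispose of the degenerate case $p=q=0$, where both sides equal zero by the stated convention. For $p+q>0$, I would substitute $a = \sqrt p$ and $b = \sqrt q$ so that $p = a^2$, $q = b^2$, and factor the numerator as a difference of squares:
\begin{equation*}
(p-q)^2 = (a^2 - b^2)^2 = (a-b)^2(a+b)^2\,.
\end{equation*}
Since $p+q = a^2 + b^2$, the claimed inequality becomes $\tfrac14 (a-b)^2 (a+b)^2 / (a^2+b^2) \le \tfrac12 (a-b)^2$. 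When $p=q$ both sides vanish; otherwise $(a-b)^2 > 0$ and I can cancel it, reducing the goal to $(a+b)^2 \le 2(a^2+b^2)$.

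This final inequality is exactly the statement that $(a-b)^2 \ge 0$, since $2(a^2+b^2) - (a+b)^2 = a^2 - 2ab + b^2 = (a-b)^2$. Thus the whole lemma reduces to two applications of the fact that squares are nonnegative. There is no real obstacle here; the only point requiring care is the bookkeeping around $p+q = 0$, where the left-hand side is a $0/0$ form and must be assigned the value $0$ by convention (as stated), and the cancellation of $(a-b)^2$, which is only valid when $p \ne q$ — but in the excluded case $p=q$ both sides are zero and the inequality holds trivially. I would present the argument in the order: (i) right inequality by nonnegativity; (ii) substitution and factorization for the left inequality; (iii) reduction to $(a-b)^2\ge 0$.
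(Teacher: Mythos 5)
Your proof is correct and takes essentially the same route as the paper: both arguments hinge on factoring $(p-q)^2 = (\sqrt p - \sqrt q)^2(\sqrt p + \sqrt q)^2$ and applying $(\sqrt p + \sqrt q)^2 \le 2(p+q)$, then obtaining the right-hand inequality by dropping the nonnegative second term of $\hell^2$. The only cosmetic difference is that you cancel $(\sqrt p - \sqrt q)^2$ (handling $p=q$ separately) where the paper divides by $(\sqrt p + \sqrt q)^2$ (handling $p=q=0$ separately); both bookkeeping choices are fine.
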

\begin{proof}
  If $p=q=0$ then equality holds trivially, and otherwise $(\sqrt{p}+\sqrt{q})^2 \leq 2(p + q)$ implies
  \begin{align*}
    \MoveEqLeft
      \frac{(p-q)^2}{4(p+q)} \leq  \frac{(p-q)^2}{2(\sqrt{p}+\sqrt{q})^2} = \frac{1}{2} \left(\sqrt{p}-\sqrt{q}\right)^2
      \le \frac{1}{2} \left(\sqrt{p}-\sqrt{q}\right)^2 + \frac{1}{2}\left(\sqrt{1-p}-\sqrt{1-q}\right)^2 = \hell^2(p,q)\,. \qedhere
    \end{align*}
\end{proof}
The next result holds for an extended definition of $\hell^2$ 
that replaces the inputs $p, q \in [0,1]$ with functions $f,g: \mcX \to [0,1]$.
Given such functions, we define $\hell^2(f,g): \mcX \to [0,1]$ by 
\begin{align*}
(\hell^2(f,g))(x) = \hell^2( f(x) , g(x))\,, \qquad x\in \mcX\,.
\end{align*}
With this definition in hand, the following is a straightforward corollary of \cref{prop:hellinger-triangular-scalar}.
\begin{corollary}\label{cor:hellinger-triangular-scalar}
For any distribution $\nu$ over the set $\mcX$ and any measurable functions $f,g: \mcX \to [0,1]$, 
\begin{align*}
  \norm{\frac{f - g}{\sqrt{f + g}}}_{2,\nu} \le \sqrt{2} \norm{\sqrt{f} - \sqrt{g}}_{2,\nu} \le 2 \norm{ \hell^2(f,g) }_{1,\nu}^{1/2}\,,
\end{align*}
where for $f(x)=g(x)=0$ we define $\frac{f(x) - g(x)}{\sqrt{f(x) + g(x)}}=0$.
\end{corollary}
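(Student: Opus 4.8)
The plan is to obtain both inequalities as the $L^2(\nu)$/$L^1(\nu)$ ``lift'' of the scalar two-sided bound in \cref{prop:hellinger-triangular-scalar}: I would apply that bound pointwise at $p = f(x)$, $q = g(x)$, then integrate against $\nu$ and take square roots. There is no genuine obstacle, and the only things to watch are the bookkeeping of the constants $\sqrt 2$ and $2$ and the degenerate case $f(x)=g(x)=0$, which is covered by the same convention already fixed in \cref{prop:hellinger-triangular-scalar}.

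For the left inequality, I would start from the first half of \cref{prop:hellinger-triangular-scalar}, rewritten (multiplying through by $4$) as $\frac{(p-q)^2}{p+q} \le 2(\sqrt{p}-\sqrt{q})^2$ for all $p,q\in[0,1]$. Substituting $p=f(x)$, $q=g(x)$ gives, for every $x\in\mcX$,
\[
\frac{(f(x)-g(x))^2}{f(x)+g(x)} \le 2\left(\sqrt{f(x)}-\sqrt{g(x)}\right)^2\,,
\]
where on the left the convention $f(x)=g(x)=0\mapsto 0$ matches that of the scalar lemma, so the inequality holds at every point including the degenerate one. Integrating both sides with respect to $\nu$ yields $\norm{\frac{f-g}{\sqrt{f+g}}}_{2,\nu}^2 \le 2\norm{\sqrt{f}-\sqrt{g}}_{2,\nu}^2$, and taking square roots produces the first claimed bound.

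For the right inequality, I would use the second half of \cref{prop:hellinger-triangular-scalar}, rewritten (multiplying through by $2$) as $(\sqrt{p}-\sqrt{q})^2 \le 2\,\hell^2(p,q)$. The same pointwise substitution gives $(\sqrt{f(x)}-\sqrt{g(x)})^2 \le 2\,\hell^2(f(x),g(x))$ for all $x\in\mcX$. Integrating against $\nu$ and using that $\hell^2(f,g)\ge 0$, so that $\norm{\hell^2(f,g)}_{1,\nu} = \int \hell^2(f,g)\,d\nu$, gives $2\norm{\sqrt{f}-\sqrt{g}}_{2,\nu}^2 \le 4\norm{\hell^2(f,g)}_{1,\nu}$. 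Taking square roots yields the second bound and closes the chain. Both remaining checks — the constants and the handling of $f(x)=g(x)=0$ — are routine.
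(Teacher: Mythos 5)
Your proposal is correct and matches the paper's own proof, which likewise applies \cref{prop:hellinger-triangular-scalar} pointwise, integrates over $\nu$, and takes square roots while adjusting constants. The constant bookkeeping ($\sqrt{2}$ on the left, $2$ on the right) and the handling of the $f(x)=g(x)=0$ case are both handled exactly as intended.
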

We call the quantity $\norm{\hell^2(f,g)}_{1,\nu}$, which appears on the right hand side above, the \emph{integrated binary Hellinger loss} between $f$ and $g$. \todos{this name will never be mentioned again}
Squaring all quantities and dividing through by 4 yields the equivalent inequalities
\begin{align*}
\frac{1}{4}\norm{\frac{f - g}{\sqrt{f + g}}}^2_{2,\nu} \leq \frac{1}{2} \norm{\sqrt{f} - \sqrt{g}}^2_{2,\nu} \le  \norm{ \hell^2(f,g) }_{1,\nu}\,.
\end{align*}

In words, the integrated binary Hellinger loss between $f$ and $g$ is lower bounded by their \emph{squared} Hellinger distance, which itself is lower bounded by one quarter of the triangular discrimination between them.
The latter is essentially the squared distance between $f$ and $g$, but rescaled pointwise by $1/\sqrt{f + g}$.
Because $|a-b|/\sqrt{a+b}\le \eps$ implies $|a-b|\le \eps \sqrt{a+b}$, we see that a bound on the rescaled distance between two values tightens the bound between them whenever $\sqrt{a+b}<1$.
Exploiting this property is key to our later analysis. 
\begin{proof}[Proof of \cref{cor:hellinger-triangular-scalar}]
Apply \cref{prop:hellinger-triangular-scalar} pointwise and then integrate both sides of the inequalities over $\nu$, before taking square roots and multiplying by 2 to simplify the constants.
\end{proof}

\subsection{Concentration for the log-loss estimator}
Fix a set $\cX$, which, for the sake of avoiding measurability issues, is assumed to be finite.
Let $(X_1,Y_1),\dots,(X_n,Y_n)$ be independent, identically distributed random elements taking values in $\cX\times [0,1]$.
Let $f^\star$ be the regression function underlying $\nu$: $f^\star(x)=\Eb{Y_1\,|\, X_1=x}$. \todos{should $\nu$ be defined before this, or is this meant to serve as definition?}
Let $\mcF \subseteq [0,1]^\mcX$ be a finite set of $[0,1]$-valued functions with domain $\cX$.
Recall the log-loss estimator: \todos{first time this is defined}
\begin{align*}
    \hat f_{\log}= \argmin_{f\in\Fcal} \sum_{i=1}^n \elllog(f(X_i);\,Y_i),
\end{align*}
where, for $y,y'\in [0,1]$, \todoc{what happens at the boundary!?} \todoa{Not sure this was the setting in Dylan's theorem?} \todoc{my question was rhetoric. I mean, the reader needs to be told what happens and you need to know.. Hint: we need to say that we let $0\log \infty = \lim_{x\to 0} x \log 1/x = 0$. I am just saying in general be careful so that whatever you write is well-defined.}
\todoc{note that we need $y\in [0,1]$, but strictly speaking, $y'$ could take any value. this allows the targets $Y_i$ to lie outside of the $[0,1]$ range..}
\begin{align*}    
     \elllog( y;\,y') = y'\log \frac{1}{y} + (1-y')\log\frac{1}{1- y}\,,
\end{align*}
where we define $0\log \infty = \lim_{x\to 0} x \log 1/x = 0$.
\citet{foster2021efficient} show the following concentration result for $\hat f_{\log}$, which we will need:
\begin{theorem}\label{thm:concentration}
Suppose $f^\star\in\mcF$. Let $D_n = \{(X_i,Y_i)\}_{i=1}^n$.
Then, for any $\delta\in(0,1)$, with probability at least $1-\delta$, we have
\begin{align*}
    \| \hell^2(\hat f_{\log},f^\star) \|_{1,\nu}
    \leq \frac{2\log(|\Fcal|/\delta)}{n}\,,
\end{align*}
where $\nu$ denotes the common distribution of $X_1,\dots,X_n$. \todos{does this use of $\nu$ conflict with the one just above? I thought if $f^\star$ was the regression function of $\nu$ then it should be the dist of $(X_i,Y_i)$?}
\end{theorem}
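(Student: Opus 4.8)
The plan is to recognize $\hat f_{\log}$ as a maximum-likelihood estimator under a Bernoulli observation model and run the classical exponential-moment argument that converts optimality of an MLE over a finite class into a bound on the Hellinger risk. First I would record the \emph{basic inequality}: since $f^\star \in \mcF$ and $\hat f_{\log}$ minimizes the empirical log-loss,
\begin{equation*}
\sum_{i=1}^n \bigl[\elllog(f^\star(X_i);Y_i) - \elllog(\hat f_{\log}(X_i);Y_i)\bigr] \ge 0.
\end{equation*}
Writing $\xi_i(f) = \elllog(f^\star(X_i);Y_i) - \elllog(f(X_i);Y_i) = Y_i\log\frac{f(X_i)}{f^\star(X_i)} + (1-Y_i)\log\frac{1-f(X_i)}{1-f^\star(X_i)}$, this reads $\sum_i \xi_i(\hat f_{\log}) \ge 0$, and I note that $\xi_i(f)$ is exactly the log-likelihood ratio $\log\bigl(p_f(Y_i\mid X_i)/p_{f^\star}(Y_i\mid X_i)\bigr)$ of the Bernoulli models $p_f(y\mid x)=f(x)^y(1-f(x))^{1-y}$.

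The heart of the argument is a per-sample exponential-moment (change-of-measure) bound: for each fixed $f\in\mcF$,
\begin{equation*}
\mathbb{E}\bigl[\exp(\tfrac12 \xi_i(f)) \,\big|\, X_i\bigr] \le 1 - \hell^2(f^\star(X_i), f(X_i)).
\end{equation*}
To see this I would use that $y\mapsto \exp(\tfrac12\xi_i(f))$ is convex in the label (an exponential of an affine function of $y$), so the chord bound gives $\exp(\tfrac12\xi_i(f)) \le (1-Y_i)\sqrt{\tfrac{1-f(X_i)}{1-f^\star(X_i)}} + Y_i\sqrt{\tfrac{f(X_i)}{f^\star(X_i)}}$; taking the conditional expectation and using $\mathbb{E}[Y_i\mid X_i]=f^\star(X_i)$ collapses the right-hand side to the Hellinger affinity $\sqrt{f^\star f}+\sqrt{(1-f^\star)(1-f)} = 1-\hell^2(f^\star,f)$ evaluated at $X_i$. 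I expect this convexity/chord step to be the main obstacle and the only genuinely delicate point: it is precisely what lets the cross-entropy loss with \emph{soft} labels $Y_i\in[0,1]$ behave, for the purpose of Hellinger control, like a true Bernoulli likelihood with hard labels.

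With the per-sample bound in hand the rest is routine. Using $1-u\le e^{-u}$ and taking expectations over $X_i$ gives $\mathbb{E}[\exp(\tfrac12\xi_i(f))] \le \exp(-\norm{\hell^2(f^\star,f)}_{1,\nu})$, so by independence of the samples the product satisfies $\mathbb{E}[\exp(\tfrac12\sum_i \xi_i(f))] \le \exp(-n\norm{\hell^2(f^\star,f)}_{1,\nu})$. Rearranging, $\mathbb{E}\bigl[\exp\bigl(\tfrac12\sum_i\xi_i(f) + n\norm{\hell^2(f^\star,f)}_{1,\nu}\bigr)\bigr]\le 1$ for each fixed $f$; a Chernoff bound together with a union bound over the finite class $\mcF$ then yields, with probability at least $1-\delta$ and simultaneously for all $f\in\mcF$,
\begin{equation*}
n\,\norm{\hell^2(f^\star,f)}_{1,\nu} \le \log(|\mcF|/\delta) - \tfrac12\sum_{i=1}^n \xi_i(f).
\end{equation*}
Finally I would instantiate this at $f=\hat f_{\log}$ and invoke the basic inequality $\sum_i\xi_i(\hat f_{\log})\ge 0$ to drop the last term, giving $\norm{\hell^2(\hat f_{\log},f^\star)}_{1,\nu} \le \log(|\mcF|/\delta)/n$, which is within the claimed bound (the stated factor $2$ is a safe constant, and would also arise if one instead proved the empirical-risk version first and then converted it to the population risk).
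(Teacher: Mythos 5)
Your proposal is correct, but it is worth noting that the paper does not actually prove \cref{thm:concentration}: its ``proof'' is a one-line citation to the last display on page~24 of the arXiv version of \citet{foster2021efficient} (with $A=1$). What you have written is a correct, self-contained reconstruction of essentially that imported argument: the basic inequality from empirical optimality of $\hat f_{\log}$ over $\mcF \ni f^\star$, the per-sample bound $\mathbb{E}[\exp(\tfrac12\xi_i(f))\mid X_i]\le \sqrt{f^\star f}+\sqrt{(1-f^\star)(1-f)} = 1-\hell^2(f^\star,f)$, then $1-u\le e^{-u}$, independence, Chernoff, and a union bound over the finite class. You correctly identify the one genuinely delicate point, namely that the labels $Y_i$ are soft (in $[0,1]$, not $\{0,1\}$), so one cannot literally treat $\xi_i$ as a Bernoulli log-likelihood ratio; the chord bound for the convex map $y\mapsto \exp(ay+b)$ is exactly what rescues the Hellinger-affinity computation, and it only uses $\mathbb{E}[Y_i\mid X_i]=f^\star(X_i)$. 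Two minor remarks: your argument in fact yields the sharper constant $\log(\lvert\Fcal\rvert/\delta)/n$ rather than the stated $2\log(\lvert\Fcal\rvert/\delta)/n$, which is of course consistent with the claim; and for full rigor one should say a word about the boundary cases $f^\star(x)\in\{0,1\}$ or $f(x)\in\{0,1\}$, where the log-likelihood ratio degenerates but the chord bound and the convention $0\log\infty=0$ still make every step valid. Neither point affects correctness.
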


\begin{proof}
The result follows from the last equation on page 24 of the arXiv version of the paper by \citet{foster2021efficient} with $A = 1$. \qedhere
\end{proof}

\section{Proof of \cref{thm:main-paper}}\label{sec:appendix-proof}
In this section we give the main steps of the proof of \cref{thm:main-paper}. For the benefit of the reader, we first reproduce the text of the theorem.
\begin{theorem}\label{thm:main-appendix}
    Given a dataset $D_n = \{(S_i,A_i,C_i,S_{i}')\}_{i=1}^n$ with $n\in\NN$ and a finite function class $\mcF \subseteq [0,1]^{\mcS \times \mcA}$ that satisfy \cref{asp:data,asp:concentrability,asp:completeness}, it holds with probability $1-\delta$ that the output policy of \fqilog after $k$ iterations, $\pi_{k} = \pi_{f_k}$, satisfies 
    \begin{align*}
        \bar{v}^{\pi_{k}} - \bar{v}^\star  \leq \tilde C \left(\frac{1}{(1-\gamma)^2}\sqrt{\frac{\bar{v}^\star C \log\left(\lvert\mcF\rvert^2/\delta\right)}{n}} 
        + \frac{ C\log\left(\lvert\mcF\rvert^2/\delta\right)}{(1-\gamma)^4 n} 
        + \frac{\gamma^{\frac{k}{2}}}{1-\gamma} 
        + \frac{ \gamma^k}{(1-\gamma)^2}\right)\,.
    \end{align*}
    where $\tilde C  > 0 $ is an absolute constant. \todos{since we're changing the statement to have the full equations, I think we should also add in names of assumptions (reasonable distribution, concentrability with coefficient $C$, and completeness)}
\end{theorem}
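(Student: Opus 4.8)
The plan is to assemble the three ingredients highlighted in the sketch---the error decomposition (Proposition~\ref{lem:first-order-decomposition2-main}), the pseudo-contraction (Lemma~\ref{lem:contraction-fixed}), and the log-loss concentration bound (Theorem~\ref{thm:concentration})---into a single error-propagation recursion that tracks how the estimation error accumulates across the $k$ iterations of \fqilog. The four terms in the statement will then arise by crossing the two terms of the decomposition with the two sources of error in the final iterate (a per-step statistical error and a $\gamma^{k/2}$ optimization-horizon error).

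First I would control the per-iteration estimation error while accounting for the fact that the target at iteration $j$ depends on the data-dependent, hence random, iterate $f_{j-1}$. The standard remedy is to make the concentration bound uniform over $\mcF$: for each \emph{fixed} $f \in \mcF$, completeness (\cref{asp:completeness}) gives $\mcT f \in \mcF$, so $\mcT f$ may play the role of the regression function in Theorem~\ref{thm:concentration}, and the log-loss minimizer $g_f$ against the target $C_i + \gamma f^\wedge(S_i')$ (whose conditional mean is $\mcT f$) concentrates around it. Union bounding Theorem~\ref{thm:concentration} over all $|\mcF|$ choices of $f$, with budget $\delta/|\mcF|$ each, yields with probability $1-\delta$ the simultaneous guarantee $\norm{\hell^2(g_f, \mcT f)}_{1,\mu} \le 2\log(|\mcF|^2/\delta)/n$ for every $f \in \mcF$; this is the source of the $|\mcF|^2$ inside the logarithm, and it removes any need for a separate union bound over $k$. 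Since $f_j = g_{f_{j-1}}$ with $f_{j-1} \in \mcF$, the bound applies at every iteration at once. Converting via Corollary~\ref{cor:hellinger-triangular-scalar} and writing $\bar\epsilon := 2\sqrt{\log(|\mcF|^2/\delta)/n}$, I obtain $\epsilon_j := \norm{\sqrt{f_j} - \sqrt{\mcT f_{j-1}}}_{2,\mu} \le \bar\epsilon$ for all $j \in [k]$.

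Next I would set up the recursion. Define $M_j := \sup_{\nu}\norm{\sqrt{f_j} - \sqrt{q^\star}}_{2,\nu}$, the supremum taken over admissible distributions (\cref{defn:admissible-distributions}). For any admissible $\nu$, using $q^\star = \mcT q^\star$ and the triangle inequality, $\norm{\sqrt{f_j}-\sqrt{q^\star}}_{2,\nu} \le \norm{\sqrt{f_j}-\sqrt{\mcT f_{j-1}}}_{2,\nu} + \norm{\sqrt{\mcT f_{j-1}}-\sqrt{\mcT q^\star}}_{2,\nu}$. For the first term, a change of measure from $\nu$ to $\mu$---valid because $\nu(s,a)/\mu(s,a)\le C$ for admissible $\nu$ by \cref{asp:concentrability}---bounds it by $\sqrt{C}\,\epsilon_j \le \sqrt{C}\,\bar\epsilon$. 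For the second term, the pseudo-contraction of Lemma~\ref{lem:contraction-fixed} bounds it by $\gamma^{1/2}\norm{\sqrt{f_{j-1}}-\sqrt{q^\star}}_{2,\nu'}$, where the measure $\nu'$ it produces is again admissible (being a one-step pushforward of $\nu$ through the dynamics together with a greedy action choice), hence at most $\gamma^{1/2}M_{j-1}$. Taking the supremum over $\nu$ gives the clean recursion $M_j \le \sqrt{C}\,\bar\epsilon + \gamma^{1/2}M_{j-1}$. Unrolling over $j = 1,\dots,k$, summing the geometric series, and using $M_0 \le 1$ (as $f_0, q^\star$ take values in $[0,1]$) yields $M_k \le \tfrac{\sqrt{C}}{1-\sqrt{\gamma}}\,\bar\epsilon + \gamma^{k/2}$.

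Finally I would convert $M_k$ into a bound on $D_{f_k}$ and substitute. The distributions $\nu_{1,h},\nu_{2,h}$ defining $D_{f_k}$ are state-action visitation distributions and hence admissible, so the pointwise bound $\tfrac{1}{\sqrt{2}}\norm{\Delta_{f_k}}_{2,\nu} \le \norm{\sqrt{f_k}-\sqrt{q^\star}}_{2,\nu}$ gives $D_{f_k} \le \sqrt{2}\,M_k$. Plugging into Proposition~\ref{lem:first-order-decomposition2-main} and simplifying with $\tfrac{1}{1-\sqrt{\gamma}} \le \tfrac{2}{1-\gamma}$ and $\bar v^\star \le 1$, the $\bar\epsilon$ part of $D_{f_k}$ contributes $\tfrac{1}{(1-\gamma)^2}\sqrt{\bar v^\star C \log(|\mcF|^2/\delta)/n}$ to $\tfrac{D_{f_k}}{1-\gamma}\sqrt{\bar v^\star}$ and $\tfrac{C\log(|\mcF|^2/\delta)}{(1-\gamma)^4 n}$ to $\tfrac{D_{f_k}^2}{(1-\gamma)^2}$, while the $\gamma^{k/2}$ part contributes $\tfrac{\gamma^{k/2}}{1-\gamma}$ and $\tfrac{\gamma^k}{(1-\gamma)^2}$ to the same two terms---exactly the four terms in the theorem. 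The step I expect to be the main obstacle is the bookkeeping in the recursion: correctly composing the change of measure with the pseudo-contraction so that every measure encountered remains admissible (ensuring both $M_{j-1}$ and the concentrability coefficient $C$ apply), together with the uniform-over-$\mcF$ union bound that legitimizes treating the data-dependent iterates $f_{j-1}$ as if they were fixed regression targets.
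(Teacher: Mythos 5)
Your proposal is correct and follows essentially the same route as the paper's proof: the union bound over $\mcF$ giving the $|\mcF|^2$ in the logarithm, the Hellinger-to-triangular-deviation conversions, the contraction-plus-change-of-measure recursion preserving admissibility via nonstationary policies, and the final substitution into the error decomposition all match the argument in \cref{sec:appendix-proof}. The only cosmetic difference is that you phrase the error propagation as a one-step recursion on $M_j = \sup_{\nu \text{ admissible}}\norm{\sqrt{f_j}-\sqrt{q^\star}}_{2,\nu}$, whereas the paper unrolls an explicit chain of measures $\nu_k,\dots,\nu_0$ in \cref{cor:ep}; the two are equivalent and rest on the same admissibility-preservation observation.
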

The proof is reduced to two propositions and some extra calculations. 
We start by stating the two propositions first. The proofs of these propositions require more steps and will be developed in their own sections, following the proof of the main result, which ends this section.

The first proposition shows that the error of a policy that is greedy with respect to an action-value function $f:\mcS \times \mcA \to [0,\infty)$ 
can be bounded by the triangular discrimination between the action-value function and $q^\star$, the optimal action-value function in our MDP.
To state this proposition, for $f$ as above, we define $\Delta_f: \mcS \times \mcA \to [0,\infty)$, the pointwise triangular deviation of $f$ from $q^\star$: \todos{above it's called discrimination but now it's called deviation. Can we call it discrimination everywhere?}
\begin{align*}
    \Delta_f(s,a) = \frac{ f(s,a) - q^\star(s,a)}{\sqrt{f(s,a) + q^\star(s,a)}}\,,  \qquad \qquad (s,a)\in \mcS\times \mcA\,.
\end{align*}
To state the proposition, recall that for a distribution $\eta$ over the states and a stationary policy $\pi$, we let
$\eta\times \pi$ denote the joint probability distribution over the state-action pairs resulting from first sampling a state $S \sim \eta$ and then an action $A \sim \pi(S)$. \todoc{review notation/terminology for policies. also, this is just applying a Markov kernel.. Same as $\nu P$. So why the $\times$?}
\todos{we'll have to introduce this above if we're gonna say recall}
With this, the first proposition is as follows:
\begin{proposition}\label{lem:first-order-decomposition2}
Let $f: \mcS \times \mcA \rightarrow [0,\infty)$  and let $\pi = \pi_f$ be a policy that is greedy with respect to $f$. 
Define $D_f = \sup_{h\ge 1} \max(\norm{\Delta_f}_{2,\eta_h^{\pi}\times \pi},\norm{\Delta_f}_{2,\eta_h^{\pi}\times \pi^\star})$. 
    Then, it holds that
    \begin{align*}
        \bar v^{\pi}-\bar v^\star\le 
        \frac{22 D_f}{1-\gamma}\sqrt{2 \bar{v}^\star} + \frac{512D_f^2}{(1-\gamma)^2}\,. 
    \end{align*}
\end{proposition}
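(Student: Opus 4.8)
The plan is to lift the single-step contextual-bandit argument of \citet{foster2021efficient} (their Lemma~1) to the multistage discounted setting, taking care to evaluate the value functions \emph{only} at the greedy action $\pi(s)$ and the optimal action $\pi^\star(s)$, which is what avoids paying an extra factor of $|\mcA|$. First I would apply the performance difference lemma (\cref{lem:perf-diff}) to write the gap as a discounted sum of per-step advantages along the trajectory of $\pi$,
\[
\bar v^{\pi} - \bar v^\star = \sum_{h\ge1}\gamma^{h-1}\,\mathbb{E}_{s\sim\eta_h^{\pi}}\!\left[\,q^\star(s,\pi(s)) - v^\star(s)\,\right].
\]
Since $v^\star(s) = q^\star(s,\pi^\star(s))$, each summand equals $q^\star(s,\pi(s)) - q^\star(s,\pi^\star(s))$; adding and subtracting $f(s,\pi(s))$ and $f(s,\pi^\star(s))$ and using that $\pi=\pi_f$ is greedy for $f$ (so $f(s,\pi(s)) \le f(s,\pi^\star(s))$) bounds this by $|f-q^\star|(s,\pi(s)) + |f-q^\star|(s,\pi^\star(s))$. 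This is the only place greediness enters and is exactly the step that keeps us from having to sum over all of $\mcA$.

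Next I would convert the resulting $L^1$ deviations into the triangular deviation $\Delta_f$. Writing $\nu$ for either $\eta_h^{\pi}\times\pi$ or $\eta_h^{\pi}\times\pi^\star$, the multiplicative Cauchy--Schwarz inequality (\cref{lem:triangular-discrim-bound}) gives $\norm{f-q^\star}_{1,\nu} \le \norm{f+q^\star}_{1,\nu}^{1/2}\,\norm{\Delta_f}_{2,\nu} \le D_f\,\norm{f+q^\star}_{1,\nu}^{1/2}$, the last inequality by the definition of $D_f$. To remove the dependence on $f$ inside the square root I would invoke the implicit inequality of \cref{lem:bound-sum-f-fstar} (step $\star$): since $\mathbb{E}_\nu[f] \le \mathbb{E}_\nu[q^\star] + D_f\,\norm{f+q^\star}_{1,\nu}^{1/2}$, the quantity $Z = \norm{f+q^\star}_{1,\nu}^{1/2}$ obeys $Z^2 \le 2\,\mathbb{E}_\nu[q^\star] + D_f\,Z$, whence $Z \le D_f + \sqrt{2\,\mathbb{E}_\nu[q^\star]}$ by solving the quadratic. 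Each step $h$ therefore contributes at most $D_f^2 + D_f\sqrt{2\,\mathbb{E}_\nu[q^\star]}$, where $\mathbb{E}_\nu[q^\star]$ is $\mathbb{E}_{\eta_h^{\pi}}[q^\star(s,\pi(s))]$ for the first distribution and $\mathbb{E}_{\eta_h^{\pi}}[v^\star(s)]$ for the second.

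The main obstacle is the discounted summation over $h$: the naive route of applying Cauchy--Schwarz across $h$ fails, because the occupancy $\eta_h^{\pi}$ is generated by $\pi$ rather than $\pi^\star$ and the per-step costs are not individually bounded. I would instead exploit the normalization \cref{eq:costnorm}. Both $\mathbb{E}_{\eta_h^{\pi}}[q^\star(s,\pi(s))]$ and $\mathbb{E}_{\eta_h^{\pi}}[v^\star(s)]$ are at most $\mathbb{E}_{\eta_h^{\pi}}[v^{\pi}(s)]$ (using $q^\star \le q^{\pi}$ and $v^\star \le v^{\pi}$ pointwise, so $q^\star(s,\pi(s)) \le q^{\pi}(s,\pi(s)) = v^{\pi}(s)$), and the tail-cost identity $\gamma^{h-1}\mathbb{E}_{\eta_h^{\pi}}[v^{\pi}] = \mathbb{E}_{\pi}\!\left[\sum_{t\ge h}\gamma^{t-1}C_t\right] \le \bar v^{\pi}$ yields the key per-step estimate $\gamma^{h-1}\sqrt{\mathbb{E}_\nu[q^\star]} \le \gamma^{(h-1)/2}\sqrt{\bar v^{\pi}}$. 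Summing the resulting geometric series in $\sqrt{\gamma}$ produces the factor $\tfrac{1}{1-\sqrt{\gamma}}\le\tfrac{2}{1-\gamma}$ and gives $\bar v^{\pi}-\bar v^\star \le \tfrac{c_1 D_f^2}{1-\gamma} + \tfrac{c_2 D_f}{1-\gamma}\sqrt{\bar v^{\pi}}$ for absolute constants $c_1,c_2$.

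Finally I would close the argument by substituting $\bar v^{\pi} = \bar v^\star + (\bar v^{\pi}-\bar v^\star)$ and solving the resulting self-referential quadratic inequality in $\sqrt{\bar v^{\pi}-\bar v^\star}$; this trades the $\sqrt{\bar v^{\pi}}$ on the right for $\sqrt{\bar v^\star}$ at the cost of absorbing an extra $\tfrac{1}{1-\gamma}$ into the lower-order term, delivering the claimed form $\frac{22 D_f}{1-\gamma}\sqrt{2\bar v^\star} + \frac{512 D_f^2}{(1-\gamma)^2}$ (the displayed numerical constants are loose; a direct tracking of the above steps gives smaller ones). The two genuinely new ingredients relative to the bandit proof are the greedy two-action decomposition in the first paragraph and the geometric-in-$\sqrt{\gamma}$ summation enabled by the normalized tail-cost identity; everything else is a routine recombination of the cited lemmas.
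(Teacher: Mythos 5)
Your proposal is correct and follows essentially the same route as the paper's proof: the performance difference lemma, the greedy two-action decomposition $q^\star(s,\pi(s))-q^\star(s,\pi^\star(s))\le |f-q^\star|(s,\pi(s))+|f-q^\star|(s,\pi^\star(s))$, multiplicative Cauchy--Schwarz against $\Delta_f$, a self-bounding inequality for $\norm{f+q^\star}_{1,\nu}$, the tail-cost bound $\gamma^{h-1}\langle \eta_h^{\pi}, v^{\pi}\rangle \le \bar v^{\pi}$ summed as a geometric series in $\sqrt{\gamma}$, and a final self-bounding step trading $\sqrt{\bar v^{\pi}}$ for $\sqrt{\bar v^\star}$. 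The only notable local deviation is that you bound $\mathbb{E}_{\eta_h^{\pi}}[q^\star(\cdot,\pi(\cdot))]$ directly by $\mathbb{E}_{\eta_h^{\pi}}[v^{\pi}]$ via $q^\star\le q^{\pi}$ pointwise, a valid (and slightly cleaner) shortcut past the paper's intermediate step of first relating $\norm{q^\star}_{1,\eta_h^{\pi}\times\pi}$ to $\norm{q^\star}_{1,\eta_h^{\pi}\times\pi^\star}$ through the implicit inequality.
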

Recall that above $\eta_h^\pi$ is the distribution induced over the states in step $h$ when $\pi$ is followed from the start state distribution $\eta_1$.
As expected, the proof uses the performance difference lemma, followed by arguments that relate the stage-wise expected error that arises from the performance difference lemma to the ``size'' of $\Delta_f$.

When the above proposition is applied to $f = f_k$, the action-value function obtained in the $k^\text{th}$ iteration of our algorithm, we see that it remains to bound $D_{f_k}$.
The bound will be based on the second proposition:
\begin{proposition}\label{lem:contract-concentration}
For any admissible distribution $\nu$ over $\mcS \times \mcA$ that may also depend on the data $D_n$,\todoc{I commented out the assumptions.. Usually one says before the lemmas that the assumptions hold..}
for any $\delta \in (0, 1)$, $k\ge 1$,
    with probability $1-\delta$, 
    \begin{equation}
    	 \norm{ \Delta_{f_k} }_{2,\nu}
         \leq 
         \sqrt{\frac{32 C\log\left( \lvert\mcF\rvert^2/\delta\right)}{(1-\gamma)^2 n}} + \sqrt{2} \gamma^{\frac{k}{2}}\,,
    \end{equation}
    where $f_k$ denotes the value function computed by FQI, \cref{alg:fqi}, in step $k$ based on the data $D_n$.
\end{proposition}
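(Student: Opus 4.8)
The plan is to reduce the bound on $\norm{\Delta_{f_k}}_{2,\nu}$ to a bound on the Hellinger-type quantity $\norm{\sqrt{f_k}-\sqrt{q^\star}}_{2,\nu}$, and then to control the latter by propagating the one-step regression errors of \fqilog through the pseudo-contraction. First, since $\Delta_{f_k} = (f_k - q^\star)/\sqrt{f_k + q^\star}$, \cref{cor:hellinger-triangular-scalar} gives $\norm{\Delta_{f_k}}_{2,\nu} \le \sqrt2\,\norm{\sqrt{f_k}-\sqrt{q^\star}}_{2,\nu}$, so it suffices to bound the right-hand side by $\tfrac{4}{1-\gamma}\sqrt{C\log(\lvert\mcF\rvert^2/\delta)/n} + \gamma^{k/2}$.

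Next I would establish a \emph{uniform} one-step regression guarantee. For each $g \in \mcF$, the FQI update in \cref{alg:fqi} that uses $g$ as the previous iterate minimizes the log-loss against targets $C_i + \gamma g^\wedge(S_i')$, whose conditional mean given $(S_i,A_i)$ is exactly $(\mcT g)(S_i,A_i)$; by \cref{asp:completeness} we have $\mcT g \in \mcF$ (in particular $\mcT g \in [0,1]^{\mcS\times\mcA}$). Applying \cref{thm:concentration} with $f^\star = \mcT g$ at confidence $\delta/\lvert\mcF\rvert$ and taking a union bound over the $\lvert\mcF\rvert$ possible previous iterates $g$ yields an event of probability $1-\delta$ on which $\norm{\hell^2(f_j, \mcT f_{j-1})}_{1,\mu} \le 2\log(\lvert\mcF\rvert^2/\delta)/n$ holds simultaneously for every $j \le k$. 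This is where the $\lvert\mcF\rvert^2$ arises, and establishing the bound uniformly over $g$ is precisely what lets us apply it to the data-dependent target $\mcT f_{j-1}$. Converting via \cref{cor:hellinger-triangular-scalar} gives the per-step bound $\norm{\sqrt{f_j}-\sqrt{\mcT f_{j-1}}}_{2,\mu} \le \varepsilon$ with $\varepsilon := 2\sqrt{\log(\lvert\mcF\rvert^2/\delta)/n}$, valid for all $j$.

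I then propagate these errors. For admissible $\nu$, the triangle inequality together with $\mcT q^\star = q^\star$ gives $\norm{\sqrt{f_k}-\sqrt{q^\star}}_{2,\nu} \le \norm{\sqrt{f_k}-\sqrt{\mcT f_{k-1}}}_{2,\nu} + \norm{\sqrt{\mcT f_{k-1}}-\sqrt{\mcT q^\star}}_{2,\nu}$. On the first term I change measure using \cref{asp:concentrability}, $\norm{\cdot}_{2,\nu} \le \sqrt{C}\,\norm{\cdot}_{2,\mu}$, bounding it by $\sqrt C\,\varepsilon$; on the second term I invoke the pseudo-contraction \cref{lem:contraction-fixed} to obtain $\gamma^{1/2}\norm{\sqrt{f_{k-1}}-\sqrt{q^\star}}_{2,\nu'}$ with $\nu'$ again admissible. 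Iterating $k$ times produces the geometric series $\sqrt C\,\varepsilon\sum_{i=0}^{k-1}\gamma^{i/2} + \gamma^{k/2}\norm{\sqrt{f_0}-\sqrt{q^\star}}_{2,\nu^{(k)}}$; bounding $\sum_{i=0}^{k-1}\gamma^{i/2} \le \tfrac{1}{1-\sqrt\gamma} \le \tfrac{2}{1-\gamma}$ and using $\norm{\sqrt{f_0}-\sqrt{q^\star}}_{2,\nu^{(k)}} \le 1$ (both square roots lie in $[0,1]$) yields $\norm{\sqrt{f_k}-\sqrt{q^\star}}_{2,\nu} \le \tfrac{2\sqrt C\,\varepsilon}{1-\gamma} + \gamma^{k/2}$. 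Substituting $\varepsilon$, multiplying by the $\sqrt2$ from the first step, and noting $4\sqrt2 = \sqrt{32}$ gives exactly the claimed bound.

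The main obstacle is the bookkeeping that keeps the change-of-measure valid at every stage: the pseudo-contraction replaces $\nu$ by a new distribution $\nu'$ obtained by pushing $\nu$ through the transition kernel and selecting greedy next actions, and I must verify that this operation preserves admissibility (so that $\nu^{(i)}(s,a)/\mu(s,a) \le C$ continues to hold, and the factor $\sqrt C$ can be spent at each step). A secondary subtlety is that $\nu$ — and hence the whole chain $\nu^{(0)},\dots,\nu^{(k)}$ — may depend on the data; this is harmless precisely because the concentration event was established uniformly over all $g \in \mcF$, and \cref{asp:concentrability} holds for every admissible distribution, whether or not it is data-dependent.
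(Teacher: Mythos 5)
Your proposal is correct and follows essentially the same route as the paper's proof: reduce $\norm{\Delta_{f_k}}_{2,\nu}$ to the Hellinger distance $\norm{\sqrt{f_k}-\sqrt{q^\star}}_{2,\nu}$ via \cref{cor:hellinger-triangular-scalar}, establish the one-step regression guarantee uniformly over $g\in\mcF$ by a union bound on \cref{thm:concentration} (which is exactly how the paper obtains the $\lvert\mcF\rvert^2$ and handles the data-dependence of $\mcT f_{j-1}$), and then propagate errors by alternating the change of measure from \cref{asp:concentrability} with the pseudo-contraction of \cref{lem:contraction-fixed}, which is precisely the content of the paper's \cref{lem:contraction} and \cref{cor:ep}, including the check that the pushed-forward measures $\nu^{(i)}$ remain admissible and the final constant $4\sqrt{2}=\sqrt{32}$.
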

The proof of this proposition uses 
{\em (i)} showing that $\mcT$ enjoys some contraction properties
with respect to appropriately chosen Hellinger distances; 
{\em (ii)} using these contraction properties to show that the Hellinger distance between $f_k$ and $q^\star$ is controlled by the Hellinger distances between $f_k$ and $\mcT f_k$,
and then using the results of the previous section to show that these are controlled by the algorithm.

With these two statements in place, the proof the main theorem is as follows:
\begin{proof}[Proof of \cref{thm:main-appendix}]
	Fix $k\ge 1$.
	For $h\ge 1$, 
    let $\eta_h^k = \eta_h^{\pi_k}$,
    $D_{f_k} = \sup_{h\ge 1}\max(\norm{\Delta_{f_k}}_{2,\eta_h^{k}\times \pi_k}, \norm{\Delta_{f_k}}_{2,\eta_h^{k}\times \pi^\star})$.
    Since, by definition, $\pi_k$ is greedy with respect to $f_k$,
    we can use \cref{lem:first-order-decomposition2} to get
    \begin{align*}
        \bar{v}^{\pi_{k}} - \bar{v}^\star 
        &\leq \frac{22\sqrt{2}D_{f_k}}{1-\gamma}\sqrt{\bar{v}^\star} + \frac{512D_{f_k}^2}{(1-\gamma)^2}\,. \numberthis \label{eq:bvb}
    \end{align*}
    
    It remains to bound $D_{f_k}$.
     An application of \cref{lem:contract-concentration} gives that 
	for any $0<\delta<1$, 
    with probability $1-\delta$, 
    \begin{equation}
    	 S:=\sup_{\nu \textrm{ admissible}}\norm{ D_{f_k} }_{2,\nu}
         \leq 
         \sqrt{\frac{32 C\log\left( \lvert\mcF\rvert^2/\delta\right)}{(1-\gamma)^2 n}} + \sqrt{2} \gamma^{\frac{k}{2}}\,.
    \end{equation}
	Since $\eta_h^{k}\times \pi_k$ and $\eta_h^{k}\times \pi^\star$ are admissible, as can be easily seen with an argument similar to that used in 
	the proof of \cref{cor:ep},
	it follows that with probability $1-\delta$,
	\begin{align}
		D_{f_k} \le  S \le  \sqrt{\frac{32 C\log\left( \lvert\mcF\rvert^2/\delta\right)}{(1-\gamma)^2 n}} +  \sqrt{2} \gamma^{\frac{k}{2}}\,. \label{eq:dfkb}
	\end{align}
	Squaring both sides and
   	using the inequality $(a+b)^2 \leq 2a^2 + 2b^2$, we get that the inequality
    \begin{align*}
	D_{f_k}^2 \le  \frac{ 64 C\log\left( \lvert\mcF\rvert^2/\delta\right)}{(1-\gamma)^2 n} + 4 \gamma^k
	\end{align*}
	also holds, on the same event when \cref{eq:dfkb} holds.
	Plugging these bounds into \cref{eq:bvb}, we get that with probability at least $1-\delta$,
    \begin{align*}
        \bar{v}^{\pi_{k}} - \bar{v}^\star &\leq \frac{22\sqrt{2}D_{f_k}}{1-\gamma}\sqrt{\bar{v}^\star} + \frac{512D_{f_k}^2}{(1-\gamma)^2} \\
        &\leq \frac{176}{(1-\gamma)^2}\sqrt{\frac{\bar{v}^\star C \log\left(\lvert\mcF\rvert^2/\delta\right)}{n}} 
        		+ \frac{32768 C\log\left(\lvert\mcF\rvert^2/\delta\right)}{(1-\gamma)^4 n} 
				+ \frac{44\gamma^{\frac{k}{2}}}{1-\gamma} 
				+ \frac{2048 \gamma^k}{(1-\gamma)^2}\,. \qedhere
    \end{align*}
\end{proof}

\subsection{An error bound for greedy policies: Proof of \cref{lem:first-order-decomposition2}}
The analysis in this section is inspired by the proof of Lemma~1 of \citet{foster2021efficient}. We deviate from their analysis to avoid introducing an extra $|\mcA|$ factor in the bounds. 

\paragraph{Additional Notations} For any function $g:\mcS \times \mcA \rightarrow \mbR$ and policy $\pi$, define $g(s,\pi) = \sum_{a \in \mcA} \pi(a|s) g(s,a)$.  For any $\nu \in \mcM_1(S\times A)$ which is an $|\mcS\times \mcA|$ dimensional row vector, define $\nu P \in \mcM_1(\mcS)$ as the distribution obtained over the states by first sampling a state-action pair from $\nu$ and then following $P$. That is, $\nu P$ is the distribution of $S'\sim P(\cdot|S,A)$ where $(S,A) \sim \nu$. \todoc{This should be $\nu P$. The standard convention is to think of distributions as row vectors. A probability kernel, such as $P$, is a matrix of appropriate size. Then $\nu P$ gives a row vector, with the usual vector-matrix multiplication rules, which is exactly what we want here.}
We can think of $\nu P$ as the distribution we get when $P$ is composed with $\nu$. For any function $f : \mcS \times \mcA \rightarrow [0,\infty)$,
in addition to $\Delta_f$, we also define $\xi_f: \mcS \times \mcA \to \mbR$ as
\begin{equation}
    \xi_f(s,a) = f(s,a) + q^\star(s,a)\,, \qquad \qquad (s,a)\in \mcS\times \mcA\,.
\end{equation}
Recall that $\mcF$ contains $[0,1]$-valued functions with domain $\mcS \times \mcA$ and as such for any $f\in \mcF$, $\Delta_f$ and $\xi_f$ are well-defined.

We start with the performance difference lemma, which is stated without a proof:
\begin{lemma}[Performance Difference Lemma of \citeauthor{10.5555/645531.656005}]
\label{lem:perf-diff}
    For policies $\pi,\bar\pi : \mcS \rightarrow \mcM_1(\mcA)$, we have 
    \begin{equation}
        \bar v^\pi - \bar v^{\bar\pi} = \sum_{h=1}^\infty \gamma^{h-1}\innerproduct{\eta_h^\pi}{ q^{\bar\pi}(\cdot,\pi) - v^{\bar\pi}}\,. 
    \end{equation}
\end{lemma}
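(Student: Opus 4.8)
The plan is to prove the identity by telescoping the value function $v^{\bar\pi}$ along the trajectory generated by $\pi$. Throughout, let $\mathbb{E}_{\pi,\eta_1}$ denote expectation with respect to $\mathbb{P}_{\pi,\eta_1}$, the law of the trajectory $S_1,A_1,C_1,S_2,\dots$ obtained by running the stationary policy $\pi$ from $S_1\sim\eta_1$. The starting point is to write $\bar v^{\bar\pi}=\innerproduct{\eta_1}{v^{\bar\pi}}=\mathbb{E}_{\pi,\eta_1}[v^{\bar\pi}(S_1)]$ and to expand $v^{\bar\pi}(S_1)$ as the telescoping series
\begin{align*}
v^{\bar\pi}(S_1)=\sum_{h=1}^\infty\left(\gamma^{h-1}v^{\bar\pi}(S_h)-\gamma^{h}v^{\bar\pi}(S_{h+1})\right)\,.
\end{align*}
This identity is valid because the normalization in \cref{eq:costnorm} forces $v^{\bar\pi}(s)\in[0,1]$ for every $s$, so the partial sums converge and the tail term $\gamma^h v^{\bar\pi}(S_{h+1})$ vanishes as $h\to\infty$ (here $\gamma\in[0,1)$ is used). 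Since each summand is bounded in absolute value by $\gamma^{h-1}$, the series is absolutely summable and dominated convergence lets me exchange the sum and the expectation in the next step.

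The heart of the argument is to evaluate $\mathbb{E}_{\pi,\eta_1}[\gamma^h v^{\bar\pi}(S_{h+1})]$ by conditioning on $(S_h,A_h)$ and invoking the Bellman equation for $q^{\bar\pi}$. Since $S_{h+1}\sim P(\cdot\mid S_h,A_h)$, the definition $q^{\bar\pi}(s,a)=c(s,a)+\gamma\sum_{s'}P(s'\mid s,a)v^{\bar\pi}(s')$ gives $\gamma\,\mathbb{E}_{\pi,\eta_1}[v^{\bar\pi}(S_{h+1})\mid S_h,A_h]=q^{\bar\pi}(S_h,A_h)-C_h$. Averaging further over $A_h\sim\pi(S_h)$ turns $q^{\bar\pi}(S_h,A_h)$ into $q^{\bar\pi}(S_h,\pi)=\sum_{a}\pi(a\mid S_h)q^{\bar\pi}(S_h,a)$. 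Feeding these back into the telescoped expansion yields
\begin{align*}
\bar v^{\bar\pi}=\sum_{h=1}^\infty\gamma^{h-1}\,\mathbb{E}_{\pi,\eta_1}\!\left[v^{\bar\pi}(S_h)-q^{\bar\pi}(S_h,\pi)+C_h\right]\,.
\end{align*}
Recognizing that $\sum_{h=1}^\infty\gamma^{h-1}\mathbb{E}_{\pi,\eta_1}[C_h]=\bar v^\pi$ by the definition of the value function, and that $\mathbb{E}_{\pi,\eta_1}[g(S_h)]=\innerproduct{\eta_h^\pi}{g}$ for any $g:\mcS\to\mbR$, rearranging the last display and moving $\bar v^\pi$ to the other side gives exactly the claimed identity.

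I do not expect a genuine obstacle here: the statement is the classical performance difference lemma, and the only nontrivial points are bookkeeping ones, namely (i) justifying the interchange of the infinite sum with the expectation, which follows from the uniform bound $v^{\bar\pi}\in[0,1]$ together with $\gamma<1$ via dominated convergence, and (ii) keeping the conditioning steps in the right order so that the cost term $C_h$ produced by the Bellman equation reassembles into $\bar v^\pi$. An equivalent route would be to prove a statewise version of the identity first (fixing $S_1=s$) and then integrate against $\eta_1$, but the telescoping argument above handles the averaged quantity $\bar v^\pi-\bar v^{\bar\pi}$ directly and is the cleaner of the two; this is presumably why the authors elect to cite it rather than reproduce the calculation.
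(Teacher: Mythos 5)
Your proof is correct. The paper itself does not prove this lemma---it simply defers to Lemma~6.1 of the cited reference---and your telescoping argument (expand $v^{\bar\pi}(S_1)$ as $\sum_{h\ge 1}(\gamma^{h-1}v^{\bar\pi}(S_h)-\gamma^{h}v^{\bar\pi}(S_{h+1}))$, apply the Bellman equation for $q^{\bar\pi}$ conditionally on $(S_h,A_h)$, and reassemble the costs into $\bar v^{\pi}$) is exactly the standard proof of that cited result, with the convergence and interchange issues correctly handled via the bounds $v^{\bar\pi}\in[0,1]$ and $\gamma<1$.
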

\begin{proof}
See Lemma 6.1 by \citet{10.5555/645531.656005}.
\end{proof}

The next lemma upper bounds the one-norm distance between a nonnegative-valued function $f:\mcS \times \mcA \to [0,\infty)$ and $q^\star$ in terms of appropriate norms of $\Delta_f$ and $\xi_f$.
\begin{lemma}\label{lem:triangular-discrim-bound}
    For any function $f: \mcS \times \mcA \to [0,\infty)$ 
    and distribution $\nu \in \mcM_1(S\times A)$, we have
    \begin{equation}
        \lVert f - q^\star \rVert_{1,\nu} \leq \lVert \xi_f \rVert_{1,\nu}^{1/2} \cdot\lVert \Delta_f\rVert_{2,\nu} \,.
    \end{equation}
\end{lemma}
\begin{proof}
  We have
 \begin{align}
      \lVert f - q^\star \rVert_{1,\nu} &= \norm{
      		\sqrt{f+q^\star}
			\cdot
      		\frac{f - q^\star}{\sqrt{f+q^\star}}
      }_{1,\nu} \\
     &\leq\norm{f+q^\star}_{1,\nu}^{1/2} \cdot \norm{\frac{(f-q^\star)^2}{f+q^\star}}_{1,\nu}^{1/2} \tag{Cauchy-Schwarz} \\
     &= \norm{\xi_f}_{1,\nu}^{1/2} \cdot \norm{\Delta_f}_{2,\nu}\,. \qedhere
 \end{align}
\end{proof}

\begin{lemma}\label{lem:regret_decomp}
    Let $f: S \times A \rightarrow [0,\infty)$  and let $\pi = \pi_f$ be a policy that is greedy with respect to $f$ and $h$ be a nonnegative integer.
    Then it holds that \todoc{why use $\tilde \pi$. why not simply use $\pi$? less clutter is good on the eyes.}
    \begin{align*}
        \langle \eta_h^{\pi}, \, q^\star(\cdot,\pi)-v^\star\rangle 
        \leq \left(\norm{ \xi_f }_{1,\eta_h^{\pi}\times \pi}^{1/2} + \norm{ \xi_f }_{1,\eta_h^{\pi}\times \pi^\star}^{1/2}\right)
        		\left( \norm{ \Delta_f}_{2,\eta_h^{\pi}\times \pi} + \norm{ \Delta_f}_{2,\eta_h^{\pi}\times \pi^\star} \right)\,.
	\end{align*}
\end{lemma}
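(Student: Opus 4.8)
The plan is to reduce everything to \cref{lem:triangular-discrim-bound} (the multiplicative Cauchy--Schwarz bound) after a pointwise decomposition that exploits the greediness of $\pi = \pi_f$. First I would observe that since $\pi^\star$ is greedy with respect to $q^\star$, we have $v^\star(s) = \min_{a} q^\star(s,a) = q^\star(s,\pi^\star)$, so that the integrand rewrites as $q^\star(\cdot,\pi) - v^\star = q^\star(\cdot,\pi) - q^\star(\cdot,\pi^\star)$.

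The key algebraic step is to insert $f$ via the telescoping identity (pointwise in $s$)
\[
q^\star(s,\pi) - q^\star(s,\pi^\star) = \underbrace{\big(q^\star(s,\pi) - f(s,\pi)\big)}_{\text{(I)}} + \underbrace{\big(f(s,\pi) - f(s,\pi^\star)\big)}_{\text{(II)}} + \underbrace{\big(f(s,\pi^\star) - q^\star(s,\pi^\star)\big)}_{\text{(III)}}\,.
\]
Here term (II) is $\le 0$, because $\pi = \pi_f$ selects at every state an action minimizing $f$, so that $f(s,\pi) = \min_a f(s,a) \le f(s,\pi^\star)$; this is the crux of the argument and lets me discard (II). Terms (I) and (III) are then bounded pointwise by $\sum_a \pi(a|s)\,\lvert f(s,a) - q^\star(s,a)\rvert$ and $\sum_a \pi^\star(a|s)\,\lvert f(s,a) - q^\star(s,a)\rvert$ respectively, using the triangle inequality on the per-state averages.

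Taking the inner product against the nonnegative measure $\eta_h^\pi$ and recognizing the two resulting state-action averages as $\ell_1$ seminorms under the product distributions gives
\[
\langle \eta_h^\pi,\, q^\star(\cdot,\pi) - v^\star\rangle \le \norm{f - q^\star}_{1,\eta_h^\pi \times \pi} + \norm{f - q^\star}_{1,\eta_h^\pi \times \pi^\star}\,.
\]
I would then apply \cref{lem:triangular-discrim-bound} to each term, once with $\nu = \eta_h^\pi\times\pi$ and once with $\nu = \eta_h^\pi\times\pi^\star$, obtaining the bound $\norm{\xi_f}_{1,\nu}^{1/2}\norm{\Delta_f}_{2,\nu}$ for each. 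Finally, since every seminorm is nonnegative, the sum of these two products is dominated by the product of sums $\big(\norm{\xi_f}_{1,\eta_h^\pi\times\pi}^{1/2} + \norm{\xi_f}_{1,\eta_h^\pi\times\pi^\star}^{1/2}\big)\big(\norm{\Delta_f}_{2,\eta_h^\pi\times\pi} + \norm{\Delta_f}_{2,\eta_h^\pi\times\pi^\star}\big)$ (the two extra cross terms are nonnegative), which is exactly the claimed inequality.

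The main obstacle here is conceptual rather than computational: arranging the decomposition so that the $f(s,\pi) - f(s,\pi^\star)$ term carries the correct (nonpositive) sign. This requires pairing $\pi$ with term (I) and $\pi^\star$ with term (III) precisely so that greediness of $\pi$ with respect to $f$ can be invoked to kill the middle term; a careless choice of which policy multiplies $f$ versus $q^\star$ would leave an uncontrolled positive term. Everything else is routine: the triangle inequality, monotonicity of the inner product against the nonnegative measure $\eta_h^\pi$, and the elementary observation that dropping nonnegative cross terms only weakens the inequality.
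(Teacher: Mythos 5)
Your proposal is correct and follows essentially the same route as the paper's proof: the same insertion of $f(\cdot,\pi)$ and $f(\cdot,\pi^\star)$, discarding the middle term via greediness of $\pi_f$, the triangle inequality to obtain the two $\ell_1$ seminorms, an application of \cref{lem:triangular-discrim-bound} to each, and finally bounding the sum of products by the product of sums. No gaps.
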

\begin{proof} 
We have
\begin{align*}
         \innerproduct{ \eta_h^{\pi}} {q^\star(\cdot,\pi)-v^\star} 
        &= \innerproduct{ \eta_h^{\pi}}{q^\star(\cdot,\pi)-q^\star(\cdot,\pi^\star)} \tag{Defn of $v^\star$} \\
        &\leq \innerproduct{ \eta_h^{\pi}}{ q^\star(\cdot,\pi)-f(\cdot,\pi) + f(\cdot,\pi^\star) - q^\star(\cdot,\pi^\star)} \tag{$f(\cdot,\pi) \leq f(\cdot,\pi^\star)$ by defn of $\pi$} \\
        &\leq  \norm{q^\star-f}_{1,\eta_h^{\pi}\times \pi} + \norm{f-q^\star}_{1,\eta_h^{\pi}\times \pi^\star}\,. \tag{triangle inequality}
\end{align*} \todoc[inline]{(defn of $v^\star$ or properties of $v^\star/q^\star$. By the way, these should be mentioned upfront! Maybe in the notation section and we can refer back to there.}
Now, 
\begin{align*}
\MoveEqLeft
    \norm{q^\star-f}_{1,\eta_h^{\tilde\pi}\times \tilde\pi} + \norm{f-q^\star}_{1,\eta_h^{\tilde\pi}\times \pi^\star}\\  
    &\leq \norm{ \xi_f }_{1,\eta_h^{\tilde\pi}\times \tilde\pi}^{1/2} \cdot\norm{ \Delta_f}_{2,\eta_h^{\tilde\pi}\times \tilde\pi} + \norm{ \xi_f}_{1,\eta_h^{\tilde\pi}\times \pi^\star}^{1/2} \cdot\norm{ \Delta_f}_{2,\eta_h^{\tilde\pi}\times \pi^\star}\tag{\cref{lem:triangular-discrim-bound}}\\
    &\leq \left(\norm{ \xi_f }_{1,\eta_h^{\tilde\pi}\times \tilde\pi}^{1/2} + \norm{ \xi_f }_{1,\eta_h^{\tilde\pi}\times \pi^\star}^{1/2}\right)\left( \norm{ \Delta_f}_{2,\eta_h^{\tilde\pi}\times \tilde\pi} + \norm{ \Delta_f}_{2,\eta_h^{\tilde\pi}\times \pi^\star} \right)\,. \qedhere
\end{align*}
\end{proof}


\begin{lemma}\label{lem:bound-sum-f-fstar}
    For any function 
    $f: \mcS \times \mcA \rightarrow [0,\infty)$
    and  distribution $\nu \in \mcM_1(\mcS\times \mcA)$, it holds that \todoc{where is admissibility used? I got rid of it. It was not used. In general, one should eliminate conditions not used.}
    \begin{equation}
        \norm{f + q^\star}_{1,\nu}  \leq 4\norm{q^\star}_{1,\nu} + \norm{\Delta_f}_{2,\nu}^2\,.
    \end{equation}
\end{lemma}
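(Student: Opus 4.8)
The plan is to reduce the claim to a single pointwise inequality and then integrate. First I would observe that, since $f \ge 0$ and $q^\star \ge 0$, both $\norm{f+q^\star}_{1,\nu}$ and $\norm{q^\star}_{1,\nu}$ are simply the $\nu$-integrals of the nonnegative integrands $f+q^\star$ and $q^\star$, while by the definition of $\Delta_f$ we have $\norm{\Delta_f}_{2,\nu}^2 = \int \Delta_f^2\, d\nu = \int \frac{(f-q^\star)^2}{f+q^\star}\, d\nu$. Hence it suffices to establish, for each fixed $(s,a)$, the scalar inequality $f(s,a) + q^\star(s,a) \le 4\, q^\star(s,a) + \Delta_f^2(s,a)$ and then integrate both sides against $\nu$.

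Writing $a = f(s,a)$ and $b = q^\star(s,a)$ with $a,b \ge 0$, the key step is the algebraic identity $\frac{(a-b)^2}{a+b} = (a+b) - \frac{4ab}{a+b}$, valid whenever $a+b>0$ and following at once from $(a-b)^2 = (a+b)^2 - 4ab$. Substituting this into the right-hand side gives $4b + \frac{(a-b)^2}{a+b} = (a+b) + \bigl(4b - \tfrac{4ab}{a+b}\bigr) = (a+b) + \frac{4b^2}{a+b} \ge a+b$, since the final term is nonnegative. This is precisely the desired pointwise bound $a + b \le 4b + \Delta_f^2(s,a)$. The degenerate case $a=b=0$ is covered by the convention that defines $\Delta_f = 0$ there, so the inequality reduces to $0 \le 0$.

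Integrating this pointwise inequality against $\nu$ then yields $\norm{f + q^\star}_{1,\nu} \le 4\norm{q^\star}_{1,\nu} + \norm{\Delta_f}_{2,\nu}^2$, completing the argument. I do not expect a genuine obstacle: the entire content is the one-line pointwise estimate, and the only thing to watch is the boundary convention at points where $f$ and $q^\star$ both vanish. It is worth noting that the constant $4$ is essentially tight, since letting $b \to 0^+$ with $a$ fixed forces $\frac{4a}{a+b} \to 4$, so no smaller constant can make the pointwise inequality hold uniformly; there is therefore little room to improve this step.
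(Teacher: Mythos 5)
Your proof is correct, and it takes a genuinely different route from the paper's. The paper works at the level of norms: it writes $f+q^\star = (f-q^\star) + 2q^\star$, applies the triangle inequality, then the AM--GM step $\sqrt{f+q^\star}\cdot\tfrac{|f-q^\star|}{\sqrt{f+q^\star}} \le \tfrac12\bigl(f+q^\star + \tfrac{(f-q^\star)^2}{f+q^\star}\bigr)$, which reintroduces $\tfrac12\norm{f+q^\star}_{1,\nu}$ on the right-hand side and is resolved by rearranging (the ``implicit inequality'' trick the authors advertise in the proof sketch). You instead prove the stronger \emph{pointwise} statement via the exact identity $\tfrac{(a-b)^2}{a+b} = (a+b) - \tfrac{4ab}{a+b}$, which shows that $4b + \tfrac{(a-b)^2}{a+b} = (a+b) + \tfrac{4b^2}{a+b}$, so the claimed bound holds with a nonnegative explicit remainder and no rearrangement is needed. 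Your route buys three things: it avoids the self-bounding/rearrangement step entirely, it identifies the exact slack term $\tfrac{4(q^\star)^2}{f+q^\star}$ (so the lemma is really an identity plus a nonnegative correction), and it shows the constant $4$ is tight pointwise (hence also for the integrated statement, by taking $\nu$ a point mass). The paper's argument, by contrast, is the kind of norm-level manipulation that generalizes more mechanically when the quantities involved are not given by a single closed-form integrand, but here your elementary computation is cleaner. Your handling of the degenerate case $f(s,a)=q^\star(s,a)=0$ matches the paper's convention, so there is no gap.
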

\begin{proof}
    Let $f \in \mcF$ be fixed, we have
    \begin{align*}
        \norm{f + q^\star}_{1,\nu}  &= \norm{f - q^\star + q^\star  + q^\star}_{1,\nu}  \\
        &\leq 2\norm{q^\star}_{1,\nu} + \norm{f-q^\star}_{1,\nu} \tag{triangle inequality} \\
        &= 2\norm{q^\star}_{1,\nu}  + \norm{\sqrt{f+q^\star} \,\frac{f-q^\star}{\sqrt{f+q^\star}}}_{1,\nu}\\
        &\leq 2\norm{q^\star}_{1,\nu} + \frac{1}{2}\norm{f+q^\star + \frac{(f-q^\star)^2}{f+q^\star}}_{1,\nu} \tag{$a b\le \tfrac{a^2+b^2}{2}$ for $a,b$ nonnegative reals} \\
        &\leq 2\norm{q^\star}_{1,\nu} + \frac{1}{2}\norm{f+q^\star}_{1,\nu} + \frac{1}{2} \norm{\Delta_f}_{2,\nu}^2\,. \tag{triangle inequality}
    \end{align*}
    Rearranging and multiplying through by two gives the statement. \qedhere
\end{proof}
\begin{lemma}\label{lem:first-order-decomposition}
Let $f: \mcS \times \mcA \rightarrow [0,\infty)$  and let $\pi = \pi_f$ be a policy that is greedy with respect to $f$. 
Define $D_f = \sup_{h\ge 1} \max(\norm{\Delta_f}_{2,\eta_h^{\pi}\times \pi},\norm{\Delta_f}_{2,\eta_h^{\pi}\times \pi^\star})$. 
    Then, it holds that
    \begin{align*}
        \bar v^{\pi}-\bar v^\star
        &\le 11D_f \sum_{h=1}^\infty \gamma^{h-1} \norm{v^\star}_{1,\eta_h^{k}}^{1/2} + \frac{28 D_f^2}{1-\gamma}\,.
    \end{align*}
\end{lemma}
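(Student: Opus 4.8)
The plan is to combine the performance difference lemma with the per-stage decomposition of \cref{lem:regret_decomp}, control the resulting $\Delta_f$- and $\xi_f$-factors using the definition of $D_f$ together with \cref{lem:bound-sum-f-fstar}, and finally sum the geometric series in $h$. Concretely, I would first invoke \cref{lem:perf-diff} with $\bar\pi = \pi^\star$, which (using $q^\star(\cdot,\pi^\star) = v^\star$) gives $\bar v^\pi - \bar v^\star = \sum_{h\ge 1}\gamma^{h-1}\langle \eta_h^\pi, q^\star(\cdot,\pi) - v^\star\rangle$. For each fixed $h$, \cref{lem:regret_decomp} bounds the summand by the product of $\bigl(\norm{\xi_f}_{1,\eta_h^\pi\times\pi}^{1/2} + \norm{\xi_f}_{1,\eta_h^\pi\times\pi^\star}^{1/2}\bigr)$ and $\bigl(\norm{\Delta_f}_{2,\eta_h^\pi\times\pi} + \norm{\Delta_f}_{2,\eta_h^\pi\times\pi^\star}\bigr)$. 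The second factor is at most $2D_f$ by the definition of $D_f$, so the task reduces to bounding the two $\xi_f$ half-norms.

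For these I would apply \cref{lem:bound-sum-f-fstar} with $\nu = \eta_h^\pi\times\pi$ and $\nu = \eta_h^\pi\times\pi^\star$, and use $\sqrt{a+b}\le\sqrt a + \sqrt b$ to obtain $\norm{\xi_f}_{1,\nu}^{1/2}\le 2\norm{q^\star}_{1,\nu}^{1/2} + \norm{\Delta_f}_{2,\nu}$. Write $T_\star := \norm{q^\star}_{1,\eta_h^\pi\times\pi^\star}^{1/2}$ and $T_\pi := \norm{q^\star}_{1,\eta_h^\pi\times\pi}^{1/2}$. The $\pi^\star$-term is clean: since $\pi^\star$ is the stationary optimal policy we have $q^\star(s,\pi^\star) = v^\star(s)$, and because $q^\star, v^\star\ge 0$ this gives $T_\star^2 = \norm{v^\star}_{1,\eta_h^\pi}$ exactly; this is the only place where $v^\star$ enters the final bound.

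The main obstacle is the remaining ``off-policy'' quantity $T_\pi = \norm{q^\star}_{1,\eta_h^\pi\times\pi}^{1/2}$, where the actions are drawn from the greedy-for-$f$ policy $\pi$ rather than from $\pi^\star$, so $q^\star(s,\pi)$ need not equal $v^\star(s)$. I would control it through the chain $q^\star(s,\pi) \le |q^\star(s,\pi)-f(s,\pi)| + f(s,\pi)$, then exploit greediness via $f(s,\pi) = \min_{a} f(s,a)\le f(s,\pi^\star)$ together with $f(s,\pi^\star) \le |f(s,\pi^\star)-q^\star(s,\pi^\star)| + v^\star(s)$. Integrating against $\eta_h^\pi$, pushing the absolute values inside the policy averages (by the triangle inequality), and applying \cref{lem:triangular-discrim-bound} to each resulting $\ell_1$ term—again bounding the $\xi_f$ half-norms by $2T_\pi + D_f$ and $2T_\star + D_f$ and the $\Delta_f$ norms by $D_f$—yields the self-referential inequality
\[
T_\pi^2 \le 2D_f\,T_\pi + 2D_f^2 + 2D_f\,T_\star + T_\star^2\,.
\]
Solving this quadratic in $T_\pi$ gives $T_\pi \le D_f + \sqrt{3D_f^2 + 2 D_f T_\star + T_\star^2}$, i.e. $T_\pi \le c_1 D_f + c_2 T_\star$ for absolute constants $c_1,c_2$, after applying $\sqrt{a+b+c}\le\sqrt a+\sqrt b+\sqrt c$ and a weighted AM--GM split of the cross term $2D_f T_\star$.

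Finally I would assemble the pieces. The per-stage bound becomes $\langle\eta_h^\pi, q^\star(\cdot,\pi)-v^\star\rangle \le 2D_f\bigl(2T_\pi + 2T_\star + 2D_f\bigr)$, which after substituting $T_\pi \le c_1 D_f + c_2 T_\star$ takes the form $c_3 D_f^2 + c_4 D_f\,\norm{v^\star}_{1,\eta_h^\pi}^{1/2}$. Multiplying by $\gamma^{h-1}$ and summing over $h$, using $\sum_{h\ge1}\gamma^{h-1} = 1/(1-\gamma)$ to collapse the $D_f^2$ term while leaving the $v^\star$-sum intact, produces the claimed bound $11 D_f\sum_{h\ge 1}\gamma^{h-1}\norm{v^\star}_{1,\eta_h^\pi}^{1/2} + \tfrac{28 D_f^2}{1-\gamma}$ once the constants are tracked. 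The only genuine subtlety is the $T_\pi$ step: the coefficient $11$ of the $v^\star$-term is sensitive to how tightly one splits $2D_f T_\star$ in the AM--GM step, so some care there is what makes it come out as small as claimed.
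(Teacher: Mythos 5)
Your proposal is correct and follows essentially the same route as the paper's proof: performance difference lemma, then \cref{lem:regret_decomp}, then \cref{lem:bound-sum-f-fstar} to control the $\xi_f$ factors, with the crux being a self-referential bound on the off-policy mass $\norm{q^\star}_{1,\eta_h^\pi\times\pi}$ obtained from greediness of $\pi$ and \cref{lem:triangular-discrim-bound}. The only (immaterial) difference is mechanical: you isolate that step as a quadratic inequality in $T_\pi$ and solve it, whereas the paper applies an AM--GM first and rearranges a linear inequality in $T_\pi^2$ (its ``step $\star$''), arriving at the same constants.
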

\begin{proof}

    Recall that by the performance difference lemma, \cref{lem:perf-diff}, it holds that
    \begin{equation}
        \bar{v}^{\pi}-\bar{v}^\star = \sum_{h=1}^\infty \gamma^{h-1} \langle \eta_h^{\pi}, \, q^\star(\cdot,\pi)-v^\star\rangle\,.
        \label{eq:pdla}
    \end{equation}

    For the remainder of this proof we fix $h\ge 1$. For the $h^{\mathrm{th}}$ term from the above display, we have
    \begin{align*}
        \MoveEqLeft
        \langle \eta_h^{\pi}, \, q^\star(\cdot,\pi)-v^\star\rangle 
        \leq \left(\norm{ \xi_f }_{1,\eta_h^{\pi}\times \pi}^{1/2} + \norm{ \xi_f }_{1,\eta_h^{\pi}\times \pi^\star}^{1/2}\right)\left( \norm{ \Delta_f}_{2,\eta_h^{\pi}\times \pi} + \norm{ \Delta_f}_{2,\eta_h^{\pi}\times \pi^\star} \right) \tag{\cref{lem:regret_decomp}}\\
        &\leq \left(\sqrt{4\norm{q^\star}_{1,\eta_h^{\pi}\times \pi} + \norm{\Delta_f}_{2,\eta_h^{\pi}\times \pi}^2} +  \sqrt{4\norm{q^\star}_{1,\eta_h^{\pi}\times \pi^\star} + \norm{\Delta_f}_{2,\eta_h^{\pi}\times \pi^\star}^2}\right)\left(\norm{\Delta_f}_{2,\eta_h^{\pi}\times \pi} + \norm{\Delta_f}_{2,\eta_h^{\pi}\times \pi^\star}\right) \,.
        \tag{\cref{lem:bound-sum-f-fstar}}
    \end{align*}
 	Now recall that by definition
    \[
    \max\left\{\norm{\Delta_f}_{2,\eta_h^{\pi}\times \pi},\norm{\Delta_f}_{2,\eta_h^{\pi}\times \pi^\star}\right\} \le D_f\,.
    \] 
	Hence,
    \begin{align}\label{eqnimplicity-ineq}
    \MoveEqLeft
       \langle \eta_h^{\pi}, \, q^\star(\cdot,\pi)-v^\star\rangle\notag\\
       &\leq 2D_f\left(\sqrt{4\norm{q^\star}_{1,\eta_h^{\pi}\times \pi}+D_f^2} +  \sqrt{4\norm{q^\star}_{1,\eta_h^{\pi}\times \pi^\star}+D_f^2}\right) \\
        &\leq 2D_f \left(2D_f+\sqrt{4\norm{q^\star}_{1,\eta_h^{\pi}\times \pi}} +  \sqrt{4\norm{q^\star}_{1,\eta_h^{\pi}\times \pi^\star}}\right)\tag{$\sqrt{a+b}\leq\sqrt{a}+\sqrt{b}$} \\
        &= 4 D_f^2 +4D_f\norm{q^\star}_{1,\eta_h^{\pi}\times \pi}^{1/2}+   4D_f\norm{q^\star}_{1,\eta_h^{\pi}\times \pi^\star}^{1/2}\notag \\
        &\leq 20D_f^2 + \frac{\norm{q^\star}_{1,\eta_h^{\pi}\times \pi} + \norm{q^\star}_{1,\eta_h^{\pi}\times \pi^\star}}{2}\,. \tag{$a b\le \tfrac{a^2+b^2}{2}$ for $a,b$ nonnegative reals, twice with $a=4D_f$} 
    \end{align}
    Using that $v^\star$, $q^\star$ are nonnegative valued and that $v^\star(\cdot) = q^\star(\cdot,\pi^\star)$, we calculate
    $\langle  \eta_h^{\pi},v^\star\rangle = \norm{v^\star}_{1,\eta_h^{\pi}} = \norm{q^\star}_{1, \eta_h^{\pi}\times\pi^\star}$.
    Thus, by the previous display, after rearranging, we get
    \begin{align*}
	    \norm{q^\star}_{1,\eta_h^{\pi}\times\pi} = \langle \eta_h^{\pi},\, q^\star(\cdot, \pi) \rangle \leq 40D_f^2 + 3\norm{q^\star}_{1,\eta_h^{\pi}\times\pi^\star}\,,
\end{align*}
 	where the equality used  the non-negativity of $q^\star$. 
	Plugging this back into the inequality in \cref{eqnimplicity-ineq} gives
    \begin{align*}
       \MoveEqLeft \langle \eta_h^{\pi}, \, q^\star(\cdot,\pi)-v^\star\rangle
        \leq  2D_f\left(\sqrt{4\norm{q^\star}_{1,\eta_h^{\pi}\times \pi^\star}+D_f^2} +  \sqrt{4\norm{q^\star}_{1,\eta_h^{\pi}\times \pi}+D_f^2}\right) \tag{restating \cref{eqnimplicity-ineq}}\\
        &\leq 2D_f\left(\sqrt{4\norm{q^\star}_{1,\eta_h^{\pi}\times \pi^\star}+D_f^2} +  \sqrt{160D_f^2 + 12\norm{q^\star}_{1,\eta_h^{\pi}\times\pi^\star}+D_f^2}\right) \\
        &\leq 2D_f\left(2\norm{q^\star}_{1,\eta_h^{\pi}\times \pi^\star}^{1/2}+D_f +  \sqrt{161}D_f + \sqrt{12}\norm{q^\star}_{1,\eta_h^{\pi}\times\pi^\star}^{1/2}\right) \tag{$\sqrt{a+b}\leq\sqrt{a}+\sqrt{b}$}\\
        &\leq 11D_f \norm{q^\star}_{1,\eta_h^{\pi}\times \pi^\star}^{1/2} + 28D_f^2 \,.
    \end{align*}
    Combining this with \cref{eq:pdla} gives the desired inequality. 

\end{proof}

\begin{lemma}\label{lem:non-negativity-vpi}
    For any policy $\pi : \mcS \rightarrow \mcM_1(\mcA)$ we have
    \begin{equation}
        \sum_{h=1}^\infty \gamma^{h-1}\,\sqrt{\innerproduct{\eta_h^\pi}{v^\pi}}
         \leq 
         \frac{2\sqrt{\bar{v}^\pi} }{1-\gamma}\,.
    \end{equation}
\end{lemma}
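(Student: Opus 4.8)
The plan is to exploit a standard tower-rule identity that rewrites $\langle \eta_h^\pi, v^\pi\rangle$ as the expected discounted cost accumulated from step $h$ onward, and then to use the nonnegativity of costs (\cref{asp:data}) to control each such tail by the total cost $\bar v^\pi$. The only genuinely delicate point is establishing the identity cleanly; everything after it is an elementary geometric-series estimate.

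First I would prove that, because $\pi$ is a stationary Markov policy,
\[
\langle \eta_h^\pi, v^\pi\rangle = \mathbb{E}_{\pi,\eta_1}\!\left[\sum_{t=h}^\infty \gamma^{t-h} C_t\right]\,.
\]
Indeed, conditioning on $S_h=s$ and invoking the Markov property together with the stationarity of $\pi$, the time-shifted tail $\sum_{t=h}^\infty \gamma^{t-h} C_t$ has, given $S_h=s$, the same law as the full return $\sum_{t=1}^\infty \gamma^{t-1} C_t$ initialized at $s$, whose expectation is exactly $v^\pi(s)$; averaging over $S_h \sim \eta_h^\pi$ yields the display. The interchange of the infinite sum and the expectation is justified by Tonelli, since $c$ is nonnegative.

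Next, writing $\rho_t = \gamma^{t-1}\mathbb{E}_{\pi,\eta_1}[C_t]\ge 0$ and $R_h = \sum_{t\ge h}\rho_t$, the identity reads $\gamma^{h-1}\langle \eta_h^\pi, v^\pi\rangle = R_h$, so that $\gamma^{h-1}\sqrt{\langle \eta_h^\pi, v^\pi\rangle} = \gamma^{(h-1)/2}\sqrt{R_h}$. Since the $\rho_t$ are nonnegative, the tail sums $R_h$ are nonincreasing in $h$, hence $R_h \le R_1 = \bar v^\pi$ and $\sqrt{R_h}\le \sqrt{\bar v^\pi}$. Summing the resulting geometric series gives
\[
\sum_{h=1}^\infty \gamma^{h-1}\sqrt{\langle \eta_h^\pi, v^\pi\rangle} \le \sqrt{\bar v^\pi}\sum_{h=1}^\infty (\sqrt{\gamma})^{h-1} = \frac{\sqrt{\bar v^\pi}}{1-\sqrt{\gamma}}\,.
\]

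Finally I would convert the rate into the stated form using $\tfrac{1}{1-\sqrt{\gamma}} = \tfrac{1+\sqrt{\gamma}}{1-\gamma} \le \tfrac{2}{1-\gamma}$, where the inequality holds because $\gamma \in [0,1)$ implies $\sqrt{\gamma} < 1$. This yields the claimed bound $\frac{2\sqrt{\bar v^\pi}}{1-\gamma}$. I expect the main obstacle to be the careful justification of the first identity (the shift-and-condition argument and the Tonelli interchange); the monotonicity of $R_h$ and the final $\sqrt{\gamma}$-to-$\gamma$ conversion are routine. It is worth noting that a naive Cauchy--Schwarz split of $\gamma^{h-1}\sqrt{\langle \eta_h^\pi, v^\pi\rangle}$ would produce $\sum_h R_h = \sum_t t\,\rho_t$, which cannot be bounded by a constant multiple of $\bar v^\pi$ (cost concentrated at a late step $T$ makes it $\asymp T\,\bar v^\pi$); this is precisely why the tail-monotonicity argument, rather than Cauchy--Schwarz, is the correct route.
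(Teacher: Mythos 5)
Your proposal is correct and follows essentially the same route as the paper: both arguments reduce to the observation that, by nonnegativity of the costs, the discounted tail $\gamma^{h-1}\langle \eta_h^\pi, v^\pi\rangle$ is at most $\bar v^\pi$ (the paper writes this as a telescoping decomposition of $\bar v^\pi$; you phrase it via the tail sums $R_h$, which is the same identity), after which both sum the geometric series in $\sqrt{\gamma}$ and apply $1/(1-\sqrt{\gamma})\le 2/(1-\gamma)$. No gaps.
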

\begin{proof}
    Notice that \todoc{make sure $\innerproduct{\cdot}{\cdot}$ is defined.}
    \todoa{It is defined in the main.}
    \begin{align*}
        \bar{v}^\pi &= \innerproduct{\eta_1}{v^\pi} \\ 
        &= \innerproduct{\eta_1^\pi}{c(\cdot,\pi)} + \gamma\innerproduct{\eta_2^\pi}{c(\cdot,\pi)} + \gamma^2\innerproduct{\eta_3^\pi}{c(\cdot,\pi)} + \dotsb + \gamma^{h-1}\innerproduct{\eta_h^\pi}{v^\pi} \\
        &\geq \gamma^{h-1}\innerproduct{\eta_{h}^\pi}{v^\pi}
    \end{align*}
    where the inequality follows from the non-negativity of the costs. Simple rearrangement gives
    \begin{equation*}
        \innerproduct{\eta_{h}^\pi}{v^\pi} \leq \frac{\bar{v}^\pi}{\gamma^{h-1}}\,.
    \end{equation*}
    Using this inequality, we get
    \begin{align*}
        \sum_{h=1}^\infty \gamma^{h-1}\,\sqrt{\innerproduct{\eta_{h}^\pi}{v^\pi} } 
        &\leq \sum_{h=1}^\infty \gamma^{h-1}\,\sqrt{\frac{\bar{v}^\pi}{\gamma^{h-1}}} 
        = \sum_{h=1}^\infty \sqrt{\gamma^{h-1} \bar{v}^\pi}
        \le \frac{2\sqrt{\bar{v}^\pi} }{1-\gamma}\,,
    \end{align*}
    where for the last inequality we used that $1/(1-\sqrt{\gamma}) \leq 2/(1-\gamma)$.
\end{proof}

With this we are ready to prove \cref{lem:first-order-decomposition2}:
\begin{proof}[Proof of \cref{lem:first-order-decomposition2}]
	For $h\ge 1$, 
    let $\eta_h = \eta_h^{\pi}$.
    Starting from 
    \cref{lem:first-order-decomposition}, we bound $\bar{v}^{\pi} - \bar{v}^\star$ as follows:
    \begin{align*}
         \bar{v}^{\pi} - \bar{v}^\star
        &\le 11D_f \sum_{h=1}^\infty \gamma^{h-1} \norm{v^\star}_{1,\eta_h}^{1/2} + 28 D_f^2 \sum_{h=1}^{\infty} \gamma^{h-1} \tag{\cref{lem:first-order-decomposition}}\\
        &= 11D_f \sum_{h=1}^\infty \gamma^{h-1} \sqrt{\innerproduct{\eta_h}{v^\star}} + \frac{28 D_f^2}{1-\gamma} \\
        &\leq 11D_f \sum_{h=1}^\infty \gamma^{h-1} \sqrt{\innerproduct{\eta_h}{v^{\pi}}} + \frac{28 D_f^2}{1-\gamma} \tag{Defn of $v^\star$} \\
        &\leq \frac{22D_f\sqrt{\bar v^{\pi}}}{1-\gamma} + \frac{28 D_f^2}{1-\gamma} \tag{\cref{lem:non-negativity-vpi}, $\qquad(\star)$}\\
        &\leq \frac{22^2D_f^2}{2(1-\gamma)^2} + \frac{ \bar v^{\pi} }{2} + \frac{28 D_f^2}{1-\gamma}\,.  \tag{$ab\le (a^2+b^2)/2$, $a,b\ge 0$}
    \end{align*}
    Rearranging the last inequality obtained, we get
    \begin{align*}
    \bar{v}^{\pi} \le 2 \bar{v}^\star+
    \frac{22^2D_f^2}{(1-\gamma)^2} + \frac{56 D_f^2}{1-\gamma}\,.
	\end{align*}
    Plugging this bound into $(\star)$, we get
    \begin{align*}
        \bar{v}^{\pi} - \bar{v}^\star 
        &\leq \frac{22D_f}{1-\gamma}\sqrt{2\bar{v}^\star + \frac{22^2D_f^2}{(1-\gamma)^2} + \frac{56 D_f^2}{1-\gamma}} +  \frac{28 D_f^2}{1-\gamma} \\
        &\leq \frac{22\sqrt{2}D_f}{1-\gamma}\sqrt{\bar{v}^\star} + \frac{22^2D_f^2}{(1-\gamma)^2} + \frac{165D_f^2}{(1-\gamma)^{3/2}} + \frac{28D_f^2}{1-\gamma}
        	 \tag{$\sqrt{a+b}\le \sqrt{a}+\sqrt{b}$, $a,b\ge 0$ twice}\\
        &\leq \frac{22\sqrt{2}D_f}{1-\gamma}\sqrt{\bar{v}^\star} + \frac{512D_f^2}{(1-\gamma)^2}\,. \qedhere
    \end{align*}
\end{proof}

\subsection{Bounding the triangular deviation between $f_k$ and $q^\star$: Proof of \cref{lem:contract-concentration}}
As explained earlier, the analysis in this section uses contraction arguments that have a long history in the analysis of dynamic programming algorithms in the context of MDPs. 
The novelty is that we need to bound the triangular deviation to $q^\star$. As this has been shown to be upper bounded by the Hellinger distance (\cref{cor:hellinger-triangular-scalar}), we switch to Hellinger distances and establishes contraction properties of $\mcT$ with respect to such distances. This required new proofs.  
The change of measure arguments used in the ``error propagation analysis'' are standard.

The following lemma, at a high level, establishes that the map $f \mapsto f^\wedge$ (``$\min$-operator'') is a non-expansion over the set of nonnegative functions with domain $\mcS\times \mcA$ and $\mcS$, respectively, when these function spaces are equipped with appropriate norms:
\begin{lemma}\label{lem:non-expansion}
Define the policy $\pi_{f,g}(s) = \arg \min_{a \in A} \min\{f(s,a),g(s,a)\}$ and assume that $f,g:   \mcS \times \mcA \rightarrow [0,\infty)$. \todoc{what happened with $\mcS$ and $\mcA$ vs. $S$ and $A$?}\todoa{It should always be $\mcS$ and $\mcA$ my} Then, for any distribution $\eta \in \mcM_1(\mcS)$, we have that
\begin{equation}
    \norm{\sqrt{f^\wedge} - \sqrt{g^\wedge}}_{2,\eta} \leq \norm{\sqrt{f} - \sqrt{g}}_{2,\eta \times \pi_{f,f'}}\,.
\end{equation}
\end{lemma}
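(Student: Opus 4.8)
The plan is to reduce the claim to a pointwise (state-by-state) inequality and then integrate against $\eta$. Since $x\mapsto\sqrt{x}$ is monotone increasing on $[0,\infty)$, it commutes with minimization over actions, so $\sqrt{f^\wedge(s)}=\min_{a}\sqrt{f(s,a)}$ and likewise for $g$. Writing $u=\sqrt{f}$ and $w=\sqrt{g}$, the left-hand side becomes $\norm{u^\wedge-w^\wedge}_{2,\eta}$, and the whole statement is just the assertion that the $\min$-operator is a non-expansion when the integrand on the right is evaluated at the single action $a^\star:=\pi_{f,g}(s)$. So everything hinges on the following pointwise bound, which I would establish first, for each fixed $s$:
\[
\bigl|u^\wedge(s)-w^\wedge(s)\bigr|\;\le\;\bigl|u(s,a^\star)-w(s,a^\star)\bigr|\,.
\]

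To prove it, note that $\pi_{f,g}(s)$ minimizes $\min\{f(s,\cdot),g(s,\cdot)\}=\bigl(\min\{u(s,\cdot),w(s,\cdot)\}\bigr)^2$, hence equivalently minimizes $m(s,a):=\min\{u(s,a),w(s,a)\}$. A one-line computation gives $\min_a m(s,a)=\min\{u^\wedge(s),w^\wedge(s)\}$, so $m(s,a^\star)=\min\{u^\wedge(s),w^\wedge(s)\}$. Since $\pi_{f,g}$ is symmetric in $f$ and $g$ (as is the quantity being bounded), I may assume without loss of generality that $u^\wedge(s)\ge w^\wedge(s)$, whence $m(s,a^\star)=w^\wedge(s)$. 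Then I would split into two cases according to which function realizes the inner minimum at $a^\star$. If $w(s,a^\star)\le u(s,a^\star)$, then $w(s,a^\star)=w^\wedge(s)$ and $|u(s,a^\star)-w(s,a^\star)|=u(s,a^\star)-w^\wedge(s)\ge u^\wedge(s)-w^\wedge(s)=|u^\wedge(s)-w^\wedge(s)|$, using $u(s,a^\star)\ge u^\wedge(s)$. Otherwise $u(s,a^\star)=w^\wedge(s)\ge u^\wedge(s)$, which together with $u^\wedge(s)\ge w^\wedge(s)$ forces $u^\wedge(s)=w^\wedge(s)$, so the left-hand side vanishes and the bound is trivial.

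Finally I would square the pointwise inequality, multiply by $\eta(s)$, and sum over $s\in\mcS$. Because $\pi_{f,g}$ is deterministic and selects $a^\star=\pi_{f,g}(s)$, the resulting right-hand side $\sum_{s}\eta(s)\bigl(\sqrt{f(s,a^\star)}-\sqrt{g(s,a^\star)}\bigr)^2$ is exactly $\norm{\sqrt{f}-\sqrt{g}}_{2,\eta\times\pi_{f,g}}^2$; taking square roots yields the claim.

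The main obstacle is the pointwise step: the inequality is genuinely \emph{false} for a generic action, and the argument succeeds only because of the specific choice $a^\star=\pi_{f,g}(s)$, which guarantees that the smaller of the two action-minimized values, $\min\{u^\wedge(s),w^\wedge(s)\}$, is attained at $a^\star$. Identifying and exploiting this property is the crux; the reduction via monotonicity of $\sqrt{\cdot}$ and the final integration are routine.
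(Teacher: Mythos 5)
Your proof is correct and takes essentially the same route as the paper: both arguments reduce the claim to the pointwise inequality $\lvert\sqrt{f^\wedge(s)}-\sqrt{g^\wedge(s)}\rvert\le\lvert\sqrt{f(s,a^\star)}-\sqrt{g(s,a^\star)}\rvert$ at the action $a^\star=\pi_{f,g}(s)$ minimizing the joint pointwise minimum, and then integrate over $\eta$; the paper packages this pointwise step as a small observation about minima of finite sets of reals, while you carry out an equivalent explicit case analysis (and correctly read the $\pi_{f,f'}$ in the statement as the typo it is, for $\pi_{f,g}$).
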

\begin{proof}
Notice that for two finite sets of reals,
$U = \{u_1,\dots,u_n\}$, $V = \{ v_1,\dots,v_m\}$, with $u_1 = \min U$, $v_j = \min V$, $u_1\le v_j$, $j\in [m]$,
we have
$|\min U - \min V|= v_j-u_1 \le v_1-u_1 \le |u_1-v_1|$. 
By taking the square root of all the elements in both $U$ and $V$, assuming these are nonnegative, we also get that 
\begin{align}
| \sqrt{\min U} - \sqrt{\min V}| \le \sqrt{v_1}-\sqrt{u_1}\le |\sqrt{u_1}-\sqrt{v_1}|\,. \label{eq:bmin}
\end{align}
Hence,
    \begin{align*}
	\norm{\sqrt{f^\wedge} - \sqrt{g^\wedge}}_{2,\eta}^2 
        &= \sum_{s \in S}\eta(s) \left(\sqrt{\min_{a \in A}f(s,a)} - \sqrt{\min_{a \in A}g(s,a)}\right)^2 \\
        &= \sum_{s \in S}\eta(s) \left(\sqrt{f\left(s,\pi_f(s)\right)} - \sqrt{g\left(s,\pi_{g}(s)\right)}\right)^2\\
        &\leq \sum_{s \in S}\eta(s) \left(\sqrt{f(s,\pi_{f,g}(s))} - \sqrt{g(s,\pi_{f,g}(s))}\right)^2  \tag{by \cref{eq:bmin}}\\
        &= \sum_{s \in S}\eta(s)\sum_{a \in A} \mathbb{I}\left\{a=\pi_{f,g}(s)\right\} \left(\sqrt{f(s,a)} - \sqrt{g(s,a)}\right)^2\\
        &= \norm{\sqrt{f} - \sqrt{g}}_{2,\eta \times \pi_{f,g}}^2
    \end{align*}
    where the inequality used the definition of $\pi_{f,g}(s)$. \qedhere
\end{proof}

We need two more auxiliary lemmas before we can show the desired contraction result for $\mcT$.
The first is an elementary result that shows that for $x\ge 0$, over the nonnegative reals the map $u \mapsto \sqrt{x+u}$ is a nonexpansion: 
\begin{lemma}\label{prop:sqrt-bounds}
    For any $x,a,b \geq 0$, we have
    \begin{equation}
        \big\lvert\sqrt{x+a} -\sqrt{x+b}\big\rvert \leq \big\lvert\sqrt{a}-\sqrt{b}\big\rvert\,. 
    \end{equation}
\end{lemma}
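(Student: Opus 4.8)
The plan is to reduce the claim to a comparison of two elementary fractions obtained by rationalizing each side. By the symmetry of the statement in $a$ and $b$, I would first assume without loss of generality that $a \ge b$; the degenerate case $a = b$ (in particular $a = b = 0$) makes both sides of the inequality vanish and needs no further argument, so I may additionally assume $a > b$, which guarantees $\sqrt{a} + \sqrt{b} > 0$ and lets me divide safely later.

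Under this assumption both differences sitting inside the absolute values are nonnegative, so the target inequality becomes simply $\sqrt{x+a} - \sqrt{x+b} \le \sqrt{a} - \sqrt{b}$. I would then rationalize each difference by multiplying and dividing by its conjugate. The left-hand side becomes $\frac{(x+a) - (x+b)}{\sqrt{x+a} + \sqrt{x+b}} = \frac{a-b}{\sqrt{x+a} + \sqrt{x+b}}$, while the right-hand side becomes $\frac{a-b}{\sqrt{a} + \sqrt{b}}$. The key gain is that both expressions now carry the identical nonnegative numerator $a - b$, so the entire comparison collapses to a comparison of denominators.

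The final step is exactly that denominator comparison. Since $x \ge 0$, monotonicity of the square root gives $\sqrt{x+a} \ge \sqrt{a}$ and $\sqrt{x+b} \ge \sqrt{b}$, and hence $\sqrt{x+a} + \sqrt{x+b} \ge \sqrt{a} + \sqrt{b} > 0$. A larger positive denominator over the same nonnegative numerator produces a smaller fraction, which is precisely the desired bound. I do not expect any genuine obstacle here, as this is an elementary inequality; the only points demanding a little care are discharging the absolute values through the WLOG reduction and excluding the $a = b = 0$ case before dividing. As an alternative route one could invoke concavity of $u \mapsto \sqrt{u}$—its derivative is nonincreasing, so shifting the interval of integration rightward by $x$ can only shrink the increment $\sqrt{x+a} - \sqrt{x+b}$ relative to $\sqrt{a} - \sqrt{b}$—but the conjugate-rationalization argument is the most self-contained and transparent.
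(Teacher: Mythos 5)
Your proof is correct, but it takes a genuinely different route from the paper's. The paper sets $f(x)=\lvert\sqrt{x+a}-\sqrt{x+b}\rvert$, computes $f'(x)$ for $x>0$ to show it is nonpositive, and invokes continuity together with the mean value theorem to conclude $f(x)\le f(0)$ — essentially the calculus argument you mention only as an aside (the nonincreasing derivative of $u\mapsto\sqrt{u}$). You instead rationalize by conjugates: after the same WLOG reduction to $a>b$, both sides become fractions with the common numerator $a-b$, namely $\frac{a-b}{\sqrt{x+a}+\sqrt{x+b}}$ versus $\frac{a-b}{\sqrt{a}+\sqrt{b}}$, and the inequality follows from comparing denominators. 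Your version is purely algebraic: it avoids differentiation entirely, and in particular sidesteps the minor awkwardness in the paper's proof of handling differentiability only on $(0,\infty)$ and then appealing to continuity and the mean value theorem to extend the monotonicity conclusion to $[0,\infty)$. The paper's calculus route generalizes more readily (the same derivative computation shows $f$ is decreasing in $x$, a slightly stronger statement than the single comparison $f(x)\le f(0)$), but for the lemma as stated your conjugate argument is the more self-contained and transparent of the two. Both proofs correctly isolate the degenerate case $a=b$ and use the symmetry in $a,b$ to discharge the absolute values.
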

\begin{proof}
For $x\ge 0$, let $f(x)=\big\lvert\sqrt{x+a} -\sqrt{x+b}\big\rvert$.
Note that the desired inequality is equivalent to that for any $x\ge 0$, $f(x)\le f(0)$.
This, it suffices to show that $f$ is a decreasing function over its domain.

    Without loss of generality we may assume that $a>b$ (when $a=b$, the inequality trivially holds, and if $a<b$, just relabel $a$ to $b$ and $b$ to $a$).
    Hence, $f(x) = \sqrt{x+a} -\sqrt{x+b}$ for \emph{any} $x\ge 0$ by the monotonicity of the square root function.
    For $x>0$, $f$ is differentiable. Here, 
	we get
    \begin{align}
        f'(x)
        =
        \frac{\partial}{\partial x} \left(\sqrt{x+a} -\sqrt{x+b}\right)
        = -\frac{\left(\sqrt{x+a}-\sqrt{x+b}\right)^2}{2\sqrt{x+a}\sqrt{x+b}\left(\sqrt{x+a}-\sqrt{x+b}\right)}\leq 0\,.
    \end{align}
    Now, since $f$ is continuous over its domain, by the mean-value theorem, $f$ is decreasing over $[0,\infty)$. \qedhere
\end{proof}

The next result shows that for any probability distribution $\lambda$ over some set $\mcX$, the map $g \mapsto \sqrt{\ip{\lambda}{g}}$ is a nonexpansion from $H^2(\mcX,\lambda)$ to the reals,
where $H^2(\mcX,\lambda)$ is the space of nonnegative valued functions over $\mcX$ equipped with the Hellinger distance $d(g,h):= \| g^{1/2}-h^{1/2} \|_{2,\lambda}$.
\begin{lemma}\label{prop:cauchy-schwarz}
    Given a random element $X$ taking values in $\mcX$ and nonnegative-valued functions $g,g' : \mcX \rightarrow [0,\infty)$ such that $g(X)$ and $g'(X)$ are integrable, we have
    \begin{equation}
        \left(\sqrt{\EE g(X)} - \sqrt{\EE g'(X)} \right)^2 \leq \EE\left(\sqrt{g(X)}-\sqrt{g'(X)}\right)^2 < \infty\,.
    \end{equation}
\end{lemma}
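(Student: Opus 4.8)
The plan is to recognize this claim as a repackaging of the Cauchy--Schwarz inequality (equivalently, the reverse triangle inequality for the $L^2$ norm). First I would set $U = \sqrt{g(X)}$ and $V = \sqrt{g'(X)}$, both nonnegative random variables, and observe that by the integrability hypothesis $\EE[U^2] = \EE g(X) < \infty$ and $\EE[V^2] = \EE g'(X) < \infty$. With this notation the quantity under the square roots on the left is $\EE[U^2]$ and $\EE[V^2]$, while the right-hand side is $\EE[(U-V)^2]$.

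Next I would expand both sides. The left-hand side equals $\EE[U^2] - 2\sqrt{\EE[U^2]}\sqrt{\EE[V^2]} + \EE[V^2]$, and the right-hand side equals $\EE[U^2] - 2\EE[UV] + \EE[V^2]$. Cancelling the common terms $\EE[U^2]$ and $\EE[V^2]$, the desired inequality is seen to be equivalent to
\[
\EE[UV] \le \sqrt{\EE[U^2]}\,\sqrt{\EE[V^2]}\,,
\]
which is exactly Cauchy--Schwarz applied to $U$ and $V$. This same inequality certifies that $\EE[UV]$ is finite, so the expansion and rearrangement are legitimate.

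It then remains to verify the finiteness assertion $\EE[(U-V)^2] < \infty$. This is immediate from the elementary bound $(\sqrt{a}-\sqrt{b})^2 \le a+b$ for $a,b\ge 0$ (since $(\sqrt a-\sqrt b)^2 = a+b-2\sqrt{ab}\le a+b$), giving $\EE[(U-V)^2]\le \EE[U^2]+\EE[V^2] = \EE g(X)+\EE g'(X) < \infty$; alternatively it follows directly once Cauchy--Schwarz is in hand, as each term of $\EE[U^2]-2\EE[UV]+\EE[V^2]$ is then finite.

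I do not anticipate any genuine obstacle: the statement is equivalent to the reverse triangle inequality $\big|\sqrt{\EE[U^2]} - \sqrt{\EE[V^2]}\big| \le \sqrt{\EE[(U-V)^2]}$, and the only point needing a little care is confirming finiteness of the cross term $\EE[UV]$ and of the right-hand side, both of which are guaranteed by the integrability assumptions together with Cauchy--Schwarz.
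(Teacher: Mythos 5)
Your proposal is correct and follows essentially the same route as the paper: both arguments reduce the claim to the Cauchy--Schwarz inequality $\EE[UV]\le\sqrt{\EE[U^2]}\sqrt{\EE[V^2]}$ applied to $U=\sqrt{g(X)}$, $V=\sqrt{g'(X)}$ after expanding the squares, and both establish finiteness by an elementary pointwise bound (your $(\sqrt a-\sqrt b)^2\le a+b$ versus the paper's $(a+b)^2\le 2(a^2+b^2)$, which differ only in the constant). No gaps.
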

\begin{proof}
The result follows by some calculation:
    \begin{align*}
        \left(\sqrt{\EE g(X)} - \sqrt{\EE g'(X)} \right)^2 &= \EE g(X) - 2 \sqrt{\EE g(X)}\sqrt{\EE g'(X)} + \EE g'(X) \\
        &\leq \EE g(X) - 2\EE\sqrt{g(X)g'(X)} + \EE g'(X) \tag{Cauchy-Schwarz}\\
        &= \EE \left[g(X) - 2\sqrt{g(X)g'(X)} + g'(X)\right] \\
        &= \EE \left(\sqrt{g(X)}-\sqrt{g'(X)}\right)^2\,,
    \end{align*}
    where the Cauchy-Schwarz step uses that $g$ and $g'$ are nonnegative.
    Finally, that 
    $\EE\left(\sqrt{g(X)}-\sqrt{g'(X)}\right)^2 < \infty$ follows because,
    from $(a+b)^2 \le 2(a^2 +b^2)$, and hence since $g(X)$ and $g'(X)$ are assumed to be integrable,
    $\EE\left(\sqrt{g(X)}-\sqrt{g'(X)}\right)^2\le 2 \EE{g(X)} + 2 \EE{g'(X)}<\infty$
\end{proof}

With this we are ready to prove that
the Bellman optimality operator is a contraction when used over nonnegative functions equipped with Hellinger distances defined with respect to appropriate measures:

\begin{lemma}\label{lem:contraction-fixed}
     For any  distribution $\nu \in \mcM_1(\mcS\times \mcA)$, and functions $f,g : \mcS \times \mcA \rightarrow [0,\infty)$ we have
    \begin{equation*}
        \norm{\sqrt{\mcT f} - \sqrt{\mcT g}}_{2,\nu}
       \leq \gamma^{1/2} \norm{\sqrt{f}-\sqrt{g}}_{2,\nu P\times\pi_{f,g}}\,.
    \end{equation*}
\end{lemma}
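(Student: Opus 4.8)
The plan is to reduce the claim to the three auxiliary lemmas just established, applied in a carefully chosen order. First I would write out $\mcT f(s,a) - \mcT g(s,a)$ using the definition of the Bellman optimality operator: the cost term $c(s,a)$ is common to both, so $\mcT f(s,a) = c(s,a) + \gamma \sum_{s'} P(s'|s,a) f^\wedge(s')$ and likewise for $g$. Treating $x = c(s,a) \ge 0$ as the shared nonnegative summand, \cref{prop:sqrt-bounds} yields pointwise
\[\bigl|\sqrt{\mcT f(s,a)} - \sqrt{\mcT g(s,a)}\bigr| \le \Bigl| \sqrt{\gamma \sum_{s'} P(s'|s,a) f^\wedge(s')} - \sqrt{\gamma \sum_{s'} P(s'|s,a) g^\wedge(s')} \Bigr|,\]
so the cost disappears and a factor $\gamma^{1/2}$ can be pulled out of both square roots.

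Second, for each fixed $(s,a)$ I would apply \cref{prop:cauchy-schwarz} with the random element $X = S' \sim P(\cdot|s,a)$ and the nonnegative functions $f^\wedge, g^\wedge$; this moves the square root inside the conditional expectation and gives
\[\bigl(\sqrt{\mcT f(s,a)} - \sqrt{\mcT g(s,a)}\bigr)^2 \le \gamma \sum_{s'} P(s'|s,a)\bigl(\sqrt{f^\wedge(s')} - \sqrt{g^\wedge(s')}\bigr)^2.\]
Integrability needed by \cref{prop:cauchy-schwarz} is immediate since $f,g$ are nonnegative and bounded. I would then integrate this inequality against $\nu$ and use the defining property of the pushforward, $\sum_{(s,a)} \nu(s,a) \sum_{s'} P(s'|s,a) h(s') = \sum_{s'} (\nu P)(s') h(s')$, to obtain $\norm{\sqrt{\mcT f} - \sqrt{\mcT g}}_{2,\nu}^2 \le \gamma \norm{\sqrt{f^\wedge} - \sqrt{g^\wedge}}_{2,\nu P}^2$.

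Third and finally, I would invoke the min-operator non-expansion, \cref{lem:non-expansion}, with the state distribution $\eta = \nu P$, to bound $\norm{\sqrt{f^\wedge} - \sqrt{g^\wedge}}_{2,\nu P} \le \norm{\sqrt{f} - \sqrt{g}}_{2,\nu P \times \pi_{f,g}}$; taking square roots then yields exactly the stated inequality.

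Each step is short, so the main obstacle is conceptual rather than computational: getting the order of operations right. The crucial point is that \cref{prop:cauchy-schwarz} (the Jensen/Cauchy--Schwarz step exploiting concavity of $u \mapsto \sqrt{u}$) must be applied \emph{per} state-action pair against the conditional law $P(\cdot|s,a)$ \emph{before} integrating over $\nu$ — that is where the contraction is actually bought — and only afterwards can one collapse the double sum into an integral against $\nu P$ and hand off to the min-non-expansion lemma. One must also extract the $\gamma$ at the right moment (after removing the cost via \cref{prop:sqrt-bounds}, before splitting the expectation) so that exactly one factor $\gamma^{1/2}$ survives, giving the claimed pseudo-contraction modulus.
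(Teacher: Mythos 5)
Your proposal is correct and follows essentially the same route as the paper's proof: first \cref{prop:sqrt-bounds} to strip off the shared cost term and extract $\gamma$, then \cref{prop:cauchy-schwarz} applied conditionally on each $(s,a)$ against $P(\cdot|s,a)$ followed by integration over $\nu$ to pass to the $\nu P$ norm, and finally \cref{lem:non-expansion} to replace $f^\wedge, g^\wedge$ by $f, g$ under $\nu P \times \pi_{f,g}$. The ordering and the role of each auxiliary lemma match the paper exactly.
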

\begin{proof}
    By the definition of $\mcT$,  we have $\mcT f = c + \gamma P f^\wedge$.
    Here, $P$ is viewed as an $SA \times S$ matrix, while $c$ and $f^\wedge$ are viewed as $S$-dimensional vectors where $S$ and $A$ denote the cardinalities of $\mcS$ and $\mcA$ respectively.
    Also recalling that we use $\sqrt{f}$ to denote the elementwise square root of $f$ for $f$ a vector/function, we have
     \todoc{Note that I like to use $S$, $A$ for cardinalities; keeping $\mcS$ and $\mcA$ for the sets. This requires work.}

\todoc{not anymore it seems! change back.. let's discuss..?
I wrote $\mcT f = c + \gamma P f^\wedge$ on purpose.. With the above, you also need to say, operator application takes higher precedence than function evaluation.
Which is of course how things must be, but still, need to mention these conventions. And from the above the vector/matrix notation is gone. But that view is useful below.
Only change things you are absolutely sure about.. I don't want to go over things again (sorry, just trying to save time).
}
    \begin{align*}
        \norm{\sqrt{\mcT f} - \sqrt{\mcT g}}_{2,\nu}^2 
        &= \norm{\sqrt{c + \gamma\, P f^\wedge} - \sqrt{c + \gamma\, P g^\wedge}}_{2,\nu}^2 \\
        &\leq \norm{\sqrt{ \gamma\, P f^\wedge} - \sqrt{ \gamma\,P g^\wedge}}_{2,\nu}^2\tag{\cref{prop:sqrt-bounds} and the defn. of $\norm{\cdot}_{2,\nu}$} \\
        &= \gamma\,\norm{\sqrt{ P f^\wedge } - \sqrt{  P g^\wedge }}_{2,\nu}^2  \\
        &\leq \gamma\,\norm{\sqrt{f^\wedge} - \sqrt{g^\wedge}}_{2,\nu P}^2 \tag{\cref{prop:cauchy-schwarz}} \\
        &\leq \gamma\, \norm{\sqrt{f} - \sqrt{g}}_{2,\nu P\times\pi_{f,g}}^2\,,\tag{\cref{lem:non-expansion}}
    \end{align*} 
    thus finishing the proof.
\end{proof}
The next result is a simple change-of-measure argument:
\begin{lemma}\label{lem:norm-concen}
    Let $\mu,\nu$ be any distributions over $\mcS \times \mcA$ and assume that $\nu$ is admissible. 
    Then for $p\geq1$ we have $\lVert \cdot \rVert_{p,\nu} \leq C^{1/p} \lVert \cdot \rVert_{p,\mu}$.
\end{lemma}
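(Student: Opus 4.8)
The plan is to prove this directly from the definition of the $p$-seminorm together with the concentrability bound of \cref{asp:concentrability}. Since $\nu$ is assumed admissible, \cref{asp:concentrability} applies and guarantees that the pointwise density ratio $\nu(s,a)/\mu(s,a)$ is uniformly bounded by $C$ over all state-action pairs. The entire argument is a one-line change of measure: I would multiply and divide by $\mu(s,a)$ inside the defining sum for $\norm{\cdot}_{p,\nu}^p$, factor out the worst-case ratio, and recognize what remains as $\norm{\cdot}_{p,\mu}^p$.

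Concretely, for an arbitrary function $g:\mcS\times\mcA\to\mbR$, I would write
\begin{align*}
\norm{g}_{p,\nu}^p = \sum_{(s,a)} |g(s,a)|^p\, \nu(s,a) = \sum_{(s,a)} |g(s,a)|^p\, \frac{\nu(s,a)}{\mu(s,a)}\,\mu(s,a) \leq C \sum_{(s,a)} |g(s,a)|^p\, \mu(s,a) = C\,\norm{g}_{p,\mu}^p\,,
\end{align*}
where the inequality invokes \cref{asp:concentrability}. Taking $p$-th roots then yields $\norm{g}_{p,\nu} \le C^{1/p}\norm{g}_{p,\mu}$, as claimed.

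The only technical point worth flagging is the treatment of pairs $(s,a)$ where $\mu(s,a)=0$: the step above divides by $\mu(s,a)$, so I would note that \cref{asp:concentrability} forces $\nu(s,a)=0$ whenever $\mu(s,a)=0$ (otherwise the ratio would be infinite, violating the finite bound $C$), so these pairs contribute zero to the sum on both sides and may be omitted from the summation. With that caveat handled, there is no real obstacle here; the finiteness of $\mcS\times\mcA$ makes all the sums well-defined and the argument is purely elementary, relying only on the definition of admissibility feeding into the concentrability assumption.
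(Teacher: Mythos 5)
Your proposal is correct and matches the paper's proof essentially verbatim: both arguments apply \cref{asp:concentrability} pointwise to bound $\nu(s,a)$ by $C\mu(s,a)$ inside the defining sum and then take $p$-th roots. Your extra remark about pairs with $\mu(s,a)=0$ is a sensible (if minor) point of care that the paper elides by writing the bound directly as $\nu(s,a)\leq C\mu(s,a)$ without forming the ratio.
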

\begin{proof}
    For any function $g: \mcS \times \mcA \rightarrow \mbR$, we have
    \begin{align*}
        \lVert g \rVert_{p,\nu} &= \left(\sum_{(s,a) \in S\times A}\lvert g(s,a)\rvert^p \nu(s,a)\right)^{1/p} \\
        &\leq \left(\sum_{(s,a) \in S\times A}\lvert g(s,a)\rvert^p C\mu(s,a)\right)^{1/p} \tag{\cref{asp:concentrability}} \\
        &= C^{1/p} \left(\sum_{(s,a) \in S\times A}\lvert g(s,a)\rvert^p \mu(s,a)\right)^{1/p} = C^{1/p} \lVert g \rVert_{p,\mu}\,.
    \end{align*} \qedhere
\end{proof}

\begin{lemma}\label{lem:contraction}
    Let $\mu,\nu$ be any distributions over $\mcS \times \mcA$ and assume that $\nu$ is admissible.
    Then, for any     
     $f,f' : S \times A \rightarrow [0,\infty)$ we have
    \begin{align*}
        \norm{\sqrt{f} - \sqrt{q^\star}}_{2,\nu} \leq \sqrt{C}\norm{\sqrt{f} - \sqrt{\mcT f'}}_{2,\mu} + \sqrt{\gamma}\norm{\sqrt{f'} - \sqrt{q^\star}}_{2,\nu P\times\pi_{f',q^\star}}\,.
    \end{align*}
    and $\norm{\sqrt{f} - \sqrt{q^\star}}_{2,\nu}\leq \frac{\sqrt{C}}{1-\sqrt{\gamma}} \norm{\sqrt{f} - \sqrt{q^\star}}_{2,\mu}$.
\end{lemma}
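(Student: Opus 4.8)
The plan is to prove the two inequalities in turn, obtaining the second by iterating the first. The single identity I would lean on throughout is the Bellman fixed point $q^\star = \mcT q^\star$, which lets me write $\sqrt{q^\star}=\sqrt{\mcT q^\star}$ and thereby bring the Hellinger contraction \cref{lem:contraction-fixed} into play. The only two one-step ingredients needed are that contraction and the change-of-measure \cref{lem:norm-concen}; both are already available.

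For the first inequality I would insert $\sqrt{\mcT f'}$ and split by the triangle inequality for the seminorm $\norm{\cdot}_{2,\nu}$:
\begin{align*}
\norm{\sqrt{f}-\sqrt{q^\star}}_{2,\nu} \le \norm{\sqrt{f}-\sqrt{\mcT f'}}_{2,\nu} + \norm{\sqrt{\mcT f'}-\sqrt{\mcT q^\star}}_{2,\nu}\,.
\end{align*}
To the first term I would apply \cref{lem:norm-concen} with $p=2$ (legitimate since $\nu$ is admissible), replacing $\nu$ by $\mu$ at the cost of $C^{1/2}=\sqrt{C}$. To the second term I would apply \cref{lem:contraction-fixed} with $g=q^\star$, producing the factor $\gamma^{1/2}$ and the pushed-forward measure $\nu P\times \pi_{f',q^\star}$, and using $\mcT q^\star=q^\star$ to turn $\sqrt{\mcT q^\star}$ back into $\sqrt{q^\star}$. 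Adding the two bounds gives the claim.

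For the second inequality I would specialize the first to $f'=f$, giving the self-referential bound
\begin{align*}
\norm{\sqrt{f}-\sqrt{q^\star}}_{2,\nu} \le \sqrt{C}\,\norm{\sqrt{f}-\sqrt{\mcT f}}_{2,\mu} + \sqrt{\gamma}\,\norm{\sqrt{f}-\sqrt{q^\star}}_{2,\nu P\times \pi_{f,q^\star}}\,.
\end{align*}
The key structural fact I would verify here is that admissibility is closed under $\nu \mapsto \nu P\times \pi_{f,q^\star}$: if $\nu$ is the step-$h$ state--action occupancy of some nonstationary policy, then $\nu P\times \pi_{f,q^\star}$ is the step-$(h+1)$ occupancy of the policy that follows that rule for $h$ steps and then plays $\pi_{f,q^\star}$, hence is again admissible (the same closure argument used in the proof of \cref{thm:main-appendix}). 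Setting $\nu_0=\nu$ and $\nu_{j+1}=\nu_j P\times\pi_{f,q^\star}$, every $\nu_j$ is admissible, so iterating the display $m$ times yields a partial geometric sum $\sqrt{C}\,\norm{\sqrt{f}-\sqrt{\mcT f}}_{2,\mu}\sum_{j=0}^{m-1}\gamma^{j/2}$ plus a remainder $\gamma^{m/2}\,\norm{\sqrt{f}-\sqrt{q^\star}}_{2,\nu_m}$. Since $f$ and $q^\star$ take values in a bounded range, $\norm{\sqrt{f}-\sqrt{q^\star}}_{2,\nu_m}$ is uniformly bounded, so the remainder vanishes as $m\to\infty$; summing the series with ratio $\sqrt{\gamma}<1$ produces the factor $1/(1-\sqrt{\gamma})$ and gives the form used at \cref{eqn:temp}, namely $\norm{\sqrt{f}-\sqrt{q^\star}}_{2,\nu}\le \tfrac{\sqrt{C}}{1-\sqrt{\gamma}}\,\norm{\sqrt{f}-\sqrt{\mcT f}}_{2,\mu}$. (If the right-hand side is instead read literally as $\norm{\sqrt{f}-\sqrt{q^\star}}_{2,\mu}$, the statement is weaker and follows in one line from \cref{lem:norm-concen} together with $1/(1-\sqrt{\gamma})\ge 1$.)

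The main obstacle I anticipate is the legitimacy of unrolling the recursion to infinity: one must confirm both the admissibility-closure property at every step and the uniform boundedness that kills the tail term, since otherwise the collapse of the geometric series is not justified. The per-step estimates are routine, so the real substance of the argument lies in this convergence bookkeeping.
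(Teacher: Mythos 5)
Your proof of the first inequality is the same as the paper's: insert $\sqrt{\mcT f'}$, use $q^\star=\mcT q^\star$, apply the triangle inequality, then \cref{lem:norm-concen} to the first term and \cref{lem:contraction-fixed} to the second. For the second inequality your route differs mechanically from the paper's: the paper takes $\nu_0=\argmax_{\nu}\norm{\sqrt{f}-\sqrt{q^\star}}_{2,\nu}$ over admissible $\nu$, applies the one-step bound once, uses that the pushed-forward measure is again admissible to absorb the $\sqrt{\gamma}$ term back into $\norm{\cdot}_{2,\nu_0}$, and rearranges; you instead unroll the recursion explicitly, sum the geometric series, and kill the tail using uniform boundedness (automatic here since $\mcS\times\mcA$ is finite, so $\sqrt{f}-\sqrt{q^\star}$ is bounded). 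Both hinge on the same admissibility-closure observation, but your iteration sidesteps the question of whether the paper's $\argmax$ over the (generally infinite) family of admissible distributions is actually attained, at the modest price of the tail-convergence bookkeeping you flag. Your parenthetical remark is also on target and worth keeping: the second display as literally stated, with $\norm{\sqrt{f}-\sqrt{q^\star}}_{2,\mu}$ on the right, already follows in one line from \cref{lem:norm-concen}, and what the iteration (and the paper's own intermediate display, which after setting $f'=f$ should read $\sqrt{C}\norm{\sqrt{f}-\sqrt{\mcT f}}_{2,\mu}$ rather than $\sqrt{C}\norm{\sqrt{f}-\sqrt{q^\star}}_{2,\mu}$) actually delivers is the version used in the main text at \cref{eqn:temp}, namely $\norm{\sqrt{f}-\sqrt{q^\star}}_{2,\nu}\le \frac{\sqrt{C}}{1-\sqrt{\gamma}}\norm{\sqrt{f}-\sqrt{\mcT f}}_{2,\mu}$. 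Your argument is correct under either reading.
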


\begin{proof} 
    We have
    \begin{align*}
         \norm{\sqrt{f} - \sqrt{q^\star}}_{2,\nu} &= \norm{\sqrt{f} - \sqrt{\mcT f'} + \sqrt{\mcT f'} -  \sqrt{\mcT q^\star}}_{2,\nu} \tag{$q^\star = \mcT q^\star$} \\
        &\leq \norm{\sqrt{f} - \sqrt{\mcT f'}}_{2,\nu} + \norm{\sqrt{\mcT f'} -  \sqrt{\mcT q^\star}}_{2,\nu} \tag{triangle inequality} \\
        &\leq \sqrt{C}\norm{\sqrt{f} - \sqrt{\mcT f'}}_{2,\mu} + \sqrt{\gamma}\norm{\sqrt{f'} - \sqrt{q^\star}}_{2,\nu P\times\pi_{f',q^\star}}\,
    \end{align*}
    where the last inequality uses \cref{lem:norm-concen,lem:contraction-fixed}. For the second term let $f'=f$ and $\nu_0 = \argmax_\nu \norm{\sqrt{f} - \sqrt{q^\star}}_{2,\nu}$, then
    \begin{align*}
        \norm{\sqrt{f} - \sqrt{q^\star}}_{2,\nu_0} &\leq \sqrt{C} \norm{\sqrt{f} - \sqrt{q^\star}}_{2,\mu} + \gamma^{1/2} \norm{\sqrt{f} - \sqrt{q^\star}}_{2,\nu_0 P\times \pi_{f,q^\star}} \\
        &\leq\sqrt{C} \norm{\sqrt{f} - \sqrt{q^\star}}_{2,\mu} + \gamma^{1/2} \norm{\sqrt{f} - \sqrt{q^\star}}_{2,\nu_0 }\,.
    \end{align*}
    Therefore, $\norm{\sqrt{f} - \sqrt{q^\star}}_{2,\nu} \leq \norm{\sqrt{f} - \sqrt{q^\star}}_{2,\nu_0} \leq \frac{\sqrt{C}}{1-\sqrt{\gamma}} \norm{\sqrt{f} - \sqrt{q^\star}}_{2,\mu}$\,.
     \qedhere
\end{proof}

\begin{lemma}[Error propagation]\label{cor:ep}
Fix $k\ge 1$ and let $f_0,f_1,\dots,f_k: \mcS \times \mcA \to [0,\infty)$ be arbitrary functions such that $f_0$ takes values in $[0,1]$, 
$\nu,\mu$ distributions over $\mcS \times \mcA$ and assume that $\nu$ is an admissible distribution.
Then,
\begin{align*}
\norm{\sqrt{f_k} - \sqrt{q^\star}}_{2,\nu} \le
\gamma^{\frac{k}{2}}+
\frac{2\sqrt{C}}{1-\gamma} \max_{1\le \tau\le k} \norm{ \sqrt{f_\tau}-\sqrt{\mcT f_{\tau-1}}}_{2,\mu}\,.
\end{align*}
\end{lemma}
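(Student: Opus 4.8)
The plan is to unroll the one-step bound of \cref{lem:contraction} exactly $k$ times, peeling off one iterate $f_\tau$ at each stage, and then to estimate the resulting geometric sum. The decisive preliminary step, which I would establish first, is that admissibility is preserved under the measure update $\nu \mapsto \nu P \times \pi'$ for any stationary Markov policy $\pi'$. Indeed, if $\nu$ is admissible then by \cref{defn:admissible-distributions} there is a timestep $h$ and a nonstationary policy $\pi$ with $\nu(s,a) = \mbP_{\pi,\eta_1}(S_h = s, A_h = a)$; the nonstationary policy that follows $\pi$ for the first $h$ steps and switches to $\pi'$ at step $h+1$ then realizes $\nu P \times \pi'$ as its step-$(h+1)$ state-action distribution, so $\nu P \times \pi'$ is admissible. (This is the splicing argument referred to in the proof of \cref{thm:main-appendix}.) This is exactly what lets me invoke \cref{lem:contraction}, whose hypotheses demand an admissible evaluation measure, at every stage of the recursion.

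Next I would set $\mu_0 = \nu$ and recursively define $\mu_{j+1} = \mu_j P \times \pi_{f_{k-j-1},\,q^\star}$, each admissible by the claim above together with an induction on $j$. Applying \cref{lem:contraction} with its $f,f'$ taken to be $f_{k-j},f_{k-j-1}$, its admissible distribution taken to be $\mu_j$, and its exploratory distribution taken to be $\mu$, yields for every $0 \le j \le k-1$
\begin{align*}
\norm{\sqrt{f_{k-j}} - \sqrt{q^\star}}_{2,\mu_j} \le \sqrt{C}\,\norm{\sqrt{f_{k-j}} - \sqrt{\mcT f_{k-j-1}}}_{2,\mu} + \sqrt{\gamma}\,\norm{\sqrt{f_{k-j-1}} - \sqrt{q^\star}}_{2,\mu_{j+1}}\,.
\end{align*}
Iterating from $j=0$ to $j=k-1$ telescopes, and reindexing via $\tau = k-j$ gives
\begin{align*}
\norm{\sqrt{f_k} - \sqrt{q^\star}}_{2,\nu} \le \sqrt{C}\sum_{\tau=1}^{k} \gamma^{(k-\tau)/2}\,\norm{\sqrt{f_\tau} - \sqrt{\mcT f_{\tau-1}}}_{2,\mu} + \gamma^{k/2}\,\norm{\sqrt{f_0} - \sqrt{q^\star}}_{2,\mu_k}\,.
\end{align*}

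To finish, I would bound the two pieces separately. For the sum, pulling out the maximum over $\tau$ and summing the geometric series gives $\sum_{\tau=1}^{k}\gamma^{(k-\tau)/2} \le \sum_{j\ge 0}\gamma^{j/2} = 1/(1-\sqrt{\gamma}) \le 2/(1-\gamma)$, producing the claimed factor $\tfrac{2\sqrt{C}}{1-\gamma}$. For the terminal term, the cost normalization \cref{eq:costnorm} forces $q^\star$ to take values in $[0,1]$ (it is an expectation of a total discounted cost lying in $[0,1]$), and $f_0$ does so by hypothesis, so $(\sqrt{f_0} - \sqrt{q^\star})^2 \le 1$ pointwise and hence $\norm{\sqrt{f_0} - \sqrt{q^\star}}_{2,\mu_k} \le 1$; this is the sole place the assumption $f_0 \in [0,1]$ is used, which is why $f_1,\dots,f_k$ need only be nonnegative. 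Adding the two bounds gives the statement.

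The main obstacle I anticipate is precisely the admissibility-preservation claim: each application of \cref{lem:contraction} migrates the evaluation measure from $\mu_j$ to $\mu_{j+1} = \mu_j P \times \pi_{f_{k-j-1},q^\star}$, and the lemma cannot be reapplied unless every such measure is admissible. Pinning this down requires the explicit policy-splicing construction and an induction on $j$; once it is in hand, the remainder is routine telescoping and a geometric-series estimate.
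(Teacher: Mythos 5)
Your proposal is correct and follows essentially the same route as the paper's proof: the same recursive application of \cref{lem:contraction} with the measures updated by $\nu \mapsto \nu P \times \pi_{f_\cdot,q^\star}$ (yours are just indexed in the reverse order, $\mu_j = \nu_{k-j}$), the same policy-splicing argument for preserving admissibility, the same bound of $1$ on the terminal term via $f_0, q^\star \in [0,1]$, and the same geometric-series estimate $1/(1-\sqrt{\gamma}) \le 2/(1-\gamma)$. No gaps.
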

\begin{proof}
Define $(\nu_{i})_{0\le i \le k }$ via
$\nu_{k} =\nu$ and for $0\le i<k$, let $\nu_i = (\nu_{i+1} P) \times \pi_{f_i,q^\star}$.
Note that by assumption, $\nu_k$ is admissible. It then follows that $\nu_i$ for $0\le i<k$ is also admissible.
Indeed, if for some $0\le i<k$, $\pi=(\pi_0,\pi_1,\dots)$ is the nonstationary policy that realizes $\nu_{i+1}$ in step $s\ge 0$, $\pi' = (\pi_0,\pi_1,\dots,\pi_s,\pi_{f_i,q^\star}, \pi_{s+1},\dots)$ is a policy that realizes
$\nu_i$ in step $s+1$.

Hence,
    \begin{align*}
        \norm{\sqrt{f_k} - \sqrt{q^\star}}_{2,\nu}
        &= \norm{\sqrt{f_k} - \sqrt{q^\star}}_{2,\nu_k}\tag{definition of $\nu_k$}\\
        &\leq \sqrt{C} \, \norm{\sqrt{f_k} - \sqrt{\mcT f_{k-1}}}_{2,\mu} + \sqrt{\gamma} \, \norm{\sqrt{f_{k-1}} - \sqrt{q^\star}}_{2,\nu_{k-1}}\,,
        \tag{\cref{lem:contraction}}
    \end{align*}
    where the second inequality uses \cref{lem:contraction}
    while setting 
    		$f,f',\nu,\mu$ 
				to 
    		$f_k,f_{k-1},\nu_k$ and $\mu$ (the data generating distribution), 
			respectively,
   	and noting that, by definition, 
	$\nu P\times\pi_{f',q^\star}$ 
	of the \namecref{lem:contraction} is 
    $(\nu_k P) \times \pi_{f_{k-1},q^\star}=\nu_{k-1}$, and that, by assumption, $\nu_k=\nu$ is admissible.

    Now, we recurse on the second term of the above display 
    using \cref{lem:contraction}:
    \begin{align*}
        \sqrt{\gamma} \, \norm{\sqrt{f_{k-1}} - \sqrt{q^\star}}_{2,\nu_{k-1}}
        \leq \sqrt{\gamma\, C}\norm{\sqrt{f_{k-1}} - \sqrt{\mcT f_{k-2}}}_{2,\mu} + \gamma  \norm{\sqrt{f_{k-2}} - \sqrt{q^\star}}_{2,\nu_{k-2}}
    \end{align*}
    where the inequality uses \cref{lem:contraction}
    while setting $f,f',\nu,\mu$ to $f_{k-1},f_{k-2},\nu_{k-1}$ and $\mu$ (the data generating distribution), respectively,
   	and noting that, by definition, $\nu P\times\pi_{f',q^\star}$ of the \namecref{lem:contraction} is 
    $(\nu_{k-1} P) \times \pi_{f_{k-2},q^\star}=\nu_{k-2}$, and that, as argued before, $\nu_{k-1}$ is admissible.
    
    Continuing this way, and then plugging in back to the first display of the proof, we get
    \begin{align*}
        \norm{\sqrt{f_k} - \sqrt{q^\star}}_{2,\nu} 
        &\leq \sqrt{C}\, \sum_{j=1}^k \gamma^{\frac{{k-j}}{2}}\, \norm{\sqrt{f_{j}} - \sqrt{\mcT f_{j-1}}}_{2,\mu} + \gamma^{\frac{k}{2}} \norm{\sqrt{f_0}-\sqrt{q^\star}}_{2,\nu_{0}} \\
        &\leq \sqrt{C}\,\underbrace{ \sum_{j=1}^k \gamma^{\frac{{k-j}}{2}}\, \norm{\sqrt{f_{j}} - \sqrt{\mcT f_{j-1}}}_{2,\mu}}_{S:=} + \gamma^{\frac{k}{2}}\,,
    \end{align*}
    where the second inequality holds because
    by assumption, $f_0$ takes values in $[0,1]$ and so does $q^\star$. Hence,
    $(\sqrt{f_0}-\sqrt{q^\star})^2\le 1$ and thus $\norm{\sqrt{f_0}-\sqrt{q^\star}}_{2,\nu_{0}} \le 1$. 
    
    Now, we bound $S$ defined above: 
    \begin{align*}
        S
        &\leq \max_{\tau \in [1,\dots,k]} \norm{\sqrt{f_{\tau}} - \sqrt{\mcT f_{\tau-1}}}_{2,\mu} \sum_{j=1}^k \gamma^{\frac{k-j}{2}} \\
        &\leq \frac{2}{1-\gamma} \, \max_{\tau \in [1,\dots,k]} \norm{\sqrt{f_{\tau}} - \sqrt{\mcT f_{\tau-1}}}_{2,\mu}\,. \tag{$1/(1-\sqrt{\gamma}) \leq 2/(1-\gamma)$}\\
	\end{align*}
	Chaining the inequalities finishes the proof.
\end{proof}
\begin{remark}
Note that in the proof of the last result it was essential that in the definition of admissibility we allow nonstationary policies.
\end{remark}

With this we are ready to prove \cref{lem:contract-concentration}:
\begin{proof}[Proof of \cref{lem:contract-concentration}]
For the proof let $f_t$ denote the action-value function computed by FQI in step $t=0,1,\dots,k$.
Recall that by construction $f_0\in \mcF$ and that by assumption all functions in $\mcF$ take values in $[0,1]$.
We have
    \begin{align*}
     \norm{ \Delta_{f_k} }_{2,\nu}&=
        \norm{\frac{f_k - q^\star}{\sqrt{f_k + q^\star}}}_{2,\nu} \tag{definition of $\Delta_f$}\\
        &\leq \sqrt{2}\norm{\sqrt{f_k} - \sqrt{q^\star}}_{2,\nu} \tag{first part of \cref{cor:hellinger-triangular-scalar}}\\
        &\leq 
        \sqrt{2}\gamma^{\frac{k}{2}}+
		\frac{2 \sqrt{2}\sqrt{C}}{1-\gamma} \max_{1\le \tau\le k} \norm{ \sqrt{f_\tau}-\sqrt{\mcT f_{\tau-1}}}_{2,\mu}	\tag{\cref{cor:ep}, $0\le f_0\le 1$}\\
		& \le \sqrt{2}\gamma^{\frac{k}{2}}+ \frac{4}{1-\gamma} \, \max_{\tau \in [1,\dots,k]}  
        \norm{ \hell^2( f_{\tau} \Mid \mcT f_{\tau-1}) }_{1,\mu}^{1/2}\,.
		\tag{second part of \cref{cor:hellinger-triangular-scalar}}
	\end{align*}
where in the 
second inequality we used that by assumption $\nu$ is admissible.

For $g: \mcS \times \mcA \to [0,1]$, 
let $\hat{f}_g$ be the function learned by regressing on $g$ via log-loss, i.e., 
     \[
        \hat{f}_g = \argmin_{f\in\mcF} 
        \sum_{i=1}^n \elllog\left(f(S_i,A_i);\, C_i+\gamma g^\wedge(S_i')\right)\,.
     \]
     Note that $f_{\tau} = \hat{f}_{f_{\tau-1}}$. Hence, 
     \begin{align*}
     \norm{\hell^2( f_{\tau} \Mid \mcT f_{\tau-1}) }_{1,\mu}
	     = \norm{\hell^2( \hat{f}_{f_{\tau-1}} \Mid \mcT f_{\tau-1})}_{1,\mu}
	     \le \max_{g\in \mcF}
	     \norm{
	      \hell^2( \hat{f}_{g} \Mid \mcT g)}_{1,\mu} \tag{because $f_{\tau-1}\in \mcF$}
     \end{align*}
     Since this applies for any $1 \le \tau \le k$, all that remains is to bound the right-hand side of the last display.
     We will use \cref{thm:concentration} for this purpose. This result can be applied because, on the one hand, 
     by \cref{asp:data}, $\EE[ C_i + \gamma g^\wedge(S_i')| S_i,A_i]= \mcT g\, (S_i,A_i)$,
     and by  \cref{asp:completeness}, $\mcT g\in \mcF$ whenever $g\in \mcF$
     and because, again, by \cref{asp:data},
     $(S_i,A_i,C_i,S_{i+1}')$ 
     are independent, identically distributed random variables for $i=1,\dots,n$.
     \todoc[inline]{A  contentious issue is the assumption that $\mcF$ is finite and yet $\mcT \mcF \subset \mcF$.
     Will this \emph{ever} be satisfied? Take the finite case for example. Already in this case, this would require
     that $\mcT^i 0\in \mcF$ for $i=0,1,\dots$ assuming that $0\in \mcF$.. So this will need to be worked on.}
     Thus,   
     \cref{thm:concentration} together with a union bound and recalling that the distribution of $(S_i,A_i)$ is $\mu$ gives 
     that for any $0<\delta<1$,
     \begin{align*}
     \max_{g\in \mcF} \norm{ \hell^2(\hat{f}_g,\mcT g) }_{1,\mu} \le 
     \frac{2\log(|\Fcal|^2 / \delta)}{n}\,.
     \end{align*}
     Putting things together, we get that for any fixed $0<\delta<1$, with probability $1-\delta$,
    \begin{equation*}
         \norm{\Delta_{f_k}}_{2,\nu} \leq \sqrt{\frac{32 C\log\left( \lvert\mcF\rvert^2/\delta\right)}{(1-\gamma)^2 n}} +\sqrt{2} \gamma^{\frac{k}{2}}\,. 	\qedhere
    \end{equation*} 
\end{proof}
\section{Experimental details}\label{sec:appendix-experiment}
In this section, we provide additional experimental details and results. 

In our Atari  experiments, we use the hyperparameters reported in \citet{agarwal2020optimistic}, and we do not tune the hyperparameter for our proposed method. 
The original dataset contains $5$ runs of DQN replay data. Each run of DQN replay data contains $50$ datasets, and each dataset contains $1$ million transitions. 
Due to the memory constraint, we can not load the entire data.
As a result, for each training epoch, we select $5$ datasets randomly, subsample a total of $500$k transitions from the $5$ selected datasets, and perform $100$k updates using the $500$k transitions. 

Figure \ref {fig:atari_all} show the result with clipped losses and unclipped losses. Log-loss consistently outperforms the DQN variants other than C51. 

\begin{figure*}[tbh]
    \centering
    \includegraphics[width=0.48\linewidth]{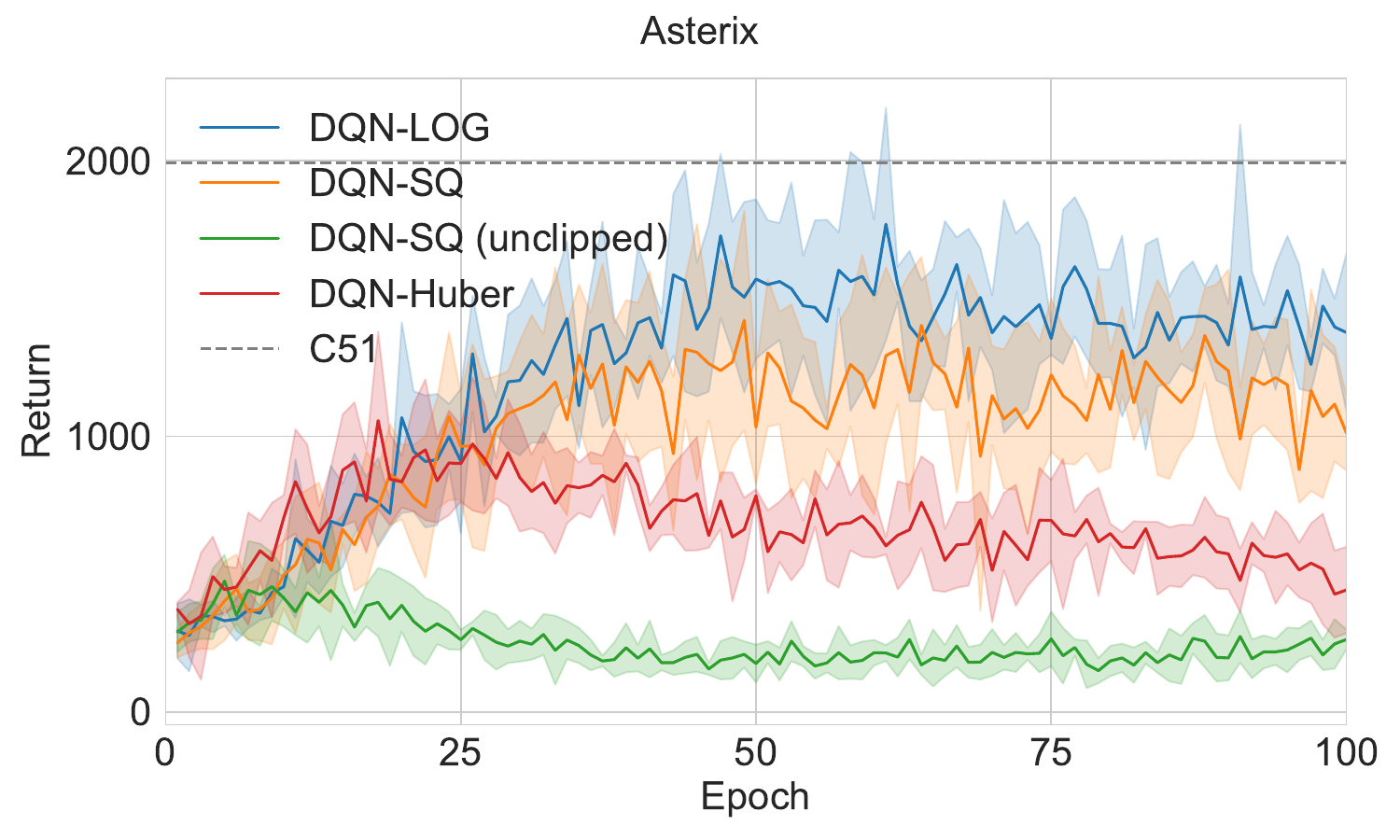}
    \includegraphics[width=0.48\linewidth]{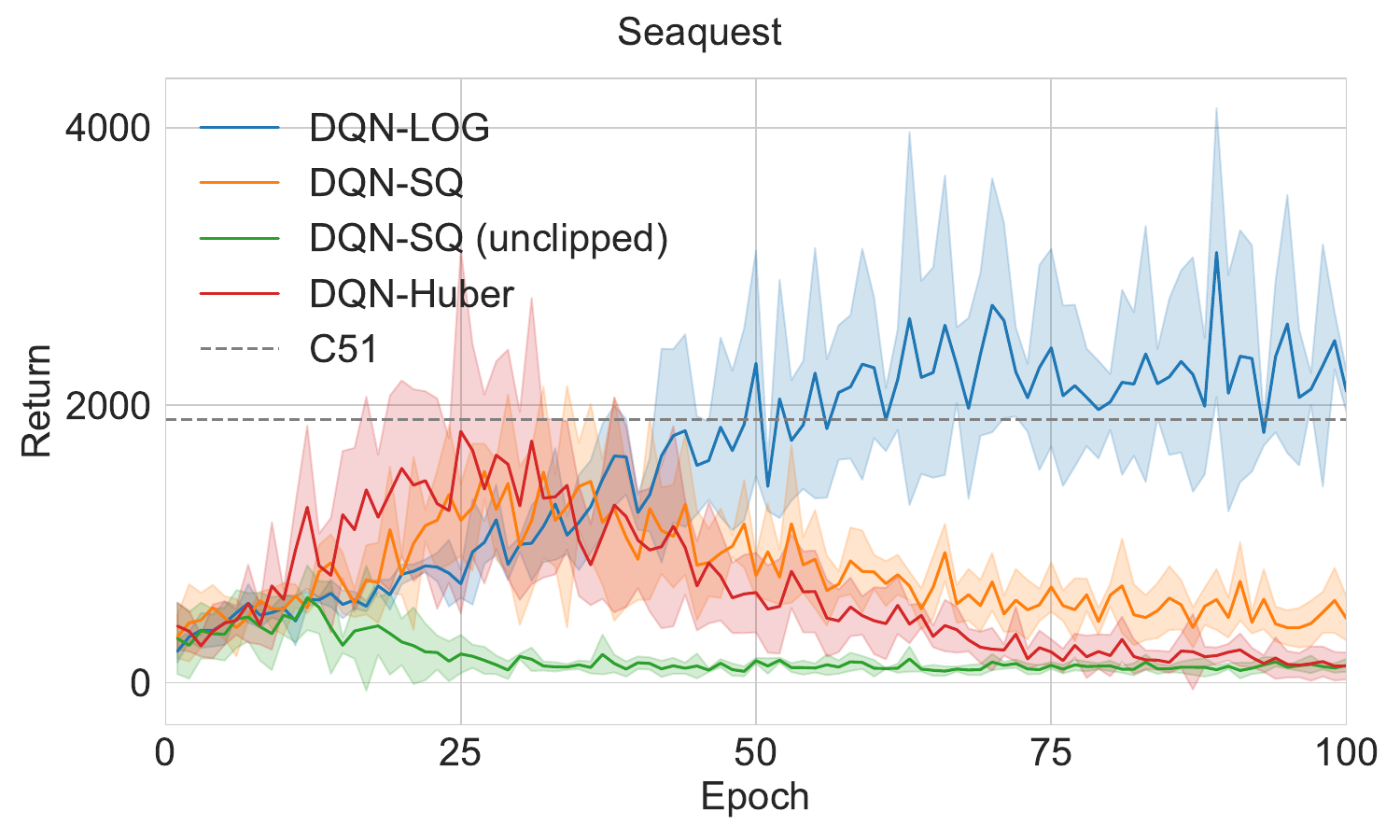}
    \caption{Learning curves on Asterix and Seaquest. The result are averaged over 5 datasets with one standard error. One epoch contains 100k updates.}
    \label{fig:atari_all}
\end{figure*}

\end{document}